\documentclass{article}
\usepackage{iclr2027_conference,times}
\usepackage{amssymb,amscd,amsthm, verbatim,amsmath,color,fancyhdr, mathrsfs}
\PassOptionsToPackage{numbers}{natbib}
\usepackage{graphicx}
\usepackage{subcaption}
\usepackage{turnstile}
\usepackage{enumitem}
\usepackage{wrapfig}
\usepackage{empheq}
\usepackage[colorlinks=true, linkcolor=blue, citecolor=blue, urlcolor=blue, pagebackref]{hyperref}

\allowdisplaybreaks
\usepackage{natbib}
\usepackage{booktabs}
\usepackage{multirow}
\usepackage{pifont}
\usepackage{bbm}
\usepackage[table]{xcolor}
\usepackage{tikz}
\usetikzlibrary{arrows.meta}
\newcommand{\cmark}{\ding{51}}%
\newcommand{\xmark}{\ding{55}}%
\newtheorem{thm}{Theorem}
\newtheorem{prop}{Proposition}
\newtheorem{lem}{Lemma}

\newtheorem{defn}{Definition}

\newcommand{\singleton}[1]{\{#1\}}
\newtheorem{appthm}{Theorem}
\newtheorem{appprop}{Proposition}
\newtheorem{applem}{Lemma}

\usepackage{amsmath,amsfonts,bm}

\def\1{\bm{1}}

\DeclareMathAlphabet{\mathsfit}{\encodingdefault}{\sfdefault}{m}{sl}
\SetMathAlphabet{\mathsfit}{bold}{\encodingdefault}{\sfdefault}{bx}{n}

\DeclareMathOperator*{\argmin}{arg\,min}

\usepackage{placeins}
\usepackage{url}
\title{Rethinking Representation Geometry in Supervised Contrastive Learning under Long-Tailed Data}
\author{
 Joonhyeok Jeong \qquad
 Jy-yong Sohn \qquad
 Kibok Lee \\
 Department of Statistics and Data Science, Yonsei University \\
 \texttt{\{alexjeong10, jysohn1108, kibok\}@yonsei.ac.kr}
}
\iclrfinalcopy
\begin{document}
\maketitle
\begin{abstract}
Supervised contrastive learning and its balanced variants have demonstrated their effectiveness for representation learning under long-tailed distributions. However, although prior analyses suggest that their optimal representations may exhibit a regular simplex geometry, 
whether such a geometry is indeed exhibited
remains insufficiently understood. 
In this work, we present a theoretical framework for these objectives that characterizes whether a regular simplex is formed at the optimum. By applying this to representative objectives, we show that these objectives fail to form a regular simplex at the optimum because they remain dependent on the number of samples per class. Motivated by this analysis, we introduce structural balancing that can be applied to contrastive objectives, and propose a fully balanced contrastive objective that provably forms a regular simplex geometry at its optimum. Both theoretical and empirical results demonstrate that the proposed method yields a more balanced representation space and improves performance on long-tailed recognition and transfer learning tasks.
\end{abstract}

\section{Introduction}
Deep learning has achieved remarkable success over the past decade, yet much of this progress relies on highly curated, balanced datasets \citep{krizhevsky2009learning,deng2009imagenet}, while real-world data often exhibit long-tailed distributions, in which a small number of head classes are represented by abundant samples while a large number of tail classes contain only a few samples \citep{van2018inaturalist}.
Models trained under such long-tailed distributions tend to learn skewed decision boundaries, which in turn degrades generalization performance on tail classes \citep{cao2019learning, ren2020balanced}.

Supervised contrastive objectives have shown promising results in learning representations from long-tailed distributions \citep{kang2020exploring, wang2021contrastive, cui2021parametric, li2022targeted, zhu2022balanced, cui2023generalized, hou2023subclass, du2024probabilistic, de2025long, yi2025geometry}. As in balanced distributions \citep{graf2021dissecting, yiconvergent}, characterizing an optimum of a loss is important for understanding the representation geometry the loss favors. However, the representation spaces induced at the minimum of these objectives remain insufficiently understood under long-tailed distributions. Among existing methods, balanced contrastive learning (BCL) \citep{zhu2022balanced} showed that the loss is minimized if and only if the class centers form a regular simplex, suggesting that the ideal geometry remains optimal even under long-tailed distributions.

In this paper, we establish a theoretical framework for determining whether a regular simplex is a global minimizer of a supervised contrastive objective under long-tailed distributions. We first reduce the analysis from the feature space to the class-wise center space by deriving a lower bound of the loss, whose equality condition corresponds to class collapse. We then show that a center configuration that does not minimize the lower bound cannot minimize the original loss. Finally, we derive a condition that the lower bound must satisfy 
for a regular simplex to be its minimizer. We apply this framework to two widely used objectives for long-tailed recognition, SCL \citep{khosla2020supervised} and BCL, and prove that the regular simplex is not a global minimizer of either loss. Fig.~\ref{fig:regular-simplex} shows this behavior with ImageNet samples.
\begin{figure}[t]
    \centering
    \begin{subfigure}{0.48\textwidth}
        \centering
        \begin{minipage}{0.32\linewidth}
            \centering
            \includegraphics[width=\linewidth]{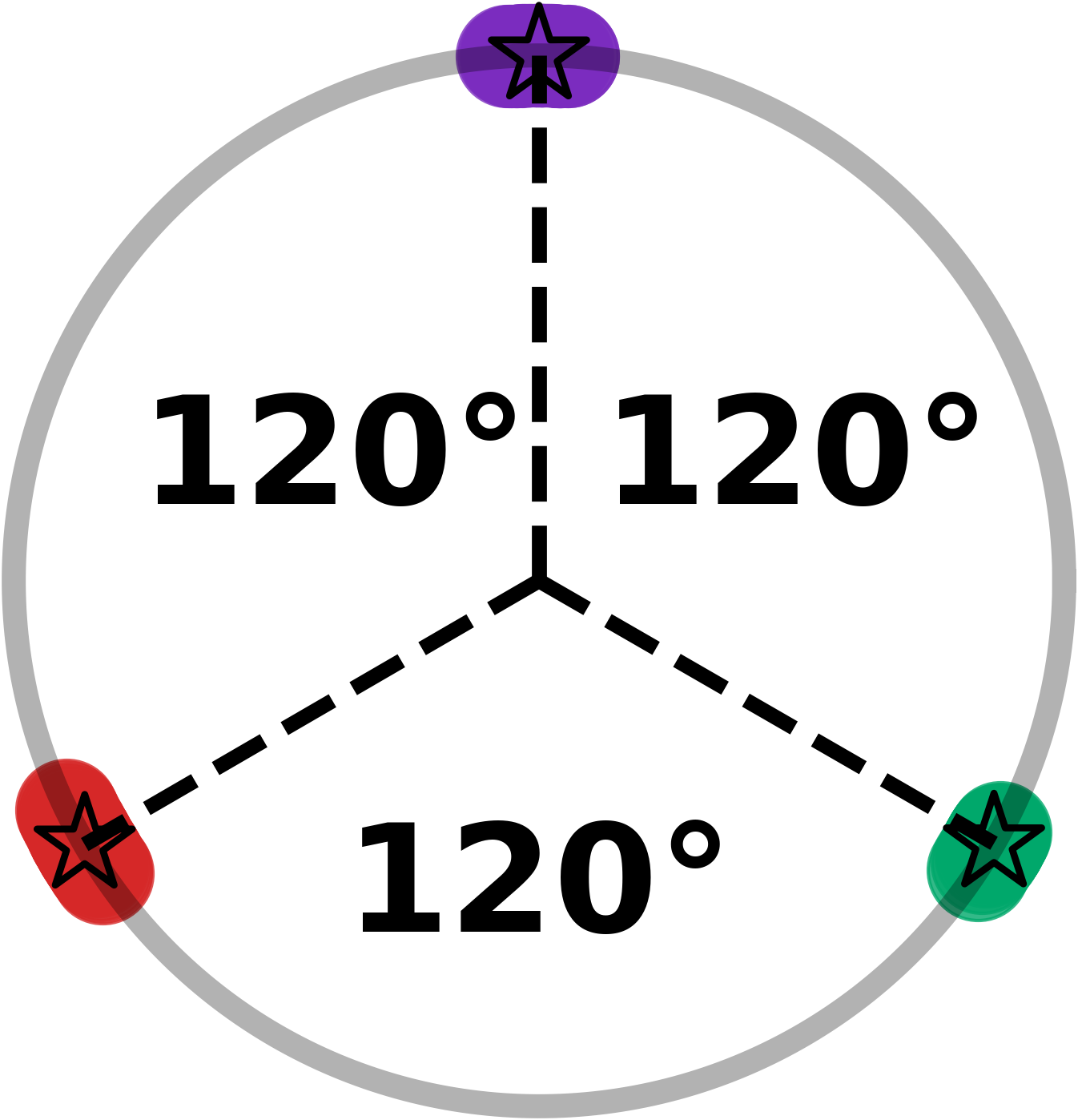}\\[0.5ex]
            \footnotesize SCL
        \end{minipage}\hfill
        \begin{minipage}{0.32\linewidth}
            \centering
            \includegraphics[width=\linewidth]{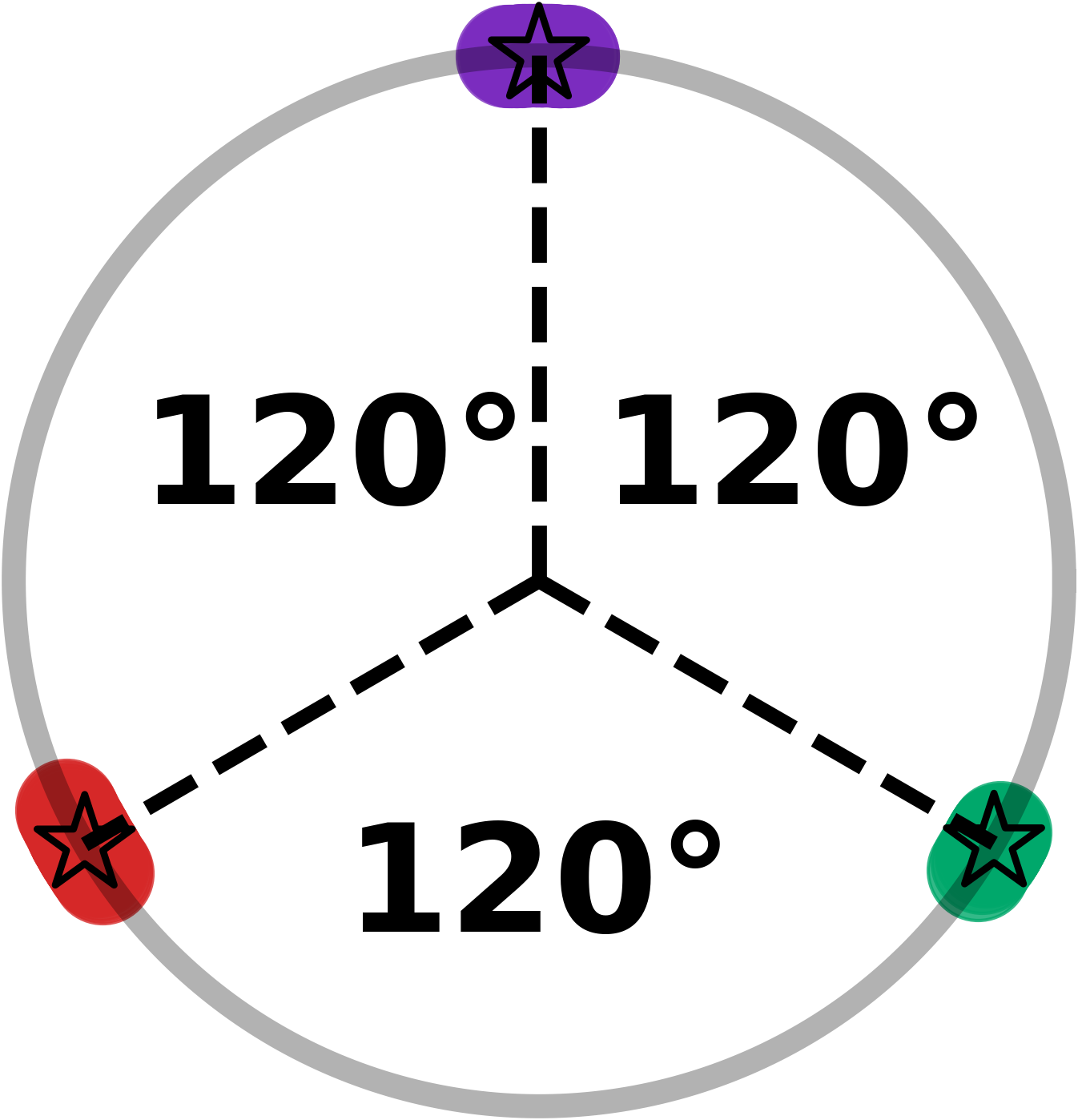}\\[0.5ex]
            \footnotesize BCL
        \end{minipage}\hfill
        \begin{minipage}{0.32\linewidth}
            \centering
            \includegraphics[width=\linewidth]{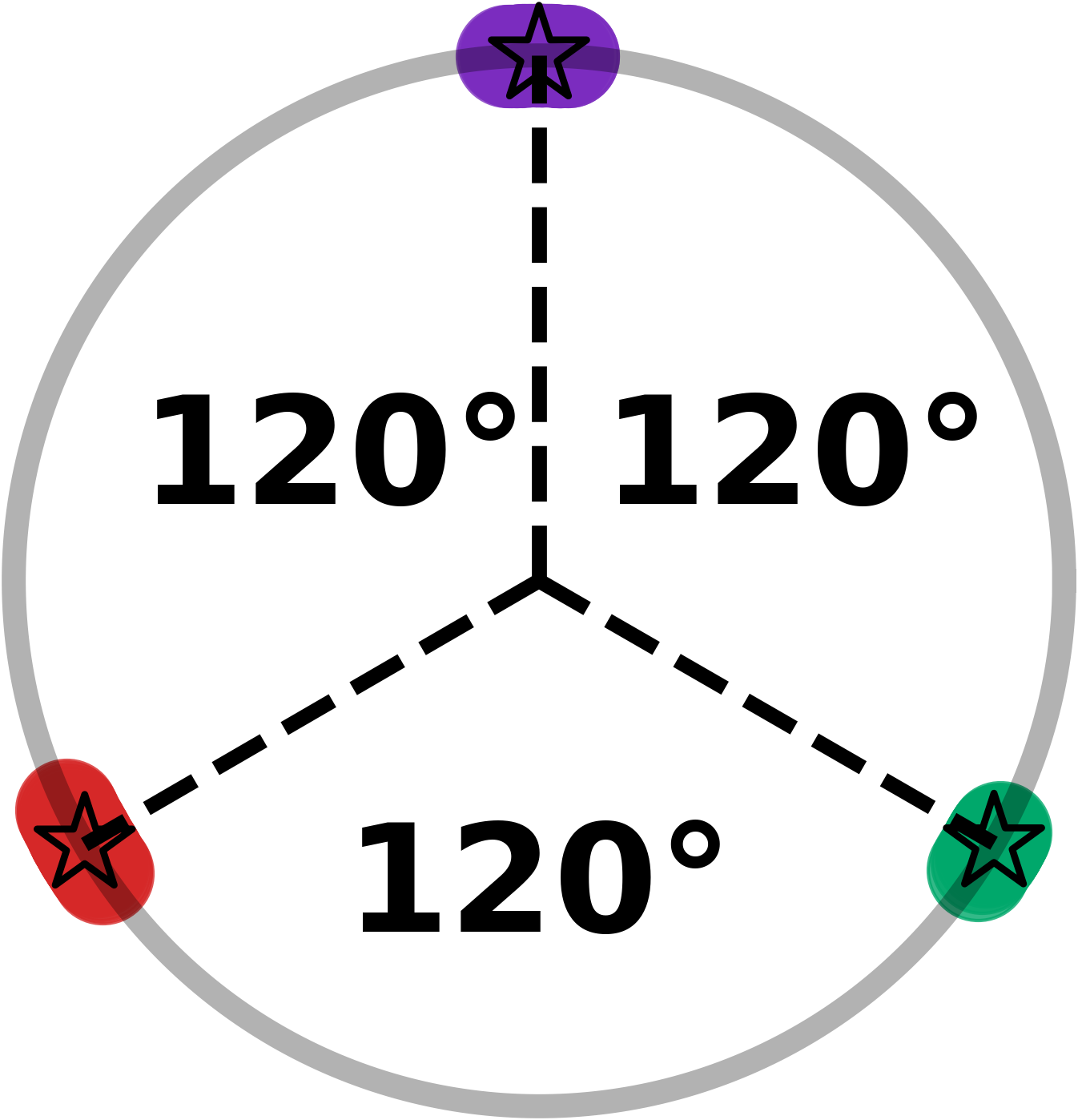}\\[0.5ex]
            \footnotesize FBCL (Ours)
        \end{minipage}
        \hfill\vrule \hfill
        \caption{\textcolor[HTML]{7530AC}{1}:\textcolor[HTML]{B3352F}{1}:\textcolor[HTML]{4BA570}{1}}
    \end{subfigure}
    \begin{subfigure}{0.48\textwidth}
        \centering
        \begin{minipage}{0.32\linewidth}
            \centering
            \includegraphics[width=\linewidth]{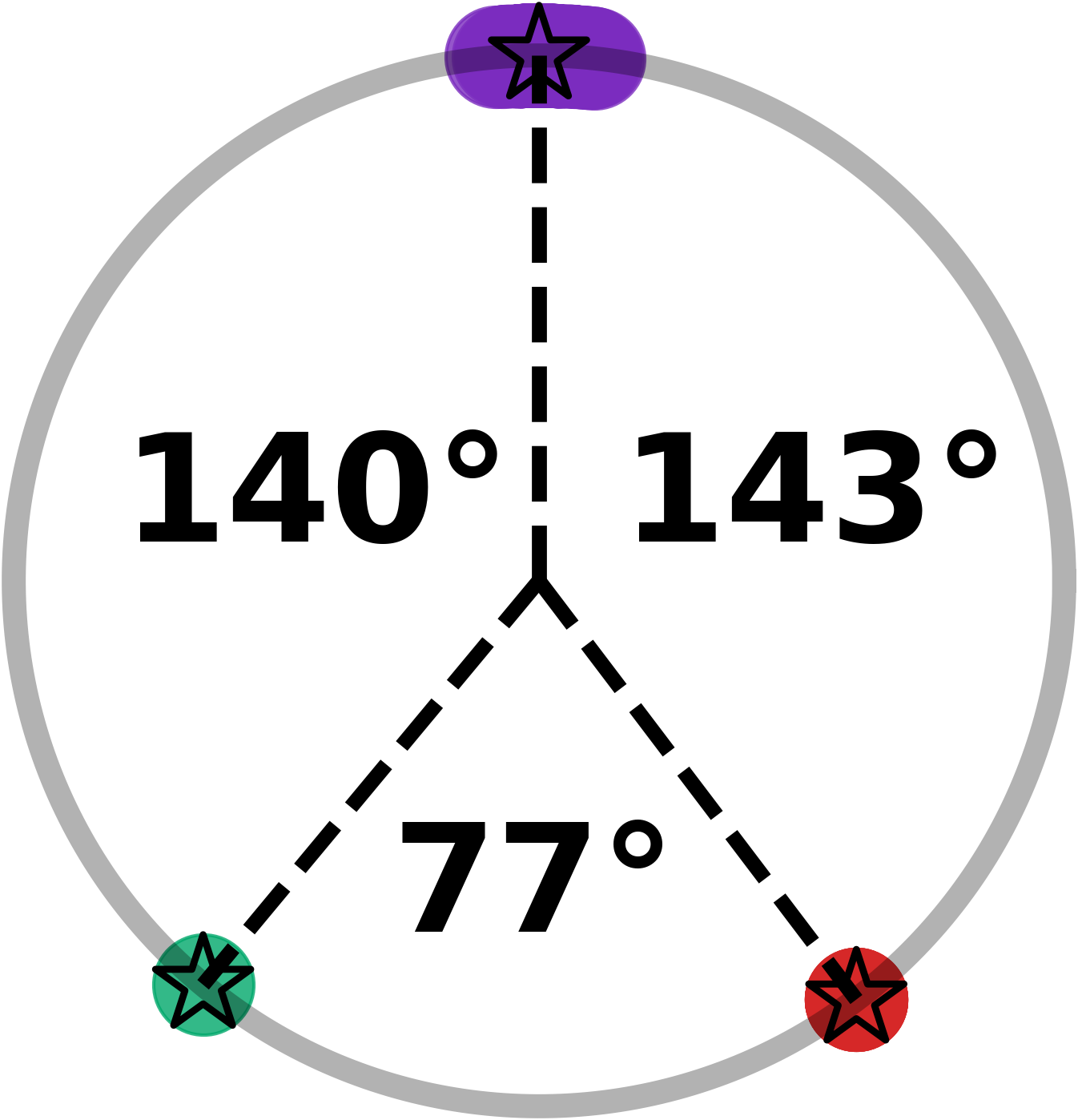}\\[0.5ex]
            \footnotesize SCL
        \end{minipage}\hfill
        \begin{minipage}{0.32\linewidth}
            \centering
            \includegraphics[width=\linewidth]{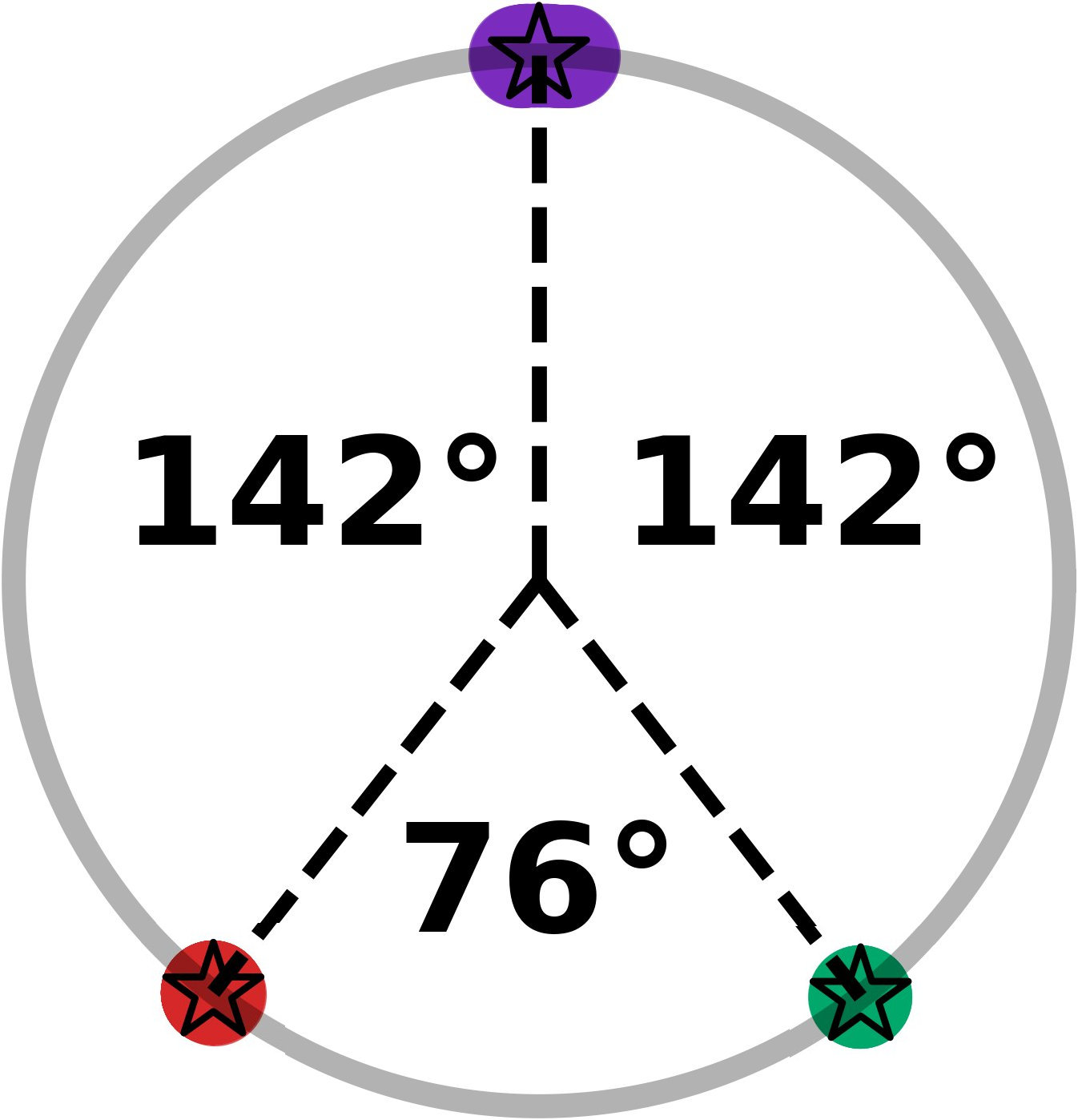}\\[0.5ex]
            \footnotesize BCL
        \end{minipage}\hfill
        \begin{minipage}{0.32\linewidth}
            \centering
            \includegraphics[width=\linewidth]{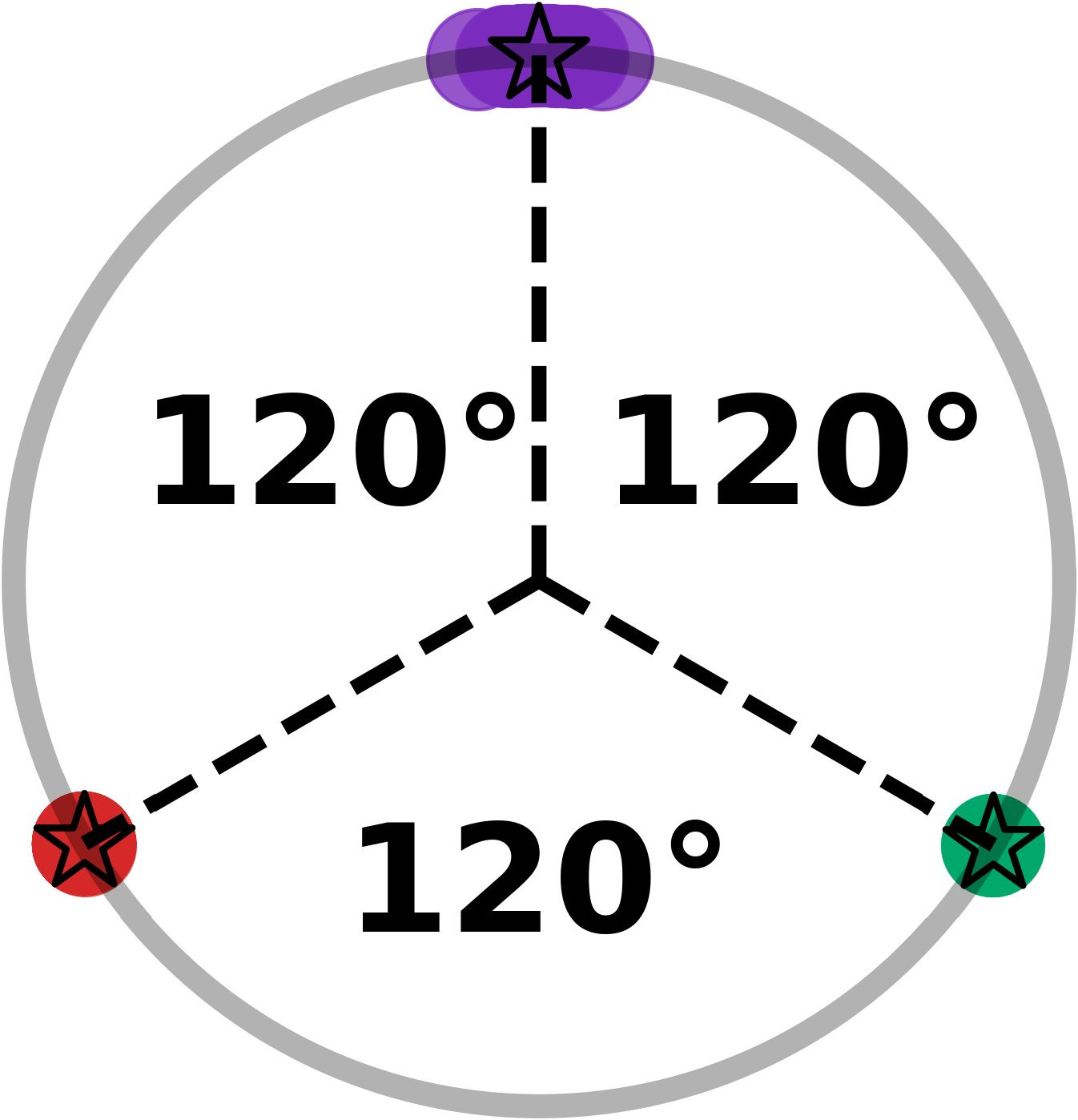}\\[0.5ex]
            \footnotesize FBCL (Ours)
        \end{minipage}
        \caption{\textcolor[HTML]{7530AC}{1000}:\textcolor[HTML]{B3352F}{10}:\textcolor[HTML]{4BA570}{1}}
    \end{subfigure}
    \caption{Visualizations of embedding spaces learned by three contrastive objectives (SCL, BCL, and FBCL) on a subset of ImageNet: three random classes sampled under (a) a balanced setting and (b) an imbalanced setting with class cardinality ratios of 1:1:1 and 1000:10:1, respectively. 
    Angles are measured between class centers. Under (a), all three methods form a regular simplex; under (b), only FBCL forms it, whereas SCL and BCL fail.}
    \label{fig:regular-simplex}
\end{figure}

We further investigate why this result holds by examining how class imbalance affects the loss. From a balancing perspective, SCL and BCL adjust the loss only within the contrastive loss defined for each anchor, the sample for which the contrastive loss is computed. Consequently, as the losses for individual anchors are aggregated into the overall loss, the absence of balancing this aggregation leaves the loss dependent on the number of samples per class. This dependence is inherited by the lower bound, preventing the regular simplex from being a minimizer of the objective. Motivated by this observation, we present structural balancing, a method applicable to supervised contrastive objectives that balances the aggregation across anchors and removes the remaining dependence on the number of samples per class. Based on structural balancing, we propose Fully Balanced Contrastive Learning (FBCL), whose global minimizer is a regular simplex.

We also analyze the effect of structural balancing on the representation space by examining how the loss penalizes the intra-class variance of each class as the loss is minimized. Under SCL and BCL, the coefficient of the intra-class variance penalty is approximately proportional to class frequency, whereas under FBCL it is approximately independent of it. This suggests more balanced contraction across classes, which is further supported by our empirical analysis of intra-class variance. We further evaluate FBCL with other contrastive objectives and apply structural balancing to other long-tailed recognition methods. Results show that FBCL and structural balancing can be effectively combined with other methods and improve their performance.

Our main contributions are summarized as follows:
\begin{itemize}[leftmargin=*, itemsep=0pt, topsep=0pt, partopsep=0pt]
    \item We establish a framework for determining whether a regular simplex is a global minimizer of a supervised contrastive objective under long-tailed distributions. Applying this framework to SCL and BCL, we prove that the regular simplex is a global minimizer of neither loss.
    \item We observe that this failure occurs as a result of the remaining dependence on the number of samples per class in the loss. Based on this observation, we present structural balancing and propose Fully Balanced Contrastive Learning (FBCL), whose loss removes this dependence and admits the regular simplex as its global minimizer.
    \item We empirically evaluate FBCL on long-tailed recognition and transfer learning, and further examine structural balancing in combination with other methods. The results show consistent improvements in classification accuracy and transferability across these settings.
\end{itemize}
\section{Related Work}
\label{gen_inst}
\paragraph{Long-Tailed Recognition.}
Numerous approaches have been proposed to address long-tailed recognition. Re-sampling \citep{he2009learning,buda2018systematic,shi2023re} and re-weighting classification losses \citep{huang2016learning, ren2018learning,cui2019class} were earlier methods in which the former balances the number of training samples and the latter assigns different weights to each training sample.
\citet{zhou2020bbn} and \citet{kang2019decoupling} propose a two-branch network and a two-stage learning framework, respectively, and show that such designs often yield better representations. 
Several works further demonstrate the effectiveness of these frameworks \citep{wang2021contrastive, zhong2021improving, li2022long, li2024feature}. Logit adjustment methods \citep{cao2019learning,menon2021longtail,kini2021label, li2022long,wang2023unified} enlarge the margins for tail classes. Transfer learning methods \citep{yin2019feature, li2024feature} enrich tail class features by leveraging abundant information from head classes. Recently, fine-tuning foundation models \citep{tian2022vl, DBLP:conf/iclr/DongZYZ23, DBLP:conf/icml/Shi00SH024} has proven effective with efficient training schemes, yet the impact of imbalanced data on fine-tuning is not fully understood.

\paragraph{Contrastive Learning under Long-Tailed Distributions.}
Contrastive learning \citep{he2020momentum, chen2020simple} has shown significant progress in representation learning, mostly under balanced data distributions. To address long-tailed distributions, 
Hybrid-SC \citep{wang2021contrastive} incorporates the SCL loss into the feature learning branch of a two-branch network, leveraging the quality of its learned embeddings to obtain more discriminative representations. 
However, subsequent studies have shown that SCL itself is biased toward head classes and yields a skewed representation space \citep{kang2020exploring,cui2021parametric,zhu2022balanced, li2022targeted}.
To mitigate this issue,
PaCo \citep{cui2021parametric} introduces parametric class-wise learnable centers, KCL \citep{kang2020exploring} uses the same number of positive samples for all classes, and BCL \citep{zhu2022balanced} balances the negative terms so that each class contributes equally to the repulsive force. TSC \citep{li2022targeted} assigns uniformly distributed class targets to attract the features of each class. 
\citet{yi2025geometry} 
demonstrates that the geometry of the representation obtained by minimizing the SCL loss is skewed under a specific scenario where all classes share the same number of samples except for one majority class.
Different from previous works, we provide a framework for analyzing whether a regular simplex is a global minimizer of supervised contrastive learning objectives under long-tailed distributions, based on the form of their lower bounds. Using our framework, we study two representative objectives: the supervised contrastive loss \citep{khosla2020supervised} and the balanced contrastive loss \citep{zhu2022balanced}, both of which continue to serve as widely used bases for analysis in long-tailed recognition \citep{cui2021parametric,cui2023generalized,pan2024enhanced, li2024long, wang2024llm, yi2025geometry,son2025difficulty, de2025long, wanshortcut, chen2026reframing}.
We show that the regular simplex is not a global minimizer of either objective which is nontrivial for BCL, which has been shown to admit such geometry at its minimum. Building on our framework, we propose a balancing method for supervised contrastive objectives and a fully balanced contrastive loss, a simple contrastive objective whose global minimizer is provably a regular simplex under long-tailed distributions.
\section{Preliminaries}\label{sec:prelim}
We consider a \(K\)-class classification task and learn a mapping from input data to their associated target labels. Let \(\mathcal{X}\) be an input space and \(X=(x_1, \dots,x_B)\in \mathcal{X}^B\) be a batch of \(B\) training samples. Denote the target space by \([K]=\{1,\dots,K\}\). Let \(Y=(y_1, \dots, y_B)\in [K]^B\) be the corresponding targets of the set \(X\). An encoder \(f:\mathcal{X}\to\mathcal{Z}\) maps an input to the representation space denoted by \(\mathcal{Z}\subset \mathbb{R}^h\), where \(h\) is the dimension of the representation. Let \(Z=(z_1, \dots, z_B)=(f(x_1),\dots,f(x_B))\) denote the set of representations obtained by applying \(f\) to \(X\).

In contrastive learning, each representation lies on a unit hypersphere \(\mathbb{S}^{h-1}=\{z\in \mathbb{R}^h: \|z\|=1\}\), where \(\|\cdot\|\) denotes the L2 norm. 
Following prior analyses of the optimal geometry of supervised contrastive objectives, we consider a mini-batch containing samples from all \(K\) classes. This assumption serves as an analytical setting for characterizing the geometry induced at the optimum, rather than requiring every practical mini-batch to contain all classes \citep{graf2021dissecting, zhu2022balanced, yi2025geometry}. In practical training, where this condition is generally infeasible for large \(K\), we follow prior work and adopt the class-complement setting, which additionally introduces class centers as contrastive pairs \citep{cui2021parametric, zhu2022balanced, cui2023generalized, son2025difficulty, yi2025geometry}. Accordingly, our theoretical results characterize the optimum under this full-class analytical setting, while our experiments evaluate structural balancing under practical stochastic mini-batch training.
Denote the index set of batch samples by \(\mathcal{B}=[B] = \{1,\dots, B\}\), and define a $k$-class subset as \(\mathcal{B}_k = \{i: i\in \mathcal{B}, y_i=k\}\) with cardinality \(|\mathcal{B}_k|=B_k\) for each class \(k\in[K]\). It follows that \(B = \sum_{k \in [K]} B_k\). We assume a batch with \(B>4\), $K\ge 3$ with augmentation, and a long-tailed distribution such that \(B_1 \ge \cdots \ge B_K\ge2\) with \(B_1 \gg B_K\). We now define contrastive losses used in our analysis.

\begin{defn}[SCL loss \citep{khosla2020supervised}]\label{sec:prelim_def_scl}
    Let \(Z=(z_1,\dots , z_B) \in (\mathbb{S}^{h-1})^B\) be a set of features with corresponding labels \(Y = (y_1 , \dots , y_B) \in [K]^B\). The SCL loss is defined by
    \begin{equation}
    \resizebox{.94\textwidth}{!}{$
    \displaystyle
        \mathcal{L}_{\mathrm{SCL}}(Z;Y,\mathcal{B})\! =\! \frac{1}{B}\! \sum_{y \in [K]}\sum_{i \in \mathcal{B}_y}
        \ell_{\mathrm{SCL}}^{(i)}, 
        \quad 
        \ell_{\mathrm{SCL}}^{(i)}\!=-\frac{1}{B_y - 1}\! \sum_{j \in \mathcal{B}_y \backslash \singleton{i}}\!\!\log\! \Bigg( \frac{\exp(z_i \!\cdot\! z_j/\tau)}{\sum\limits_{k \in \mathcal{B} \backslash \singleton{i}}\exp(z_i \! \cdot \! z_k/\tau)} \!\Bigg).\label{sec:prelim_def_scl-loss}
    $}
    \end{equation}
\end{defn}
\begin{defn}[BCL loss \citep{zhu2022balanced}]\label{sec:prelim_def_bcl}
    Let $Z=(z_1,\dots , z_B) \in (\mathbb{S}^{h-1})^B$ be the features with labels $Y = (y_1 , \dots , y_B) \in [K]^B$. 
    The BCL loss is defined by
    \begin{equation}
    \resizebox{.94\textwidth}{!}{$
    \displaystyle
        \mathcal{L}_{\mathrm{BCL}}(Z;Y, \mathcal{B})\! =\! \frac{1}{B} \! \sum_{y \in [K]}\sum_{i \in \mathcal{B}_y} \ell_{\mathrm{BCL}}^{(i)},
        \quad 
        \ell_{\mathrm{BCL}}^{(i)} \!= -
        \frac{1}{B_y - 1}\! \sum_{j \in \mathcal{B}_y \backslash \{i\}}\!\! \log\! \Bigg( \frac{\exp(z_i \!\cdot\! z_j/\tau)}{\sum\limits_{k \in [K]} \!\frac{1}{B_k}\! \sum\limits_{m \in \mathcal{B}_k}\!\exp(z_i \!\cdot\! z_m/\tau)} \!\Bigg),\label{sec:prelim_def_bcl-loss}
    $}
    \end{equation}
    where when \(k=y\), the anchor $i$ is excluded from the summation in the denominator, and $B_k$ is replaced by $B_y -1$.
\end{defn}
Following \citet{graf2021dissecting, zhu2022balanced, yi2025geometry}, we set the temperature parameter \(\tau=1\) in the losses for simplicity. 
\begin{defn}[Regular Simplex]\label{sec:prelim_def_regular}
Let $K, h \in \mathbb{N}$ satisfy $K \le h+1$. A \(K\)-point configuration $\zeta = (\zeta_1, \dots, \zeta_K) \in (\mathbb{S}^{h-1})^K$ forms a regular simplex if and only if
\begin{enumerate}[label=(\arabic*), itemsep=4pt, topsep=0pt, parsep=0pt]
    \item \(\forall i \in [K], \hspace{6pt} \|\zeta_i \|^2 = 1\). \label{def:regular_simplex_1}
    \item $\forall i,j\in[K]$ with $i\neq j$,\hspace{6pt} \(\zeta_i \cdot \zeta_j = -\tfrac{1}{K-1}\). \label{def:regular_simplex_2}
    \item \({\textstyle \sum_{i \in [K]}} \zeta_i = 0.\) \label{def:regular_simplex_3} 
\end{enumerate}
\end{defn}
We regard the regular simplex in Definition~\ref{sec:prelim_def_regular} as the optimal target geometry on the unit hypersphere: features collapsing to centers minimizes the within-class variance, while conditions~\ref{def:regular_simplex_2} and~\ref{def:regular_simplex_3} maximize the between-class variance. Thus, the Fisher discriminant ratio \citep{fisher1936use} is maximized, leading to an ideal geometry for classification. Also, it has been shown that yielding a regular simplex will improve generalization ability under long-tailed distributions \citep{liu2023inducing}.
\section{Representation Geometry under long-tailed distributions}
In this section, we analyze the geometry of the global minimizers of contrastive losses. Section~\ref{method-lim-scl-bcl} develops a framework that derives lower bounds for supervised contrastive objectives and uses them to examine regular simplex optimality. By showing that non-optimality at the lower bound implies non-optimality at the original loss, we prove that a regular simplex is not optimal for either of two representative losses, SCL or BCL. Section~\ref{method-sb} introduces structural balancing to overcome this failure and proposes Fully Balanced Contrastive Learning (FBCL), whose global minimizer is provably a regular simplex. Section~\ref{method-icv} analyzes the effect of structural balancing on intra-class variance penalty.

\subsection{Geometric Analysis of supervised contrastive objectives}\label{method-lim-scl-bcl}
Throughout this section, we refer to the original loss \(\mathcal{L}(Z;Y, \mathcal{B})\) defined over the feature configuration \(Z\) as the \textit{feature-level objective}. We derive a lower bound of \(\mathcal{L}\), denoted by \(\mathcal{LB}\), a function over the class centers \(\bar{Z}\). We refer to \(\mathcal{LB}(\bar{Z};K)\) as the \textit{class-level lower bound}. We derive the class-level lower bounds for contrastive objectives using Lemma~\ref{appendix-lem-anchor-wise} and Lemma~\ref{appendix-lem-class-wise} in the Appendix. Equality holds when features within each class collapse to their class center. Denote this condition by
\begin{enumerate}[label=(C\arabic*), series=LBconditions]
    \item \textbf{Class-collapse}: 
    \(\forall  y \in [K], \forall i \in \mathcal{B}_y:
    \; z_i = \bar{z}_y\). 
    \label{(C1)}
\end{enumerate}

\begin{prop}[Lower bound of the SCL loss]\label{prop-scl}
    Let \(Z=(z_1, \dots, z_B)\in(\mathbb{S}^{h-1})^B\) be a $B$-point configuration with labels \(Y=(y_1, \dots, y_B) \in [K]^B\). It holds that 
    \begin{align}
        \mathcal{L}_{\mathrm{SCL}}(Z;Y, \mathcal{B}) \ge \frac{1}{B}\! \sum_{y \in [K]} B_y \log \!\Big( B_y\!-\!1 + e^{-1}\!\sum\limits_{k \in [K]\backslash \singleton{y}} \!B_k \exp(\bar{z}_y \cdot \bar{z}_k) \!\Big) \eqcolon \mathcal{LB}_{\mathrm{SCL}}(\bar{Z};K) \label{lowerbound-scl}
    \end{align}
    where \(\bar{z}_y = \frac{1}{B_y}\sum_{i \in \mathcal{B}_y}z_i,\) and \(\bar{Z}=(\bar{z}_1, \dots, \bar{z}_K)\).
\end{prop}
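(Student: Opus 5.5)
The plan is to bound each anchor-wise term $\ell^{(i)}_{\mathrm{SCL}}$ by Jensen's inequality, then average over the anchors of each class with a second convexity argument. The class-level bound of Proposition~\ref{prop-scl} then follows by summing over $y$ with weights $B_y/B$. These two steps are what I expect Lemma~\ref{appendix-lem-anchor-wise} and Lemma~\ref{appendix-lem-class-wise} to encode.

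\textbf{Step 1 (anchor-wise).} Fix $i\in\mathcal{B}_y$, set $S_i=\sum_{k\in\mathcal{B}\setminus\{i\}}\exp(z_i\cdot z_k)$, and let $m_i=\frac{1}{B_y-1}\sum_{j\in\mathcal{B}_y\setminus\{i\}} z_i\cdot z_j$ be the mean positive similarity. Then $\ell^{(i)}_{\mathrm{SCL}}=\log S_i-m_i=\log\big(S_i e^{-m_i}\big)$. Split $S_i$ into the positive block and one block per negative class $c\neq y$, and apply Jensen (convexity of $\exp$) inside each block:
\[
\sum_{j\in\mathcal{B}_y\setminus\{i\}}e^{z_i\cdot z_j}\ge (B_y-1)e^{m_i},
\qquad
\sum_{k\in\mathcal{B}_c}e^{z_i\cdot z_k}\ge B_c\, e^{z_i\cdot\bar z_c}.
\]
Since $\log$ is increasing, this gives
\[
\ell^{(i)}_{\mathrm{SCL}}\ge \log\Big(B_y-1+\sum_{c\neq y}B_c\exp(a_{i,c})\Big),
\qquad a_{i,c}=z_i\cdot\bar z_c-m_i.
\]

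\textbf{Step 2 (class-wise).} The map $a\mapsto\log\big(B_y-1+\sum_c B_c e^{a_c}\big)$ is convex, because it is a log-sum-exp with one constant coordinate $\log(B_y-1)$. Jensen over the $B_y$ anchors of class $y$ therefore gives
\[
\frac{1}{B_y}\sum_{i\in\mathcal{B}_y}\ell^{(i)}_{\mathrm{SCL}}\ge \log\Big(B_y-1+\sum_{c\neq y}B_c\exp(\bar a_c)\Big),
\qquad \bar a_c=\frac{1}{B_y}\sum_{i\in\mathcal{B}_y} a_{i,c}.
\]
To compute $\bar a_c$, first note $\frac1{B_y}\sum_i z_i\cdot\bar z_c=\bar z_y\cdot\bar z_c$. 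Next, using $z_i\cdot z_i=1$ gives $m_i=\frac{B_y\, z_i\cdot\bar z_y-1}{B_y-1}$, so the mean of the $m_i$ over class $y$ is $\frac{B_y\|\bar z_y\|^2-1}{B_y-1}$. Because $\|\bar z_y\|\le1$, this mean is at most $1$, and hence $\bar a_c\ge \bar z_y\cdot\bar z_c-1$.

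\textbf{Step 3 (assembly).} By monotonicity in each $\bar a_c$, the class-$y$ average is at least $\log\big(B_y-1+e^{-1}\sum_{c\neq y}B_c e^{\bar z_y\cdot\bar z_c}\big)$. Multiplying by $B_y/B$ and summing over $y$ yields \eqref{lowerbound-scl}.

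\textbf{Equality and main obstacle.} Every inequality used is tight under~\ref{(C1)}: all Jensen steps become equalities, and $\|\bar z_y\|=1$. The step needing most care is Step~2. We need joint convexity with the constant $B_y-1$ term kept outside the Jensen average. We must also handle the coupling through $m_i$, which is done by the identity above that uses $\|z_i\|=1$ to produce the factor $e^{-1}$.
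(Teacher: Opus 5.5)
Your proof is correct and follows the paper's two-stage Jensen scheme (Lemmas~\ref{appendix-lem-anchor-wise} and~\ref{appendix-lem-class-wise}): per anchor, Jensen on the positive block and on each negative class block, and then Jensen over the anchors of each class. The only difference is where the factor $e^{-1}$ enters. The paper discards $m_i$ immediately at each anchor via Cauchy--Schwarz ($m_i\le 1$) and then applies Jensen in $z$. You instead carry $m_i$ through the class average and only at the end bound its mean, $\frac{B_y\|\bar z_y\|^2-1}{B_y-1}\le 1$; this gives a slightly tighter intermediate bound that still retains $\|\bar z_y\|$ (equivalently, the intra-class variance $V_y$).
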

\begin{prop}[Lower bound of the BCL loss]\label{prop-bcl}
    Let $Z, \bar{Z}$ be defined as in Proposition~\ref{prop-scl}. It holds that 
    \begin{align}
        \mathcal{L}_{\mathrm{BCL}}(Z;Y,\mathcal{B}) \ge \frac{1}{B}\! \sum_{y \in [K]} B_y \log \!\Big( 1 + e^{-1}\!\sum\limits_{k \in [K]\backslash \singleton{y}}  \!\exp(\bar{z}_y \cdot \bar{z}_k) \!\Big) \eqcolon \mathcal{LB}_{\mathrm{BCL}}(\bar{Z};K).
        \label{lowerbound-bcl}
    \end{align}
    Equality in the lower bounds in Eq.~\eqref{lowerbound-scl} and Eq.~\eqref{lowerbound-bcl} is attained if and only if \ref{(C1)} is satisfied.
\end{prop}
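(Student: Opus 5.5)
The plan is to prove both bounds anchor by anchor: first bound each per-anchor term $\ell^{(i)}$ from below by a function of the class centers $\bar z_k$, then average over the anchors of each class. This follows the structure of Lemma~\ref{appendix-lem-anchor-wise} and Lemma~\ref{appendix-lem-class-wise}. I describe BCL in detail; SCL differs only in the multiplicities $B_k$ that appear.

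\textbf{Step 1 (rewrite the anchor term).} Fix an anchor $i\in\mathcal{B}_y$ and let $c_i=\frac{1}{B_y-1}\sum_{j\in\mathcal{B}_y\setminus\{i\}} z_j$ be the leave-one-out mean of its class. Because the denominator in $\ell^{(i)}_{\mathrm{BCL}}$ does not depend on $j$, the average over positives gives
\[
\ell^{(i)}_{\mathrm{BCL}}=\log D_i - z_i\cdot c_i ,
\qquad
D_i=\sum_{k\in[K]}\frac{1}{B_k}\sum_{m\in\mathcal{B}_k}\exp(z_i\cdot z_m),
\]
with the stated convention that for $k=y$ the anchor is excluded and $B_y$ is replaced by $B_y-1$.

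\textbf{Step 2 (anchor-wise Jensen).} By convexity of $\exp$, each class block of $D_i$ is bounded below by the exponential of its mean similarity:
\begin{itemize}[leftmargin=*, itemsep=0pt, topsep=0pt]
\item for $k\neq y$, $\frac{1}{B_k}\sum_{m\in\mathcal{B}_k}\exp(z_i\cdot z_m)\ge \exp(z_i\cdot\bar z_k)$;
\item for the own class, the block is at least $\exp(z_i\cdot c_i)$.
\end{itemize}
Substituting these and moving $-z_i\cdot c_i$ inside the logarithm gives
\[
\ell^{(i)}_{\mathrm{BCL}}\ge \log\Big(1+\sum_{k\ne y}\exp\big(z_i\cdot\bar z_k - z_i\cdot c_i\big)\Big).
\]
By Cauchy--Schwarz, $z_i\cdot c_i\le \|c_i\|\le 1$. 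The right-hand side is decreasing in $z_i\cdot c_i$, so
\[
\ell^{(i)}_{\mathrm{BCL}}\ge g\big((z_i\cdot\bar z_k)_{k\ne y}\big),
\qquad
g(u)=\log\Big(1+e^{-1}\sum_{k\ne y} e^{u_k}\Big).
\]

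\textbf{Step 3 (class-wise Jensen).} The function $g$ is a log-sum-exp with one constant entry, so it is convex. Averaging the bound of Step~2 over $i\in\mathcal{B}_y$ and applying Jensen gives
\[
\frac{1}{B_y}\sum_{i\in\mathcal{B}_y}\ell^{(i)}_{\mathrm{BCL}}\ge g\big((\bar z_y\cdot\bar z_k)_{k\ne y}\big),
\]
using linearity: the class average of $z_i\cdot\bar z_k$ is $\bar z_y\cdot\bar z_k$. Multiplying by $B_y/B$ and summing over $y$ yields $\mathcal{LB}_{\mathrm{BCL}}$.

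\textbf{SCL case.} For SCL the same three steps apply with unnormalized sums in the denominator. The own-class block is $\ge (B_y-1)\exp(z_i\cdot c_i)$ and the block of class $k$ is $\ge B_k\exp(z_i\cdot\bar z_k)$. This leads to
\[
\ell^{(i)}_{\mathrm{SCL}}\ge \log\Big(B_y-1+e^{-1}\sum_{k\ne y}B_k e^{z_i\cdot\bar z_k}\Big),
\]
which is again convex in $(z_i\cdot\bar z_k)_k$. Averaging as in Step~3 gives $\mathcal{LB}_{\mathrm{SCL}}$.

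\textbf{Equality case (the delicate part).} The key observation is that equality forces $z_i\cdot c_i=1$ for every anchor $i$.
\begin{itemize}[leftmargin=*, itemsep=0pt, topsep=0pt]
\item \emph{Equality implies (C1).} Every other inequality in the chain is weak, and the step that replaced $z_i\cdot c_i$ by $1$ is strict whenever $z_i\cdot c_i<1$, because the log argument then strictly increases. If $z_i\cdot c_i=1$, then $c_i$ is an average of unit vectors with norm at least $1$. Hence $\|c_i\|=1$, which forces all $z_j$, $j\in\mathcal{B}_y\setminus\{i\}$, to equal $c_i$, and then $z_i=c_i$. This uses $B_y\ge2$. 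So every class collapses and \ref{(C1)} holds.
\item \emph{(C1) implies equality.} If \ref{(C1)} holds, every Jensen step is an average of identical values, and $z_i\cdot c_i=1$. Hence all inequalities are tight.
\end{itemize}
The main care needed is to verify that the monotonicity step is genuinely strict. The other inequalities need not be strict, since the collapse condition is already forced through $z_i\cdot c_i=1$.
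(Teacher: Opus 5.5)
Your proposal is correct and follows essentially the paper's route. It bounds each anchor term via Jensen on the own-class and other-class blocks of the denominator plus Cauchy--Schwarz on the positive average (Lemma~\ref{appendix-lem-anchor-wise}), then applies class-wise Jensen to the convex log-sum-exp function (Lemma~\ref{appendix-lem-class-wise}), with $w_k=B_k$ for SCL and $w_k=1$ for BCL. Your equality argument is sound and a slightly cleaner packaging of the paper's condition that all positives coincide: it extracts collapse solely from the strict monotonicity of the Cauchy--Schwarz step, which forces $z_i\cdot c_i=1$ for every anchor.
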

The proofs of Propositions mainly follow derivations from \citet{graf2021dissecting}, and are provided in Appendix~\ref{Prop-proof}. 

The class collapse condition alone does not specify the geometry of class centers. We introduce two Lemmas to characterize the geometry of the minimizers of a loss. Rather than analyzing the original loss, we analyze its lower bound. Lemma~\ref{lem-F-L} shows that if a center configuration is not optimal for the lower bound, then the corresponding feature configuration cannot be optimal for the original loss. Lemma~\ref{lem-minimizer_of_F} then derives a condition that the lower bound must satisfy for a regular simplex to be a minimizer. 
\begin{lem}\label{lem-F-L}
    Let \(\mathcal{L}\) be a feature-level contrastive loss,
    and let $\mathcal{LB}$ be a class-level lower bound of \(\mathcal{L}\). Let \(\zeta = (\zeta_1, \dots, \zeta_K) \in (\mathbb{S}^{h-1})^K\) be a $K$-point center configuration, and let \(Z^*=(z^*_1, \dots, z^*_B)\in (\mathbb{S}^{h-1})^B\) be a $B$-point configuration which collapses to \(\zeta\), i.e., \(z^*_i = \zeta_{y_i}\) for all \(i \in [B]\). Then,
    \begin{align}
        \zeta \notin \argmin_{\bar{Z}\in (\mathbb{S}^{h-1})^K} \mathcal{LB}(\bar{Z};K)
        \quad\Longrightarrow\quad
        Z^* \notin \argmin_{Z \in (\mathbb{S}^{h-1})^B} \mathcal{L}(Z;Y, \mathcal{B}).
    \end{align}
\end{lem}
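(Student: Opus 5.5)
We argue by contraposition: we exhibit a feature configuration whose loss is strictly below $\mathcal{L}(Z^*;Y,\mathcal{B})$. The two properties of a class-level lower bound established in Propositions~\ref{prop-scl} and~\ref{prop-bcl} are the only ingredients. First, $\mathcal{L}(Z;Y,\mathcal{B}) \ge \mathcal{LB}(\bar{Z};K)$ for every $Z$, where $\bar Z$ collects the class means of $Z$. Second, equality holds under the class-collapse condition~\ref{(C1)}. We take these two properties as the defining requirements of a ``class-level lower bound'' in the lemma.

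The steps are as follows.
\begin{enumerate}
\item Evaluate the loss at $Z^*$. Since $z^*_i=\zeta_{y_i}$ for every $i$, the class means are $\bar z_y=\frac{1}{B_y}\sum_{i\in\mathcal{B}_y}\zeta_y=\zeta_y$. Moreover, $Z^*$ satisfies~\ref{(C1)}, so
\[
\mathcal{L}(Z^*;Y,\mathcal{B})=\mathcal{LB}(\zeta;K).
\]
\item Use the hypothesis $\zeta\notin\argmin\mathcal{LB}$. There are two readings of this. If the minimum of $\mathcal{LB}$ over the compact set $(\mathbb{S}^{h-1})^K$ is attained, which holds because $\mathcal{LB}$ is continuous for SCL and BCL, then some $\xi$ has $\mathcal{LB}(\xi;K)<\mathcal{LB}(\zeta;K)$. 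Even without attainment, the statement $\zeta\notin\argmin$ means exactly that such a $\xi\in(\mathbb{S}^{h-1})^K$ exists.
\item Build the competitor $\tilde Z$ by setting $\tilde z_i=\xi_{y_i}$. It lies in $(\mathbb{S}^{h-1})^B$ because each $\xi_k$ is a unit vector. Its class means are $\xi$, and it satisfies~\ref{(C1)}, so
\[
\mathcal{L}(\tilde Z;Y,\mathcal{B})=\mathcal{LB}(\xi;K)<\mathcal{LB}(\zeta;K)=\mathcal{L}(Z^*;Y,\mathcal{B}).
\]
Hence $Z^*$ is not a minimizer of $\mathcal{L}$.
\end{enumerate}

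The argument is short, and the only delicate point is definitional. Class means of unit vectors are generally not unit vectors, so $\mathcal{LB}$ must be understood as a function defined wherever the means lie, while the argmin in the lemma is taken over unit-norm centers. This causes no difficulty here: both $\zeta$ and $\xi$ are unit-norm, and collapsed configurations built from them have exactly those centers. Consequently we never need the inequality $\mathcal{L}\ge\mathcal{LB}$ for non-collapsed $Z$, only the equality case on collapsed configurations. I would state this explicitly, noting that only attainment of equality under~\ref{(C1)} is used, so the lemma applies verbatim to any objective whose lower bound is derived through Lemma~\ref{appendix-lem-anchor-wise} and Lemma~\ref{appendix-lem-class-wise}.
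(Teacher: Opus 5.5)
Your proposal is correct and matches the paper's proof: take $\tilde\zeta$ with $\mathcal{LB}(\tilde\zeta;K)<\mathcal{LB}(\zeta;K)$, collapse the features onto it, and use the equality case of the lower bound under class collapse for both $\tilde Z$ and $Z^*$. As you note, only equality on collapsed configurations is used, never the inequality for non-collapsed $Z$ (two small wording points: the argument is direct rather than by contraposition, and the existence of $\tilde\zeta$ follows straight from the definition of $\argmin$, so the remark about compactness and attainment is unnecessary).
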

\begin{lem}\label{lem-minimizer_of_F}
    Let $\bar{Z}, \bar{Z}^*$ be the center and the regular simplex configuration, respectively. $\mathcal{LB}(\bar{Z};K)$ can be written with coefficients $\beta, a_y,b_y,c_y$ for $y\in[K]$,
    \begin{align}
        \mathcal{LB}(\bar{Z};K) &= \frac{1}{\beta}\sum_{y \in [K]} a_y \log\Big(
            b_y + e^{-1}\sum_{m \in [K] \backslash \singleton{y}}c_m\exp(\bar{z}_y \cdot \bar{z}_m)
        \Big).
    \end{align}
    For $\bar{Z}^*$ to be a minimizer of $\mathcal{LB}(\bar{Z};K)$, the following condition must be satisfied: For every $y \in [K]$, with $d=-1/(K-1)$,
    \begin{align}
        \frac{a_y c_k}{b_y +e^{d-1}\sum_{m \in [K]\backslash\singleton{y}}c_m} + \frac{a_k c_y}{b_k + e^{d-1}\sum_{m \in [K]\backslash \singleton{k}}c_m}
        = \text{Constant}, \quad \text{for all } k \in[K]\backslash\singleton{y}.\label{eq:lem-minimizer_of_F}
    \end{align}
\end{lem}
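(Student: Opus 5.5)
The plan is a first-order (KKT) argument on the product of spheres. Any minimizer $\bar Z^*$ of $\mathcal{LB}$ over $(\mathbb{S}^{h-1})^K$ must be a constrained stationary point. So there exist multipliers $\lambda_y$ such that
\[
\nabla_{\bar z_y}\mathcal{LB}(\bar Z^*;K)=\lambda_y \bar z_y^* \qquad \text{for all } y\in[K].
\]
I will compute this gradient explicitly, evaluate it at the regular simplex, and show that stationarity forces Eq.~\eqref{eq:lem-minimizer_of_F}.

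First I differentiate. Write $D_y(\bar Z)=b_y+e^{-1}\sum_{m\neq y}c_m\exp(\bar z_y\cdot\bar z_m)$. The vector $\bar z_k$ appears in the term indexed by $y=k$, and also in every term indexed by $y\neq k$ through the pair $(y,k)$. This gives
\[
\beta\,\nabla_{\bar z_k}\mathcal{LB}=\sum_{y\neq k}\Big(\frac{a_k c_y}{D_k}+\frac{a_y c_k}{D_y}\Big)e^{\bar z_k\cdot\bar z_y-1}\,\bar z_y .
\]
At the regular simplex every inner product $\bar z_k\cdot\bar z_y$ equals $d=-1/(K-1)$, so each exponential equals $e^{d-1}$. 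Also $D_y=b_y+e^{d-1}\sum_{m\neq y}c_m$, which is exactly the denominator appearing in Eq.~\eqref{eq:lem-minimizer_of_F}. Define
\[
w_{ky}=\frac{a_k c_y}{D_k}+\frac{a_y c_k}{D_y},
\]
which is symmetric in $k$ and $y$. The stationarity condition for $\bar z_k$ then reads
\[
e^{d-1}\sum_{y\neq k}w_{ky}\,\bar z_y^*=\tilde\lambda_k\,\bar z_k^*,
\]
with $\tilde\lambda_k=\beta\lambda_k$.

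Next I use the geometry of the simplex. The vectors $\bar z_1^*,\dots,\bar z_K^*$ span a $(K-1)$-dimensional space, and their only linear relation is $\sum_y\bar z_y^*=0$: the Gram matrix $\frac{K}{K-1}\big(I-\frac1K\mathbf 1\mathbf 1^\top\big)$ has kernel spanned by $\mathbf 1$. Moving everything to one side, the coefficient vector $v$ with $v_k=-\tilde\lambda_k$ and $v_y=e^{d-1}w_{ky}$ for $y\neq k$ must therefore be a multiple of $\mathbf 1$. Hence, for each fixed $k$, the weights $w_{ky}$ are the same for all $y\neq k$.

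Symmetry of $w$ then makes the constant global. I will check this with a short graph argument. If $w_{ky}=\alpha_k$ for all $y\neq k$, then $\alpha_k=w_{ky}=w_{yk}=\alpha_y$ for every pair $k\neq y$. So all $\alpha_k$ coincide, which is exactly Eq.~\eqref{eq:lem-minimizer_of_F}.

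The main obstacle is making the linear-independence step rigorous. The argument needs $K\le h+1$ so that the simplex exists, and it must handle the fact that $\sum_y\bar z_y^*=0$ is the unique relation among the simplex vectors. I would verify this through the rank-$(K-1)$ Gram matrix. A secondary point is to state clearly that minimality on the smooth manifold $(\mathbb{S}^{h-1})^K$ implies that the tangential component of each gradient vanishes. This follows from the standard Lagrange condition, since the constraints $\|\bar z_y\|^2=1$ have linearly independent gradients.
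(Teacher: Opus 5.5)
Your proposal is correct and follows essentially the same route as the paper. Both use the Lagrange condition $\nabla_{\bar z_i}\mathcal{LB}=\lambda_i\nabla_{\bar z_i}(\|\bar z_i\|^2-1)$, the same gradient with symmetric pair coefficients, evaluation at $d=-1/(K-1)$, and the fact that the only linear relation among simplex vertices is $\sum_y\bar z_y^*=0$ (the paper substitutes $\bar z_i^*=-\sum_{k\neq i}\bar z_k^*$ and uses linear independence of any $K-1$ vertices, whereas you use the kernel of the Gram matrix), and your extra symmetry step showing the constant is the same for every $y$ is a valid mild strengthening of the paper's per-$y$ constant.
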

The proofs are provided in Appendix~\ref{Appendix:lem-proof},~\ref{Thm-proof-derivation} respectively. These results form the basis of our framework: for a supervised contrastive objective, we derive a class-level lower bound \(\mathcal{LB}\), examine whether the regular simplex condition is satisfied, and when it is not, transfer this non-optimality to the original loss \(\mathcal{L}\). We apply this framework to SCL and BCL.
\begin{thm}\label{method:Thm-scl}
    Let \(\bar{Z}^* =(\bar{z}_1^*, \dots, \bar{z}_K^*)\) be a regular simplex, and let \(Z^*=(z^*_1, \dots, z^*_B)\) collapse class-wise to \(\bar{Z}^*\). Then, the regular simplex \(\bar{Z}^* \) is not a minimizer of \(\mathcal{LB}_{\mathrm{SCL}}(\bar{Z};K)\) by Lemma~\ref{lem-minimizer_of_F} and thus, by Lemma~\ref{lem-F-L}, \(Z^*\) is not a minimizer of \(\mathcal{L}_{\mathrm{SCL}}\). That is,
    \begin{align}
        Z^* \notin \argmin_{Z \in (\mathbb{S}^{h-1})^B} \mathcal{L}_{\mathrm{SCL}}(Z;Y, \mathcal{B}).\label{Thm-SCL-eq-F}
    \end{align}
\end{thm}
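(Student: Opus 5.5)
We will put $\mathcal{LB}_{\mathrm{SCL}}$ into the template of Lemma~\ref{lem-minimizer_of_F}, show that condition \eqref{eq:lem-minimizer_of_F} fails under the standing assumption $B_1 \ge \cdots \ge B_K$ with $B_1 \gg B_K$, and then apply Lemma~\ref{lem-F-L}.

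Comparing Eq.~\eqref{lowerbound-scl} with the template in Lemma~\ref{lem-minimizer_of_F} gives the coefficients
\[
\beta = B,\qquad a_y = B_y,\qquad b_y = B_y - 1,\qquad c_m = B_m .
\]
At the regular simplex every off-diagonal inner product equals $d=-1/(K-1)$. Writing $S_y = \sum_{m\neq y} B_m = B - B_y$, the denominator for class $y$ becomes
\[
D_y = B_y - 1 + e^{d-1}(B - B_y) = (1-e^{d-1})B_y + e^{d-1}B - 1 .
\]
The left-hand side of \eqref{eq:lem-minimizer_of_F} for the pair $(y,k)$ is then
\[
g(y,k) = \frac{B_y B_k}{D_y} + \frac{B_k B_y}{D_k} = B_yB_k\Big(\frac{1}{D_y}+\frac{1}{D_k}\Big).
\]
Lemma~\ref{lem-minimizer_of_F} requires that, for each fixed $y$, $g(y,k)$ be independent of $k\neq y$. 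To refute this, we pick $y$ and two distinct indices $k,k'\neq y$ with $B_k\neq B_{k'}$. This is possible since $K\ge 3$ and $B_1\gg B_K$: take $y$ to be a middle class, or take $y=K$ and compare $k=1$ with some $k'$ having $B_{k'}<B_1$.

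Note that $g(y,k)/B_y = B_k/D_y + B_k/D_k$. The function $x\mapsto x/D(x)$, with $D(x)=\alpha x+\gamma$, $\alpha = 1-e^{d-1}>0$ and $\gamma = e^{d-1}B-1>0$ (since $B>4$), is strictly increasing because its derivative is $\gamma/D^2>0$. The term $x/D_y$ is also strictly increasing in $x$. Hence $k\mapsto g(y,k)$ is strictly increasing in $B_k$, so $B_k\neq B_{k'}$ forces $g(y,k)\neq g(y,k')$. The condition therefore fails, and $\bar Z^*$ is not a minimizer of $\mathcal{LB}_{\mathrm{SCL}}$. Lemma~\ref{lem-F-L} then transfers this to $Z^*\notin\argmin\mathcal{L}_{\mathrm{SCL}}$, which is exactly Eq.~\eqref{Thm-SCL-eq-F}.

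The main obstacle is the case analysis guaranteeing two distinct off-$y$ cardinalities. If only one class differs from the rest (e.g.\ $K=3$ with $B_2=B_3$), we instead choose $y$ to be one of the two equal classes, so that the remaining indices $k$ include both $B_1$ and the other $B_K$. Since $B_1\gg B_K$, some two values differ, and choosing $y$ outside that differing pair always works when $K\ge3$. The monotonicity argument makes the rest of the proof routine.
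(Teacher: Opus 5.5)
Your proposal is correct and follows the paper's proof essentially step for step. It uses the same identification $(\beta,a_y,b_y,c_y)=(B,B_y,B_y-1,B_y)$, the same affine form $\Gamma_y=(1-e^{d-1})B_y+(Be^{d-1}-1)$ with both coefficients positive, and the same strict monotonicity of $B_k/\Gamma_y+B_k/\Gamma_k$ in $B_k$ to violate Eq.~\eqref{eq:lem-minimizer_of_F}, followed by Lemma~\ref{lem-F-L}. Your index case analysis can be shortened: since $K\ge 3$ and $B_1>B_K$, any $y\notin\{1,K\}$ together with $k=1$, $k'=K$ already works.
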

Theorem~\ref{method:Thm-scl} shows that the regular simplex is not a global minimizer of the SCL loss. 
The full statement of the theorem and detailed proof are provided in Appendix~\ref{Appendix:Thm-proof-scl}.
\begin{thm} \label{method:Thm-bcl}
    Let \(\bar{Z}^*\) and \(Z^*\) be defined as in Theorem~\ref{method:Thm-scl}. Then, \(\bar{Z}^*\) is not a minimizer of \(\mathcal{LB}_{\mathrm{BCL}}(\bar{Z};K)\) by Lemma~\ref{lem-minimizer_of_F} and thus, by Lemma~\ref{lem-F-L}, \(Z^*\) is not a minimizer of \(\mathcal{L}_{\mathrm{BCL}}\). That is,
    \begin{align}
         Z^* \notin \argmin_{Z \in (\mathbb{S}^{h-1})^B} \mathcal{L}_{\mathrm{BCL}}(Z;Y, \mathcal{B}).
         \label{Thm-BCL-eq-F}
    \end{align}
\end{thm}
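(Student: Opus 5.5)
Following the framework, we apply Lemma~\ref{lem-minimizer_of_F} to the class-level lower bound $\mathcal{LB}_{\mathrm{BCL}}$ from Proposition~\ref{prop-bcl}, show that the necessary condition \eqref{eq:lem-minimizer_of_F} fails, and then transfer the non-optimality to $\mathcal{L}_{\mathrm{BCL}}$ through Lemma~\ref{lem-F-L}.

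First, we read off the coefficients by matching Eq.~\eqref{lowerbound-bcl} to the template of Lemma~\ref{lem-minimizer_of_F}. This gives $\beta=B$, $a_y=B_y$, $b_y=1$, and $c_m=1$ for all $y,m\in[K]$. At the regular simplex, every off-diagonal inner product equals $d=-1/(K-1)$, so each denominator is the same class-independent quantity,
\begin{align*}
D = 1+(K-1)e^{d-1}.
\end{align*}
The left-hand side of \eqref{eq:lem-minimizer_of_F} for the pair $(y,k)$ therefore reduces to $(B_y+B_k)/D$.

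Second, we show this expression cannot be constant in $k$ for every fixed $y$. Fix $y=1$. The condition then requires $B_1+B_k$ to be the same for all $k\in\{2,\dots,K\}$, that is, $B_2=\cdots=B_K$. Fix instead $y=K$. Since $K\ge3$, the condition requires $B_1+B_K=B_2+B_K$, that is, $B_1=B_2$. Together these force $B_1=\cdots=B_K$, which contradicts the long-tailed assumption $B_1\gg B_K$. Hence condition \eqref{eq:lem-minimizer_of_F} is violated, and by Lemma~\ref{lem-minimizer_of_F} the simplex $\bar Z^*$ is not a minimizer of $\mathcal{LB}_{\mathrm{BCL}}$. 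In fact, any imbalance $B_1\ne B_K$ already suffices for the contradiction. Finally, $Z^*$ collapses class-wise to $\bar Z^*$, so Lemma~\ref{lem-F-L} with $\zeta=\bar Z^*$ gives Eq.~\eqref{Thm-BCL-eq-F}.

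The main obstacle is making sure the coefficient identification is legitimate. The form of Lemma~\ref{lem-minimizer_of_F} factors out $e^{-1}$ and places $c_m$ on the other class $m$, while in BCL the balancing $1/B_k$ cancels the $B_k$ from the sum, so every $c_m$ equals $1$. Consequently the residual dependence on class size enters only through the outer weight $a_y=B_y$. It is worth checking that Lemma~\ref{lem-minimizer_of_F}'s stationarity condition is the one for the tangent-space (pairwise perturbation) derivative, which it is by construction. If needed, we would also confirm the contradiction explicitly: a perturbation that moves $\bar z_1$ and $\bar z_K$ slightly toward or away from each other, with the first-order changes weighted by $B_1$ versus $B_k$, strictly decreases $\mathcal{LB}_{\mathrm{BCL}}$.
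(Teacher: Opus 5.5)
Your proposal is correct and follows the paper's proof: you use the same coefficient identification ($a_y=B_y$, $b_y=c_y=1$, $\beta=B$), observe that all denominators coincide at the regular simplex so the condition reduces to $B_y+B_k$ being constant in $k$, and transfer the non-optimality to $\mathcal{L}_{\mathrm{BCL}}$ via Lemma~\ref{lem-F-L}. The only cosmetic difference is in how $K\ge 3$ is used: the paper picks a third index $i$ and two indices $r,s$ with $B_r>B_s$ directly, whereas you combine the conditions at $y=1$ and $y=K$ to force all $B_y$ to be equal.
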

Theorem~\ref{method:Thm-bcl} shows that the same result holds for BCL: although the BCL loss balances the contributions of negative samples within each anchor-wise loss, \(\mathcal{LB}_{\mathrm{BCL}}(\bar{Z};K)\) still retains \(B_y\), preventing Eq.~\eqref{eq:lem-minimizer_of_F} from being satisfied. The full statement and proof are provided in Appendix~\ref{Appendix:Thm-proof-bcl}.
\subsection{Fully Balanced Contrastive Learning via Structural Balancing}\label{method-sb}
Theorems~\ref{method:Thm-scl} and~\ref{method:Thm-bcl} show that the regular simplex is not a global minimizer of the representative losses SCL and BCL. We identify the reason for this failure and introduce a simple way to correct it.
Both losses balance the contribution of each class only within the anchor-wise loss \(\ell^{(i)}\): SCL loss normalizes over the positive set in \(\ell_{\mathrm{SCL}}^{(i)}\) (Eq.~\eqref{sec:prelim_def_scl-loss}), and BCL loss further normalizes contributions of negative samples of each class in the denominator of \(\ell_{\mathrm{BCL}}^{(i)}\) (Eq.~\eqref{sec:prelim_def_bcl-loss}). 

This anchor-wise balancing alone is insufficient: the anchor-wise losses are summed into an overall loss, which can be decomposed into two summations: \(\sum_{i \in \mathcal{B}} = \sum_{y \in [K]} \sum_{i \in \mathcal{B}_y}\). This summation weights the contribution of each class by its cardinality \(B_y\). This can be seen by rewriting the BCL loss as the following:
\begin{align}
    \mathcal{L}_{\mathrm{BCL}}(Z;Y,\mathcal{B})
    = \frac{1}{B}\!\sum_{i \in \mathcal{B}} \ell_{\mathrm{BCL}}^{(i)}
    = \frac{1}{B}\!\sum_{y \in [K]} \sum_{i \in \mathcal{B}_y} \ell_{\mathrm{BCL}}^{(i)}
    =\! \sum_{y \in [K]} \frac{B_y}{B}\Big( \frac{1}{B_y}\sum_{i \in \mathcal{B}_y} \ell_{\mathrm{BCL}}^{(i)}\Big). \label{sec:fbcl_bcl_structure}
\end{align}
Although the anchor-wise loss \(\ell^{(i)}_{\textrm{BCL}}\) is balanced, summing it over all samples assigns each class a weight of \(B_y/B\), as shown in Eq.~\eqref{sec:fbcl_bcl_structure}. Consequently, \(\mathcal{LB}_{\mathrm{BCL}}(\bar{Z};K)\) inherits the dependence on \(B_y\). Therefore, to remove this dependence, the summation of the anchors must be balanced.

We propose structural balancing, which extends balancing in contrastive objectives from the anchor-wise loss to the aggregation of these losses into the overall objective. While SCL and BCL balance within each anchor-wise loss, structural balancing balances the contributions of anchors across classes when forming the overall contrastive objective which differs in the motivation and usage from the conventional re-weighting methods. Details are provided in the Appendix~\ref{Appendix:Class-balanced_re-weighting}. We incorporate structural balancing into the BCL loss, leading to the following FBCL objective.
\begin{defn}[FBCL loss]\label{Def-FBCL}
    Let $Z=(z_1,\dots , z_B) \in (\mathbb{S}^{h-1})^B$ be features with labels $Y = (y_1 , \dots , y_B) \in [K]^B$. 
    The FBCL loss is defined by
    \begin{equation}
    \resizebox{0.94\textwidth}{!}{$
    \displaystyle
    \mathcal{L}_{\mathrm{FBCL}}(Z;Y, \mathcal{B})\! =\! \frac{1}{K}\! \sum_{y \in [K]}\! \frac{1}{B_y}\!\sum_{i \in \mathcal{B}_y} \ell_{\mathrm{FBCL}}^{(i)},
    \quad
    \ell_{\mathrm{FBCL}}^{(i)}\!
    =\!-
    \frac{1}{B_y - 1}\! \sum_{j \in \mathcal{B}_y \backslash \{i\}}\!\log\! \Bigg( \frac{\exp(z_i \!\cdot\! z_j/\tau)}{\sum\limits_{ k \in [K] }\! \frac{1}{B_k}\! \sum\limits_{m \in \mathcal{B}_k}\!\exp(z_i \!\cdot\! z_m/\tau)} \!\Bigg), \label{def-eq-fbcl}
    $}
    \end{equation}
    where \(\ell_{\mathrm{FBCL}}^{(i)} = \ell_{\mathrm{BCL}}^{(i)}\) and when \(k=y\), the anchor $i$ is excluded and $B_k$ is replaced by $B_y -1$.
\end{defn}
In Eq.~\eqref{def-eq-fbcl}, the factor \(1/K\) corresponds to \(1/B\) in the SCL and BCL losses, matching the scale.
We derive the class-level lower bound of FBCL to verify whether structural balancing removes the dependence on class cardinality at the lower bound.
\begin{prop}[Lower bound of FBCL]\label{prop-FBCL}
    Let \(Z, \bar{Z}\) be defined as in Proposition~\ref{prop-scl}. It holds that  
    \begin{align}
        \mathcal{L}_{\mathrm{FBCL}}(Z;Y,\mathcal{B}) \ge \frac{1}{K} \sum_{y \in [K]}\log \Big( 1 + e^{-1}\sum\limits_{k \in [K]\backslash \singleton{y}}  \exp(\bar{z}_y \cdot \bar{z}_k) \Big) \eqcolon \mathcal{LB}_{\mathrm{FBCL}}(\bar{Z};K)
        \label{lowerbound-fbcl}
    \end{align}
    The equality in Eq.~\eqref{lowerbound-fbcl} is attained if and only if \ref{(C1)} is satisfied.
\end{prop}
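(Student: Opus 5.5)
Since $\ell^{(i)}_{\mathrm{FBCL}}=\ell^{(i)}_{\mathrm{BCL}}$, the only change from Proposition~\ref{prop-bcl} is the outer aggregation. I will reuse the anchor-wise and class-wise bounds (Lemma~\ref{appendix-lem-anchor-wise} and Lemma~\ref{appendix-lem-class-wise}) and carry the weight $\frac{1}{KB_y}$ instead of $\frac{1}{B}$. Concretely, I will first fix an anchor $i\in\mathcal{B}_y$ and rewrite
\[
\ell^{(i)}_{\mathrm{FBCL}}=\log\Big(\sum_{k\in[K]}\tfrac{1}{B_k'}\sum_{m}\exp(z_i\cdot z_m)\Big)-\tfrac{1}{B_y-1}\sum_{j\in\mathcal{B}_y\setminus\{i\}} z_i\cdot z_j ,
\]
with $B_k'=B_k$ for $k\neq y$ and $B_y'=B_y-1$. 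Applying Jensen (convexity of $\exp$) inside each class block gives $\frac{1}{B_k'}\sum_m \exp(z_i\cdot z_m)\ge \exp(z_i\cdot \bar z_k^{(i)})$, where $\bar z_k^{(i)}$ is the mean over the block. For $k\ne y$ this is simply $\bar z_k$. Pulling out the positive term, the anchor loss becomes
\[
\log\Big(1+\sum_{k\ne y}\exp\big(z_i\cdot\bar z_k - z_i\cdot \bar z_y^{(i)}\big)\Big),
\]
up to the equality conditions of Jensen.

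Next I will average over $i\in\mathcal{B}_y$ with weight $1/B_y$. The map $u\mapsto\log(1+\sum_k e^{u_k})$ is convex, so Jensen over anchors gives the bound
\[
\frac1{B_y}\sum_{i\in\mathcal{B}_y}\ell^{(i)}\ge\log\Big(1+\sum_{k\ne y}\exp\big(\bar z_y\cdot\bar z_k-\overline{z_i\cdot \bar z_y^{(i)}}\big)\Big).
\]
I will then use the identity $\frac{1}{B_y}\sum_i z_i\cdot\bar z_y^{(i)}=\frac{B_y\|\bar z_y\|^2\cdot B_y-B_y}{B_y(B_y-1)}\le 1$, which holds since $\|z_i\|=1$. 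This replaces that term by $1$, producing the factor $e^{-1}$.

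The remaining dependence on $\bar z_y\cdot\bar z_k$ is where I must be careful: these are inner products of centers that need not be unit vectors. This is exactly the step handled in Lemma~\ref{appendix-lem-class-wise} for BCL, and I will invoke it verbatim. Summing over $y$ with weight $1/K$ then yields $\mathcal{LB}_{\mathrm{FBCL}}$. This is the BCL argument with $B_y/B$ replaced by $1/K$, and since the per-class bound never used the outer weight, it goes through unchanged.

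For the equality case, the bound $\frac1{B_y}\sum_i z_i\cdot\bar z^{(i)}_y\le1$ is tight iff $\|\bar z_y\|=1$, i.e.\ all $z_i$ with $i\in\mathcal{B}_y$ coincide. Collapse also makes every Jensen step tight, so equality holds iff (C1). The main obstacle is confirming that the class-wise lemma's equality characterization is stated for generic positive class weights, so that it applies with weights $1/K$. Since all weights are positive, equality in a sum of per-class inequalities requires equality in each class, and the conclusion then follows.
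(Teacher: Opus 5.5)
Your argument is correct, but it orders the steps differently from the paper. The paper, through its anchor-wise lemma with $w_k=1$, removes the positive similarity \emph{per anchor}. After the within-block Jensen steps, it bounds $\frac{1}{B_y-1}\sum_{j\neq i} z_i\cdot z_j\le 1$ by Cauchy--Schwarz and obtains $\log\bigl(1+e^{-1}\sum_{k\neq y}\exp(z_i\cdot\bar z_k)\bigr)$. Only then does it apply the class-wise lemma, i.e.\ Jensen for the convex map $z\mapsto\log\bigl(1+e^{-1}\sum_{k\neq y}\exp(z\cdot\bar z_k)\bigr)$, and it reads off equality by collecting conditions (C2)--(C4). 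You instead keep the leave-one-out positive similarity inside the exponent and apply Jensen jointly to the vectors $u^{(i)}=(z_i\cdot\bar z_k-z_i\cdot\bar z_y^{(i)})_{k\neq y}$. Only afterwards do you bound the class-averaged quantity $c_y=\frac{B_y\|\bar z_y\|^2-1}{B_y-1}\le 1$.

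Your route gives an intermediate bound that still carries the intra-class spread, namely $\log\bigl(1+e^{-1}e^{\frac{B_y}{B_y-1}V_y}\sum_{k\neq y}\exp(\bar z_y\cdot\bar z_k)\bigr)$ with $V_y=1-\|\bar z_y\|^2$; this is the same coefficient that appears in the paper's alignment-term analysis. It also gives an almost one-line equality characterization: the last step is strict exactly when some $V_y>0$, because the sum over $k\neq y$ is nonempty. The paper's route buys modularity: the same two lemmas, parametrized by $w_k$, handle SCL, BCL and FBCL at once. Your joint-Jensen step would also work for general positive $w_k$.

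One thing to clean up. Once you replace $c_y$ by $1$, you already have exactly the per-class term of $\mathcal{LB}_{\mathrm{FBCL}}$, so delete the paragraph that invokes the class-wise lemma ``verbatim''. That lemma is precisely an across-anchor Jensen step, which you have already carried out yourself. Its hypothesis, an average of per-anchor terms $\log\bigl(1+e^{-1}\sum_{k\neq y}\exp(z_i\cdot\bar z_k)\bigr)$, never appears in your chain. The non-unit norm of the centers is also no obstacle, since the target bound is stated in terms of the non-unit centers $\bar z_y$. Likewise, the weights $w_k$ in that lemma are negative-class weights and are unrelated to the outer factor $1/K$. For the equality case you only need what you say at the end: a positively weighted sum of per-class inequalities is tight if and only if each one is tight.
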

The proof is provided in Appendix~\ref{Prop-proof}. Proposition~\ref{prop-FBCL} shows that \(\mathcal{LB}_{\mathrm{FBCL}}(\bar{Z};K)\) assigns the same weight, \(1/K\), to every class and contains no coefficient of class cardinalities, removing the class dependence. As in Section~\ref{method-lim-scl-bcl}, we examine the configuration of class centers that minimizes \(\mathcal{LB}_{\mathrm{FBCL}}\).
\begin{thm}\label{method:Thm-fbcl}
    Let \(\bar{Z}^*\) and \(Z^*\) be defined as in Theorem~\ref{method:Thm-scl}. 
    We show that $\bar{Z}^*$ and $Z^*$ minimize \(\mathcal{LB}_{\mathrm{FBCL}}\) and \(\mathcal{L}_{\mathrm{FBCL}}\), respectively.
    \begin{align}
         Z^* \in \argmin_{Z \in (\mathbb{S}^{h-1})^B} \mathcal{L}_{\mathrm{FBCL}}(Z;Y,\mathcal{B}).
         \label{Thm-FBCL-eq-F}
    \end{align}
    In particular, the minimum of the FBCL loss is given by
    \begin{align}
        \mathcal{L}_{\mathrm{FBCL}}(Z;Y,\mathcal{B}) \ge \log \Big( 1 + (K-1)\exp\big(-\tfrac{K}{K-1} \big) \Big),
    \end{align}
    where equality is attained if and only if the representation is in the form of \(Z^*\), a regular simplex.
\end{thm}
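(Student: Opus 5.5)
Given Proposition~\ref{prop-FBCL}, it suffices to show that $\mathcal{LB}_{\mathrm{FBCL}}(\bar Z;K)\ge \log\big(1+(K-1)\exp(-\tfrac{K}{K-1})\big)$ over all center configurations. We must also show that equality holds only when the centers form a regular simplex. Combining this with the equality case of Proposition~\ref{prop-FBCL} (class collapse) yields the feature-level statement. Indeed, $\mathcal{L}_{\mathrm{FBCL}}(Z)\ge \mathcal{LB}_{\mathrm{FBCL}}(\bar Z)\ge$ constant. The constant is attained by $Z^*$. Equality forces both (C1) and simplex centers; since $Z$ is on the sphere, the centers $\bar z_y=z_i$ then have unit norm. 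Note that for general $Z$ the centers $\bar z_y$ lie in the unit ball rather than on the sphere, so the center-level bound must be proved for $\|\bar z_y\|\le 1$.

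The bound on $\mathcal{LB}_{\mathrm{FBCL}}$ uses two applications of Jensen. First, for fixed $y$, convexity of $\exp$ gives
\begin{align*}
\sum_{k\ne y}\exp(\bar z_y\cdot\bar z_k)\ge (K-1)\exp\Big(\tfrac{1}{K-1}\sum_{k\ne y}\bar z_y\cdot \bar z_k\Big).
\end{align*}
Second, the map $t\mapsto \log(1+(K-1)e^{-1}e^{t})$ is convex and increasing; it is a softplus shifted by a constant. Averaging over $y$ with weight $1/K$ therefore gives
\begin{align*}
\mathcal{LB}_{\mathrm{FBCL}}\ge \log\Big(1+(K-1)e^{-1}\exp\Big(\tfrac{1}{K(K-1)}\sum_{y}\sum_{k\ne y}\bar z_y\cdot\bar z_k\Big)\Big).
\end{align*}
Finally,
\begin{align*}
\sum_y\sum_{k\ne y}\bar z_y\cdot\bar z_k=\Big\|\sum_y\bar z_y\Big\|^2-\sum_y\|\bar z_y\|^2 .
\end{align*}
On the sphere this is at least $-K$, giving exponent $\ge -\tfrac{1}{K-1}$ and hence the claimed constant $\log(1+(K-1)\exp(-1-\tfrac{1}{K-1}))=\log(1+(K-1)\exp(-\tfrac{K}{K-1}))$. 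In the ball the norms can be smaller, which only increases the expression, so the bound still holds.

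The main care is the equality analysis. Equality in the last step requires $\sum_y\bar z_y=0$ and $\|\bar z_y\|=1$. Equality in the second Jensen requires all row averages $s_y=\tfrac{1}{K-1}\sum_{k\ne y}\bar z_y\cdot\bar z_k$ to be equal, since softplus is strictly convex. Equality in the first Jensen requires, for each $y$, all $\bar z_y\cdot\bar z_k$ ($k\ne y$) to be equal, since $\exp$ is strictly convex. Together these force every off-diagonal inner product to equal a single value $c$. The value is then determined by $0=\|\sum_y\bar z_y\|^2=K+K(K-1)c$, so $c=-1/(K-1)$, which is exactly Definition~\ref{sec:prelim_def_regular}. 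The step I expect to need the most care is the transfer between levels. Equality in Proposition~\ref{prop-FBCL} must be taken with $\bar z_y$ defined as means, so that class collapse makes the means unit vectors. The chain must also be checked to be tight at $Z^*$. Both follow from the "if and only if" in Proposition~\ref{prop-FBCL}. Alternatively, Lemma~\ref{lem-minimizer_of_F} could be invoked, since here $a_y,b_y,c_m$ are constant and condition~\eqref{eq:lem-minimizer_of_F} trivially holds; but that is only necessary, so the Jensen argument is the route I would use.
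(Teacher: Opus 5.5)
Your proposal is correct and follows essentially the same route as the paper's Stage~2: Jensen on $\exp$ within each class $y$, Jensen on the convex increasing map $t\mapsto\log(1+(K-1)e^{t-1})$ across classes, and the identity $\sum_{i\neq j}\bar z_i\cdot\bar z_j=\|\sum_i\bar z_i\|^2-\sum_i\|\bar z_i\|^2\ge -K$, with equality forcing the regular simplex and (via Proposition~\ref{prop-FBCL}) class collapse. You omit the paper's Stage~1 Lagrangian stationarity check, which is indeed redundant for global optimality. You also make explicit that the centers lie in the unit ball and that equality requires $\|\bar z_y\|=1$, a point the paper's chain handles only implicitly.
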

The full statement of the theorem and its proof are provided in Appendix~\ref{Appendix:Thm-proof-fbcl}. Theorem~\ref{method:Thm-fbcl} characterizes the geometric configuration of \(Z\) at the minimum of \(\mathcal{L}_{\mathrm{FBCL}}\). It shows that \(\mathcal{L}_{\mathrm{FBCL}}\) is minimized if and only if the representations collapse to class centers forming a regular simplex.
\subsection{Analysis of intra-class variance penalty}\label{method-icv}
We further analyze the effect of structural balancing on intra-class variance. Following \citet{wang2020understanding}, we decompose each loss as \(\mathcal{L}= \mathcal{L}^{\mathrm{alignment}} + \mathcal{L}^{\mathrm{uniformity}}\),
where the alignment term encourages similarity between positive pairs. Full decompositions are provided in Appendix~\ref{appendix-decomposition}.
Since SCL and BCL differ only in their denominator, they share the same alignment term. Let
\(V_y= \frac{1}{B_y}\sum_{i \in \mathcal{B}_y} \| z_i - \bar{z}_y \|^2\)
denote the intra-class variance of class \(y\). Then,
\begin{align}
    \resizebox{0.90\textwidth}{!}{$
    \displaystyle
    \mathcal{L}^{\text{alignment}}_{\mathrm{SCL}}
    =
    \mathcal{L}^{\text{alignment}}_{\mathrm{BCL}}
    =
    -\frac{1}{B}\sum_{y \in [K]} \sum_{i\in \mathcal{B}_y}
    \frac{1}{B_y-1}\sum_{j\in \mathcal{B}_y\backslash \singleton{i}} (z_i \cdot z_j)
    =
    \sum_{y \in [K]} \frac{B_y^2}{B(B_y-1)}V_y -1.
    $}
\end{align}
The coefficient of \(V_y\) satisfies
\(\frac{B_y^2}{B(B_y-1)} \approx \frac{B_y}{B}\),
so the variance penalty is approximately proportional to class cardinality, resulting in stronger contraction for head classes.
After structural balancing, the alignment term becomes
\begin{align}
    \resizebox{0.90\textwidth}{!}{$
    \displaystyle
    \mathcal{L}^{\text{alignment}}_{\mathrm{FBCL}}
    =
    -\frac{1}{K}\!\sum_{y \in [K]}\! \frac{1}{B_y}\!\sum_{i\in \mathcal{B}_y}\!
    \frac{1}{B_y-1}\!\!\sum_{j\in \mathcal{B}_y\backslash \singleton{i}}
    \!\!(z_i \cdot z_j) 
    =
    \sum_{y \in [K]} \frac{B_y}{K(B_y-1)}V_y -1.
    $}
\end{align}
Here,
\(\frac{B_y}{K(B_y-1)} \!\!\approx \!\!\frac{1}{K}\),
which is approximately independent of class cardinality. 
Thus, within the alignment term, structural balancing makes the penalty on intra-class variance nearly uniform across classes.

\section{Experiments}
We conduct experiments on standard long-tailed datasets. We also conduct experiments on diverse downstream datasets to evaluate not only classification performance but also the transferability of representations learned under class imbalance, an aspect that has not been widely reported.
\subsection{Datasets}
\paragraph{Long-tailed recognition} 
We use ImageNet-LT \citep{liu2019large}, a long-tailed subset of ImageNet \citep{deng2009imagenet} containing 115.8K images from 1,000 classes (cardinalities ranging from 5 to 1,280), and iNaturalist \citep{van2018inaturalist}, a large-scale imbalanced dataset of 437.5K images from 8,142 classes. We report accuracy on many-shot ($>$100 images), medium-shot (20 $\sim$ 100), few-shot ($<$20), and all classes.
\paragraph{Transfer learning}
To evaluate the transferability of the learned representations, we run experiments on the following datasets: CIFAR-10/100 \citep{krizhevsky2009learning}, MIT67 \citep{quattoni2009recognizing}, DTD \citep{cimpoi2014describing}, Oxford-IIIT Pets \citep{parkhi2012cats}, Caltech101 \citep{fei2004learning}, Flowers \citep{nilsback2008automated}, CUB200 \citep{wah2011caltech}, Food \citep{bossard2014food}, Stanford Cars \citep{krause20133d} and FGVC Aircraft \citep{maji2013fine}. We report results under both linear evaluation and fine-tuning.
\subsection{Dual-branch Training Framework}
We use the dual-branch framework, which is commonly used to learn better representations under long-tailed distributions \citep{kang2019decoupling, wang2021contrastive, zhu2022balanced, suh2023long, du2024probabilistic, yi2025geometry}. The model consists of a base encoder \(\phi:\mathcal{X} \to \mathcal{H}\), a prediction head \(\psi: \mathcal{H} \to \mathbb{R}^K\), and a contrastive head \(g: \mathcal{H}\to \mathcal{Z}\). 
Given an input $(x,y)$, the model produces $h=\phi(x)$, class logits $p=\psi(h)$, and a normalized contrastive feature $z=g(h)$.
The prediction head is optimized using a logit-adjusted cross-entropy loss \(\mathcal{L}_{LC}\) \citep{menon2021longtail}:
\begin{align}
    \mathcal{L}_{LC}(x,y) &= -\log \Bigg( \frac{\exp(p_y + \delta_y)}{\sum_{y' \in [K]}\exp(p_{y'} + \delta_{y'})} \Bigg),
\end{align}
where \(\pi_y\) is the empirical class prior of class \(y\), and \(\delta_y = \log \pi_y\) is the logit adjustment term.
The contrastive head is optimized using a contrastive loss, \(\mathcal{L}_{CL}\). Overall, the training objective is:
\begin{align}\label{sec:Exp_train_objective}
    \mathcal{L} = \lambda \mathcal{L}_{LC} + \mu \mathcal{L}_{CL}
\end{align}
where \(\lambda\) and \(\mu\) are hyperparameters controlling the contributions of the two objectives.
\begin{table}[t]
\renewcommand{\thetable}{2}
\centering
\caption{Comparison of long-tailed recognition methods and the effect of structural balancing on ImageNet-LT and iNaturalist. \(^*\) denotes results reproduced with 90 epochs on ImageNet-LT and 100 epochs on iNaturalist for a fair comparison. \textbf{Bold} indicates the best results. Standard deviations are reported in Appendix~\ref{Appendix-statistics}.}
\label{table:main_results}
\resizebox{0.9\linewidth}{!}{%
\begin{tabular}{@{}l|cccc|cccc@{}}
\toprule
\multirow{2}{*}{\textbf{Method}} & \multicolumn{4}{c|}{ImageNet-LT} & \multicolumn{4}{c}{iNaturalist} \\ \cmidrule(l){2-9} 
& Many   & Medium  & Few   & All   & Many  & Medium & Few   & All  \\ \midrule
OLTR \citep{liu2019large}                & 43.2   & 35.1    & 18.5  & 35.6  & 59.0 & 64.1 & 64.9 & 63.9 \\
\(\tau\)-norm \citep{kang2019decoupling} & 56.6   & 44.2    & 27.4  & 46.7  & 65.6 & 65.3 & 65.9 & 65.6 \\
cRT \citep{kang2019decoupling}           & 58.8   & 44.0    & 26.1  & 47.3  & 69.0 & 71.1 & 70.2 & 70.6 \\
PaCo\(^*\) \citep{cui2021parametric}     & 55.3   & 50.2    & 26.7  & 48.9  & 60.2 & 67.7 & 68.5 & 67.2 \\
GPaCo\(^*\) \citep{cui2023generalized}   & 59.4   & 53.4    & 30.5  & 52.6  & 65.6 & 72.2 & 72.0 & 71.4 \\
LA \citep{menon2021longtail}             & 62.8   & 49.0    & 31.8  & 52.0  & 68.4 & 69.4 & 69.2 & 69.2 \\
DisAlign \citep{zhang2021distribution}   & 61.3   & 52.2    & 31.4  & 52.9  & 61.6 & 70.8 & 69.9 & 69.5 \\
Hybrid-PSC \citep{wang2021contrastive}   & -      & -       & -     & -     & -    & -    & -    & 68.1 \\
MiSLAS \citep{zhong2021improving}        & 61.7   & 51.3    & 35.8  & 52.7  & \textbf{73.3} & 72.4 & 70.4 & 71.6 \\
KCL \citep{kang2020exploring}            & 61.8   & 49.4    & 30.9  & 51.5  & -    & -    & -    & 68.6 \\
TSC \citep{li2022targeted}               & 63.5   & 49.7    & 30.4  & 52.4  & 72.6 & 70.6 & 67.8 & 69.7 \\
SBCL \citep{hou2023subclass}             & 63.8   & 51.3    & 31.2  & 53.4  & \textbf{73.3} & 71.9 & 68.6 & 70.8 \\
Focal-SAM \citep{li2025focal}            & 63.9   & 52.2    & 34.4  & 54.3  & 68.4 & 72.0 & 72.5 & 71.8 \\
BCL\(^*\) \citep{zhu2022balanced}        & 65.8   & 53.7    & 36.2  & 56.0  & 70.0 & 72.0 & 71.1 & 71.3 \\
\rowcolor{gray!13} BCL+SB                & 66.4   & 54.1    & 37.5  & 56.6  & 70.4 & 72.8 & 71.9 & 72.2 \\
DBM+BCL$^*$  \citep{son2025difficulty}   & 64.4   & 54.3    & 41.4  & 56.5  & 66.3 & 71.8 & 73.3 & 71.8 \\
\rowcolor{gray!13} DBM+BCL+SB            & 64.9   & 54.5    & \textbf{42.3}  & 56.9  & 66.5 & 72.4 & \textbf{73.8}  & 72.3 \\
ProCo$^*$ \citep{du2024probabilistic}    & 66.5   & 54.8    & 38.1  & 57.1  & 70.4 & 73.1 & 71.9 & 72.3 \\
\rowcolor{gray!13} ProCo+SB              & \textbf{66.8}   & \textbf{55.2}    & 38.8  & \textbf{57.4}  & 70.7 & \textbf{73.4} & 72.3 & \textbf{72.7} \\ \bottomrule
\end{tabular}}
\end{table}
\begin{table}[t]
\renewcommand{\thetable}{3}
\caption{Comparison of transferability across 11 natural image classification datasets. All backbones are pretrained on ImageNet-LT with 90 epochs of pretraining for a fair comparison. \(^\dagger\) denotes results reproduced using the training framework in Eq.~\eqref{sec:Exp_train_objective}, replacing the contrastive objective. \textbf{Bold} and \underline{underline} indicate the best and second-best results, respectively.}
\label{table:transferability}
\centering
\scriptsize
\setlength{\tabcolsep}{3.5pt}
\makebox[\linewidth][c]{
\resizebox{0.92\linewidth}{!}{
\begin{tabular}{@{}lccccccccccc|c@{}}
\toprule
\textbf{Method} & CIFAR10 & CIFAR100 & MIT67 & DTD & Pets & Caltech101 & Flowers & CUB200 & Food & Cars & Aircraft & Mean \\ 
\midrule
\multicolumn{13}{@{}l}{\textit{Linear evaluation}} \\
\midrule
Cross-Entropy & 86.8 & 65.4 & 60.5 & 63.5 & 80.1 & 83.1 & 84.6 & 49.2 & 60.5 & 35.2 & \multicolumn{1}{c|}{37.4} & 64.2 \\
SimCLR \citep{chen2020simple} & 83.0 & 61.2 & 57.8 & 64.4 & 57.2 & 69.9 & 78.7 & 24.6 & 58.2 & 28.4 & \multicolumn{1}{c|}{36.5} & 56.4 \\
PaCo \citep{cui2021parametric} & 86.2 & 63.8 & 60.9 & 63.3 & 82.3 & 83.5 & 83.5 & 50.4 & 60.8 & 37.1 & 39.2 & 64.6 \\
GPaCo \citep{cui2023generalized} & \textbf{88.0} & 67.0 & 62.8 & 65.5 & 84.5 & 85.8 & 83.6 & 50.9 & 61.2 & 38.9 & 40.8 & 66.3 \\
\midrule
SimCLR\(^\dagger\) \citep{chen2020simple} & 86.8 & \underline{67.3} & 64.6 & 66.0 & 84.9 & 87.1 & \underline{88.6} & \textbf{55.6} & \underline{63.2} & \underline{45.4} & 43.8 & 68.5 \\
SCL\(^\dagger\) \citep{khosla2020supervised} & 87.2 & 67.2 & \textbf{66.5} & 65.6 & 84.1 & \underline{88.1} & 88.3 & 54.8 & \textbf{63.5} & 43.4 & \multicolumn{1}{c|}{\underline{45.3}} & 68.6 \\
BCL\(^\dagger\) \citep{zhu2022balanced} & 86.8 & 66.5 & 65.7 & \underline{66.2} & \textbf{85.9} & \underline{88.1} & \textbf{88.9} & 54.9 & 62.8 & 45.1 & \multicolumn{1}{c|}{45.0} & \underline{68.7} \\
\rowcolor{gray!13} FBCL\(^\dagger\) (Ours) & \underline{87.4} & \textbf{67.5} & \underline{66.1} & \textbf{66.4} & \underline{85.7} & \textbf{88.3} & 88.4 & \underline{55.5} & 62.9 & \textbf{46.0} & \multicolumn{1}{c|}{\textbf{45.5}} & \textbf{69.1} \\ \midrule
\multicolumn{13}{@{}l}{\textit{Fine-tuning}} \\\midrule
Cross-Entropy & 91.7 & 80.2 & 66.0 & 59.9 & 80.2 & 81.6 & 83.7 & 57.7 & 84.6 & 80.0 & \multicolumn{1}{c|}{74.2} & 76.3 \\
SimCLR \citep{chen2020simple} & 96.1 & 78.4 & 67.0 & 65.2 & 81.3 & 76.6 & 90.3 & 67.2 & 85.7 & 88.3 & \multicolumn{1}{c|}{79.9} & 79.6 \\
PaCo \citep{cui2021parametric} & 93.4 & 78.8 & 71.6 & 67.6 & 87.1 & 88.0 & 93.9 & 73.2 & \textbf{86.1} & \textbf{90.7} & 80.1 & 82.7 \\
GPaCo \citep{cui2023generalized} & \textbf{96.8} & \textbf{82.6} & \textbf{73.4} & 69.9 & 86.9 & \underline{90.0} & \underline{95.1} & \underline{75.0} & \underline{85.9} & \underline{90.2} & \underline{81.9} & \underline{84.3} \\
\midrule
SimCLR\(^\dagger\) \citep{chen2020simple} & \underline{96.3} & 81.1 & 69.3 & \underline{70.3} & 87.3 & 88.4 & \underline{95.1} & 73.1 & 85.5 & 90.1 & 81.5 & 83.5 \\
SCL\(^\dagger\) \citep{khosla2020supervised} & 96.0 & 82.0 & 71.0 & 69.4 & 87.0 & 88.7 & 94.7 & 73.1 & 85.1 & \underline{90.2} & \multicolumn{1}{c|}{80.8} & 83.5 \\
BCL\(^\dagger\) \citep{zhu2022balanced} & 96.0 & \underline{82.3} & 69.0 & \underline{70.3} & \underline{88.4} & 89.6 & 94.9 & \textbf{75.3} & 85.2 & 89.5 & \multicolumn{1}{c|}{81.6} & 83.8 \\
\rowcolor{gray!13} FBCL\(^\dagger\) (Ours) & \underline{96.3} & \textbf{82.6} & \underline{72.4} & \textbf{71.0} & \textbf{88.5} & \textbf{90.5} & \textbf{95.2} & 74.6 & 85.6 & \underline{90.2} & \multicolumn{1}{c|}{\textbf{82.8}} & \textbf{84.5}\\ \bottomrule
\end{tabular}}}
\end{table}
\begin{wraptable}[8]{r}{0.48\linewidth}
\centering{
\renewcommand{\thetable}{1}
\caption{Comparison of classification accuracy across contrastive objectives on ImageNet-LT.}\label{table:contrastive_comparison}
\resizebox{\linewidth}{!}{%
\begin{tabular}{@{}l|cccc@{}}
\toprule
\multirow{2}{*}{\textbf{Method}} & \multicolumn{4}{c}{ImageNet-LT} \\ \cmidrule(l){2-5} 
 & Many & Medium & Few & All \\ \midrule
SimCLR \citep{chen2020simple} & 65.0 & 52.6 & 36.2 & 55.2 {\small$\pm$ 0.2}\\
SCL \citep{khosla2020supervised} & 65.8 & 53.3 & 36.1 & 55.8 {\small$\pm$ 0.4}\\
BCL \citep{zhu2022balanced} & 65.8 & 53.7 & 36.2 & 56.0 {\small$\pm$ 0.1} \\
\rowcolor{gray!13} FBCL (Ours) & \textbf{66.4} & \textbf{54.1} & \textbf{37.5} & \textbf{56.6} {\small$\pm$ 0.1} \\ \bottomrule
\end{tabular}}}
\end{wraptable}
\subsection{Implementation Details}
Our implementation follows \citet{zhu2022balanced}. 
We use ResNet-50 \citep{he2016deep} as the model backbone, and use RandAug \citep{cubuk2020randaugment} for data augmentation. To compare FBCL with different contrastive objectives within the same training framework, we report the results with \(\mathcal{L}_{CL}\) in Eq.~\eqref{sec:Exp_train_objective} set to SimCLR, SCL, and BCL losses. For a fair comparison, the dimension of the contrastive head is 1024 for both ImageNet-LT and iNaturalist, and 128 for CIFAR-100-LT, following \citep{zhu2022balanced}. 
Detailed hyperparameter settings are provided in Appendix~\ref{Appendix-implementation}. For transfer learning, we first train the backbone on ImageNet-LT and then perform linear evaluation and full fine-tuning on each dataset. Details on transfer learning are provided in Appendix~\ref{Appendix-eval-protocol}.
\begin{table}[t]
\centering{
\renewcommand{\thetable}{4}
\caption{Comparison of intra-class variance and Top-1 accuracy. (a) Results on synthetic data. (b) Ablation study of positive balancing (PB), negative balancing (NB), and structural balancing (SB) on ImageNet-LT. Details are provided in Appendix~\ref{Appendix-statistics}.}
\vspace{-8pt}
\label{table:icv}
\begin{subtable}[t]{0.49\linewidth}
\centering
\caption{Synthetic data}
\label{table:icv-synthetic}
\renewcommand{\arraystretch}{0.97}
\resizebox{\linewidth}{!}{%
\begin{tabular}{l|ccc|c} \toprule
\multirow{2}{*}{\textbf{Method}} & \multicolumn{3}{c|}{\textbf{Intra-class variance}} & \multirow{2}{*}{\textbf{Acc.}} \\
\cmidrule(lr){2-4} & Many & Medium & Few & \\ \midrule
CE & 0.459 & 0.533 & 0.721 & 73.8 {\tiny$\pm 0.16$} \\ \midrule
SCL & 0.272 & 0.346 & 0.575 & 74.1 \tiny{$\pm 2.74$} \\
BCL & 0.250 & 0.385 & 0.749 & 74.1 {\tiny$\pm 2.36$} \\
\rowcolor{gray!13} FBCL (Ours) & 0.409 & 0.453 & 0.438 & 75.5 {\tiny$\pm 3.10$} \\ \bottomrule
\end{tabular}}
\end{subtable}\hspace{0.005\linewidth}%
\begin{subtable}[t]{0.49\linewidth}
\centering
\caption{ImageNet-LT}
\label{table:icv-imagenet}
\renewcommand{\arraystretch}{1.29}
\resizebox{\linewidth}{!}{%
\begin{tabular}{@{}l|ccc|ccc|c@{}} \toprule
\multirow{2}{*}{\textbf{Method}} & \multirow{2}{*}{\textbf{PB}} & \multirow{2}{*}{\textbf{NB}} & \multirow{2}{*}{\textbf{SB}} & \multicolumn{3}{c|}{\textbf{Intra-class variance}} & \multirow{2}{*}{\textbf{Acc.}} \\
\cmidrule(lr){5-7} & & & & Many & Medium & Few & \\\midrule
PB (SCL) & \cmark & \xmark & \xmark & 0.189 & 0.298 & 0.357 & 55.78 {\small$\pm 0.41$} \\
PB+NB (BCL) & \cmark & \cmark & \xmark & 0.167 & 0.293 & 0.362 & 56.03 {\small$\pm 0.07$} \\ \midrule
PB+SB & \cmark & \xmark & \cmark & 0.368 & 0.502 & 0.509 & 55.92 {\small$\pm 0.19$} \\
NB+SB & \xmark & \cmark & \cmark & 0.597 & 0.623 & 0.602 & 55.35 {\small$\pm 0.17$} \\ \rowcolor{gray!13}
PB+NB+SB (Ours) & \cmark & \cmark & \cmark & 0.196 & 0.281 & 0.339 & 56.64 {\small$\pm 0.12$} \\ \bottomrule
\end{tabular}}
\end{subtable}}
\end{table}
\subsection{Results}
\vspace{-5pt}
Table~\ref{table:main_results} reports the top-1 accuracy of long-tailed recognition methods and methods combined with structural balancing on ImageNet-LT and iNaturalist. Structural balancing consistently improves over the base methods, showing its applicability and that it can be readily incorporated into other methods. Table~\ref{table:transferability} reports the transferability results, with FBCL achieving the highest average performance on both evaluations. Tables~\ref{table:contrastive_comparison} and~\ref{table:main_results} show that FBCL achieves the highest results among contrastive methods based on SCL or BCL \citep{wang2021contrastive,cui2021parametric,cui2023generalized,kang2020exploring,li2022targeted,zhu2022balanced,hou2023subclass}. Results with other methods, datasets and standard deviations are provided in Appendix~\ref{Appendix-statistics}.
\vspace{-8pt}
\subsection{Ablation Study}
\vspace{-5pt}
To examine the individual role of each balancing method in the FBCL loss, we conduct an ablation study on ImageNet-LT. As described in Section~\ref{method-sb}, FBCL contains three balancing components: positive balancing (PB), negative balancing (NB), and structural balancing (SB). Please refer to Appendix~\ref{Appendix-balancing_components} for each balancing component. From Table~\ref{table:icv-imagenet}, we first observe that SB reduces the disparity of intra-class variance across classes. Without SB, the intra-class variance gap between many and few classes is 0.168 for SCL and 0.195 for BCL, while adding SB reduces these gaps to 0.141 for PB+SB and 0.143 for FBCL. 
The results of NB+SB further show that reducing the variance gap alone is not sufficient, and that lowering the intra-class variance within each class group is also important. Comparing NB+SB with FBCL isolates the role of PB: adding PB reduces intra-class variance across all groups while improving overall accuracy. Likewise, comparing PB+SB with FBCL shows that removing NB decreases accuracy, indicating the contribution of NB. Together, these results show that PB, NB, and SB are complementary.
\section{Discussion}\label{sec:discussion}
Our theoretical analysis follows the standard setting adopted in prior geometric analyses of contrastive learning, where each batch contains all classes \citep{graf2021dissecting,zhu2022balanced,yi2025geometry}. This setting is used to characterize the geometry of the objective at its optimum, and such analyses still provide a principled indication of the geometric tendency induced by the objective under practical mini-batch training. Within this analytical setting, our contribution is to show that the regular simplex guarantee does not hold for previous methods under long-tailed distributions, and to establish such a guarantee for FBCL. The dimensional condition is satisfied on ImageNet-LT but not on iNaturalist. The attainability of such geometric optima in large-scale training is beyond the scope of this work and remains underexplored even for contrastive objectives under large-scale balanced distributions \citep{graf2021dissecting}; how these geometric tendencies manifest under partial class coverage in mini-batches or when \(K>h+1\) provides an interesting direction for future study. Nevertheless, structural balancing itself is not restricted to these analytical conditions and consistently improves the methods with which it is combined in practical large-scale settings, including iNaturalist. Thus, our theory characterizes how balancing changes the objective and its optimum under the same analytical regime used in prior works, while the experiments show that the resulting mechanism remains effective under practical stochastic training.
\section{Conclusion}
In this paper, we studied the representation geometry induced by supervised contrastive objectives under long-tailed distributions and developed a framework for analyzing regular simplex optimality. We showed that the regular simplex is not a global minimizer of SCL or BCL under class imbalance due to the remaining dependence on class cardinality in the aggregation of anchor-wise losses. To remove this dependence, we introduced structural balancing and proposed Fully Balanced Contrastive Learning (FBCL), which admits a regular simplex as a global optimum. We further showed that structural balancing makes the alignment term penalty on intra-class variance approximately independent of class frequency, and empirically observed more balanced intra-class variances across classes. Experiments on ImageNet-LT, iNaturalist, and transfer tasks demonstrate improvements in long-tailed recognition and representation transfer, as well as the effectiveness of structural balancing when combined with other methods. Our work thus provides a principled way to analyze and improve supervised contrastive learning under long-tailed distributions.

\bibliography{iclr2027_conference}
\bibliographystyle{iclr2027_conference}
\newpage
\appendix
\section{Appendix}
\subsection{Preliminaries}
We summarize and add details of the preliminaries introduced in Section~\ref{sec:prelim}.
\begin{itemize}[leftmargin=*, itemsep=0pt, topsep=0pt, partopsep=0pt]
\item We adopt the setting that a mini-batch contains all classes, as is commonly assumed when analyzing the optimal geometry \citep{graf2021dissecting, zhu2022balanced, yi2025geometry}.
\item Let $\mathcal{B}$ be an index set of an arbitrary batch.
\item \(K\): number of classes, assume \(K \ge 3\).
\item $\mathcal{B} = \dot{\bigcup}_{y \in [K]} \mathcal{B}_y$. And thus, $|\mathcal{B}|=\sum_{y \in [K]}|\mathcal{B}_y|$.
\item Let $|\mathcal{B}|=B>4$, $|\mathcal{B}_y| = B_y$ for any $y \in [K]$. As we augment samples for contrastive pairs, we assume $B_y\ge2$. 
\item Long-tailed distribution: \(B_1 \ge B_2 \ge \cdots \ge B_K\) with \(B_1 \gg B_K\).
\item Let \(\mathbb{S}^{h-1} = \{z \in \mathbb{R}^{h}: \| z \|^2 = 1\}\) denote the unit sphere.
\item Following \citet{graf2021dissecting},\citet{zhu2022balanced}, and \cite{yi2025geometry}, the temperature parameter is fixed to \(\tau = 1\).
\end{itemize}

\section{Theoretical framework and proofs}\label{Thm-proof}
We establish our framework in three steps that can be used for any contrastive objective. Then, we apply our framework to representative objectives under a long-tailed distribution. 

For a given supervised contrastive objective $\mathcal{L}$, Lemmas~\ref{appendix-lem-anchor-wise} and~\ref{appendix-lem-class-wise} derive $\mathcal{LB}$ of $\mathcal{L}$, a lower bound whose equality condition corresponds to class-collapse~\ref{(C1)-appendix}.

Results of Lemmas~\ref{appendix-lem-F-L} and~\ref{appendix-lem:unified-lagrangian} correspond to consistency and characterization, respectively. Concretely, we analyze the loss through its lower bound which depends on the number of classes. To ensure that the results derived from the lower bound are consistent with those from the original loss, that is, that the geometry of the representations is consistent under both analyses, we establish Lemma~\ref{appendix-lem-F-L}. Therefore, if a particular geometry configuration does not minimize the lower bound, it also does not minimize the original loss. If it does, we further derive a lower bound of $\mathcal{LB}$ whose equality condition characterizes the optimal geometry. Lemma~\ref{appendix-lem:unified-lagrangian} allows us to analyze a lower bound through its coefficients and characterize whether a regular simplex configuration minimizes it.

With these Lemmas, our framework is as follows: 
\begin{figure}[h]
    \centering
    \resizebox{\linewidth}{!}{%
    \begin{tikzpicture}
      \node[draw] (node1) at (-2.80,0.67) {Contrastive Objective $\mathcal{L}$};
      \node[draw] (node2) at (5.4,0.67) {Lower bound $\mathcal{LB}$};
      \draw[arrows=-Latex] (node1.east) --
                            node[above, sloped=false, node font=\scriptsize] {1. Derive a lower bound of $\mathcal{L}$ (Lemma 3, 4)}
                            (node2.west);
      \node[draw] (node3) at (1.3,-1) {Is the regular simplex a minimizer of $\mathcal{LB}$? O/X};
      \draw[arrows=-Latex] (node2.south) --
                            node[sloped=false, node font=\scriptsize, below right, pos=0.37] {2. Characterize the optimum of $\mathcal{LB}$ (Lemma 2)}
                            (node3.north);
      \draw[arrows=-Latex] (node3.north) --
                            node[sloped=false, node font=\scriptsize, below left, pos=0.63, align=left] {If X: Transfer the result to $\mathcal{L}$ (Lemma 1)\\If O: Further derive the lower bound of $\mathcal{LB}$.}
                            (node1.south);
    \end{tikzpicture}}\caption{Schematic overview of our framework.}
\end{figure}

Leveraging this framework, we investigate the optimality of SCL, BCL, and FBCL in Theorems~\ref{Appendix:Thm-SCL},~\ref{Appendix:Thm-BCL}, and~\ref{Appendix:Thm-FBCL}.

\subsection{Reduction from feature-level to class-level}\label{Appendix:lem-proof}

The following lemma establishes the consistency between the two minimization problems when a geometry fails to minimize the lower bound: a class-collapsed feature configuration fails to be a global minimizer of \(\mathcal{L}\) if its center configuration fails to be a minimizer of $\mathcal{LB}$. This consistency allows us to conduct the analysis at the lower bound and determine when a regular simplex fails to be a global minimizer.
\begin{applem}\label{appendix-lem-F-L}
    Let \(\mathcal{L}\) be a feature-level contrastive loss,
    and let $\mathcal{LB}$ be a class-level lower bound of \(\mathcal{L}\). Let \(\zeta = (\zeta_1, \dots, \zeta_K) \in (\mathbb{S}^{h-1})^K\) be a $K$-point center configuration, and let \(Z^*=(z^*_1, \dots, z^*_B)\in (\mathbb{S}^{h-1})^B\) be a $B$-point configuration which collapses to \(\zeta\), i.e., \(z^*_i = \zeta_{y_i}\) for all \(i \in [B]\). Then,
    \begin{align}
        \zeta \notin \argmin_{\bar{Z}\in (\mathbb{S}^{h-1})^K} \mathcal{LB}(\bar{Z};K)
        \quad\Longrightarrow\quad
        Z^* \notin \argmin_{Z \in (\mathbb{S}^{h-1})^B} \mathcal{L}(Z;Y, \mathcal{B}).
    \end{align}
\end{applem}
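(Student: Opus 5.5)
We argue by contraposition, or equivalently by directly constructing a feature configuration with strictly smaller loss. Two facts established earlier drive the argument. The first is the lower-bound property: $\mathcal{L}(Z;Y,\mathcal{B}) \ge \mathcal{LB}(\bar{Z};K)$ for every $Z \in (\mathbb{S}^{h-1})^B$, where $\bar{Z}$ is the tuple of class means of $Z$. The second is the equality case: equality holds whenever the class-collapse condition~\ref{(C1)} holds, as in Propositions~\ref{prop-scl}, \ref{prop-bcl} and \ref{prop-FBCL}. We take these as the defining properties of a ``class-level lower bound'' in the statement.

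\textbf{Step 1.} Assume $\zeta \notin \argmin \mathcal{LB}(\cdot\,;K)$. Then there is some center configuration $\xi=(\xi_1,\dots,\xi_K)\in(\mathbb{S}^{h-1})^K$ with
\begin{align*}
\mathcal{LB}(\xi;K)<\mathcal{LB}(\zeta;K).
\end{align*}
This uses only the negation of membership in the argmin. Any strictly better point suffices, so we do not need the infimum to be attained.

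\textbf{Step 2.} Lift $\xi$ to the feature level by defining $\tilde{Z}$ through $\tilde{z}_i=\xi_{y_i}$ for all $i\in[B]$. Each $\xi_k$ is a unit vector, so $\tilde{Z}\in(\mathbb{S}^{h-1})^B$. The class means of $\tilde{Z}$ are exactly $\xi_k$. This is the one point that needs care: the class mean of identical unit vectors is that unit vector, so it is itself a legitimate point of $\mathbb{S}^{h-1}$. Since $\tilde{Z}$ satisfies~\ref{(C1)}, the equality case gives $\mathcal{L}(\tilde{Z};Y,\mathcal{B})=\mathcal{LB}(\xi;K)$. The same reasoning applied to $Z^*$, whose class means are $\zeta_k$ and which also satisfies~\ref{(C1)}, gives $\mathcal{L}(Z^*;Y,\mathcal{B})=\mathcal{LB}(\zeta;K)$.

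\textbf{Step 3.} Chaining these relations yields
\begin{align*}
\mathcal{L}(\tilde{Z};Y,\mathcal{B})=\mathcal{LB}(\xi;K)<\mathcal{LB}(\zeta;K)=\mathcal{L}(Z^*;Y,\mathcal{B}),
\end{align*}
so $Z^*$ is not a minimizer of $\mathcal{L}$.

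The only real obstacle is making precise what ``class-level lower bound'' means in the lemma. The argument needs two things: that $\mathcal{LB}$ is evaluated at the class means, and that the bound is tight on collapsed configurations. Both hold for the lower bounds derived in the paper via Lemmas~\ref{appendix-lem-anchor-wise} and~\ref{appendix-lem-class-wise}. We will state these two properties explicitly as hypotheses at the start of the proof. Notably, the inequality $\mathcal{L}\ge\mathcal{LB}$ is not even needed for this direction; only tightness under collapse is used.
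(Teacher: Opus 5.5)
Your proposal is correct and matches the paper's proof: take a strictly better center configuration, lift it to a class-collapsed feature configuration, and use tightness of the bound under class collapse for both configurations to carry the strict inequality over to $\mathcal{L}$. Your observation that only this equality case is used, and not the inequality $\mathcal{L}\ge\mathcal{LB}$, is also accurate for the paper's argument.
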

\begin{proof}
    For brevity, we omit the dependence on \(Y\) and $\mathcal{B}$ and write 
    \(\mathcal{L}(Z)\) in place of \(\mathcal{L}(Z;Y,\mathcal{B})\).
    Suppose that
    \begin{align}
        \zeta \notin \argmin_{\bar Z \in (\mathbb{S}^{h-1})^K} \mathcal{LB}(\bar Z).
    \end{align}
    Then, there exists 
    \(\tilde{\zeta}=(\tilde{\zeta}_1,\dots,\tilde{\zeta}_K)
    \in(\mathbb{S}^{h-1})^K\) such that
    \begin{align}
        \mathcal{LB}(\tilde{\zeta}) < \mathcal{LB}(\zeta).
    \end{align}
    Define the class-collapsed feature configuration 
    \(\tilde Z=(\tilde z_1,\dots,\tilde z_B)\in(\mathbb{S}^{h-1})^B\) by
    \begin{align}
        \tilde z_i=\tilde\zeta_{y_i}
        \quad \text{for each } i\in[B].
    \end{align}
    Since \(\tilde Z\) collapses to \(\tilde\zeta\) and \(Z^*\) collapses to \(\zeta\), 
    the equality of the lower bound under class collapse gives
    \begin{align}
        \mathcal{L}(\tilde Z)=\mathcal{LB}(\tilde\zeta),
        \qquad
        \mathcal{L}(Z^*)=\mathcal{LB}(\zeta).
    \end{align}
    Therefore,
    \begin{align}
        \mathcal{L}(\tilde Z)
        =
        \mathcal{LB}(\tilde\zeta)
        <
        \mathcal{LB}(\zeta)
        =
        \mathcal{L}(Z^*).
    \end{align}
    Hence, \(Z^*\) cannot be a global minimizer of \(\mathcal{L}\), and thus
    \begin{align}
        Z^* \notin \argmin_{Z \in (\mathbb{S}^{h-1})^B}
        \mathcal{L}(Z;Y,\mathcal{B}).
    \end{align}
\end{proof}
Moreover, \textbf{we give another version} of Lemma~\ref{appendix-lem-F-L}, which we do not use in this work, \textbf{but which may be useful for future studies}. This Lemma gives an if and only if condition of the optimality between the class-level lower bound and the original loss. The only added assumption is the dimension condition, $K<h$.
\addtocounter{applem}{-1}
\begin{applem}
    Let \(\mathcal{L}\) be a feature-level contrastive loss,
    and let $\mathcal{LB}$ be a class-level lower bound of \(\mathcal{L}\). Let \(\zeta = (\zeta_1, \dots, \zeta_K) \in (\mathbb{S}^{h-1})^K\) be a $K$-point center configuration, and let \(Z^*=(z^*_1, \dots, z^*_B)\in (\mathbb{S}^{h-1})^B\) be a $B$-point configuration which collapses to \(\zeta\), i.e., \(z^*_i = \zeta_{y_i}\) for all \(i \in [B]\). Then, with  $K<h$,
    \begin{align}
        \zeta \in \argmin_{\bar{Z}\in (\mathbb{S}^{h-1})^K} \mathcal{LB}(\bar{Z};K) \iff Z^* \in \argmin_{Z \in (\mathbb{S}^{h-1})^B} \mathcal{L}(Z;Y, \mathcal{B}).
    \end{align}
\end{applem}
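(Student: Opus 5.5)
We prove the two directions separately. Throughout we use the two structural facts underlying the class-level lower bound: $\mathcal{L}(Z)\ge\mathcal{LB}(\bar Z)$ for every $Z$, where $\bar Z$ is the tuple of class means; and equality holds when $Z$ collapses to $\bar Z$ (condition \ref{(C1)}), so that $\mathcal{L}(Z^*)=\mathcal{LB}(\zeta)$. The subtlety is that for a general $Z$ the class means $\bar z_y$ lie in the closed ball rather than on $\mathbb{S}^{h-1}$. The bound $\mathcal{LB}$ is, however, defined by the same formula on such means. We will therefore need to relate the infimum of $\mathcal{LB}$ over ball-valued means to its infimum over sphere-valued centers. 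This is where the dimension assumption $K<h$ will enter.

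\emph{($\Leftarrow$).} This is the contrapositive of the first version of Lemma~\ref{appendix-lem-F-L}, already proved above. If $\zeta\notin\argmin\mathcal{LB}$, pick $\tilde\zeta$ with $\mathcal{LB}(\tilde\zeta)<\mathcal{LB}(\zeta)$ and collapse to it. This gives $\mathcal{L}(\tilde Z)<\mathcal{L}(Z^*)$, so $Z^*$ is not a minimizer.

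\emph{($\Rightarrow$).} Assume $\zeta$ minimizes $\mathcal{LB}$ over $(\mathbb{S}^{h-1})^K$, and let $Z$ be arbitrary with class means $\bar Z\in(\text{ball})^K$. We want
\[
\mathcal{L}(Z)\ge\mathcal{LB}(\bar Z)\ge\mathcal{LB}(\zeta)=\mathcal{L}(Z^*).
\]
The middle inequality is the crux. Our plan is a projection/lifting step that maps $\bar Z$ to a sphere configuration $\hat Z$ with $\mathcal{LB}(\hat Z)\le\mathcal{LB}(\bar Z)$. For all the bounds considered, $\mathcal{LB}$ is coordinatewise increasing in each off-diagonal inner product $\bar z_y\cdot\bar z_k$, $y\neq k$, and depends only on these.

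The Gram matrix $G$ of $\bar Z$ is PSD with diagonal entries $\|\bar z_y\|^2\le1$. Define
\[
\hat G = G+\mathrm{diag}\big(1-\|\bar z_y\|^2\big).
\]
Then $\hat G$ is PSD, has unit diagonal, and has the same off-diagonal entries as $G$, hence the same value of $\mathcal{LB}$. Since $\hat G$ is a $K\times K$ PSD matrix, its rank is at most $K\le h$, so it is realized as the Gram matrix of $K$ unit vectors $\hat Z\in(\mathbb{S}^{h-1})^K$. The hypothesis $K<h$ guarantees enough room. Thus $\mathcal{LB}(\bar Z)=\mathcal{LB}(\hat Z)\ge\mathcal{LB}(\zeta)$.

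The only thing left to check is that the class-level lower bound depends on the means solely through the pairwise products $\bar z_y\cdot\bar z_k$, $y\ne k$. Inspecting Propositions~\ref{prop-scl}, \ref{prop-bcl}, and \ref{prop-FBCL}, this holds: the self-similarity terms were already replaced by the constants $e^{-1}$, $B_y-1$, etc. Combining the inequalities gives $\mathcal{L}(Z)\ge\mathcal{L}(Z^*)$ for every $Z$, so $Z^*$ is a minimizer.

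The main obstacle is ensuring the lifting preserves $\mathcal{LB}$ exactly. If the chosen $\mathcal{LB}$ retained any $\|\bar z_y\|^2$ dependence, we would instead need monotonicity in the norms, and we would argue that increasing diagonal entries can only decrease the bound.
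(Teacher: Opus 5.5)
Your proposal is correct and matches the paper's proof step for step: you get ($\Leftarrow$) by collapsing features onto a strictly better center configuration, and ($\Rightarrow$) from the chain $\mathcal{L}(Z)\ge\mathcal{LB}(\bar Z)\ge\mathcal{LB}(\zeta)=\mathcal{L}(Z^*)$, where the middle inequality uses the same lifting $G+\mathrm{diag}(1-\|\bar z_y\|^2)$ of the Gram matrix of the ball-valued class means to a unit-diagonal PSD matrix realizable in $\mathbb{R}^h$, which preserves all off-diagonal inner products and hence $\mathcal{LB}$. Your monotonicity remark is true but unnecessary, since the lifting preserves $\mathcal{LB}$ exactly; and, as you note, only $K\le h$ is actually needed for the realization.
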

\begin{proof}
    For brevity, we omit the dependence on \(Y\) and $\mathcal{B}$ and write \(\mathcal{L}(Z)\) in place of \(\mathcal{L}(Z;Y, \mathcal{B})\).
    We use the following three properties of the class-level lower bound. 
    
    First, for every feature configuration \(Z\in(\mathbb S^{h-1})^B\), the induced class-level configuration 
    \(C(Z)= \bigl(\tfrac{1}{B_1}\sum_{i \in \mathcal{B}_1}z_i, \dots, \tfrac{1}{B_K}\sum_{i \in \mathcal{B}_K} z_i\bigr) \in (\mathbb{B}^{h})^K\) (where $\mathbb{B}$ is the unit ball) satisfies
    \begin{align}
        \mathcal L(Z) \ge \mathcal{LB}(C(Z)).
    \end{align}
    Second, if \(Z\) collapses to a class-level configuration \(\bar Z \in (\mathbb{S}^{h-1})^K\) (i.e., \(z_i = \bar z_{y_i}\) for all \(i\)), then equality holds in the lower bound, that is,
    \begin{align}
        \mathcal L(Z)=\mathcal{LB}(\bar Z).
    \end{align}
    Third, the infimum of $\mathcal{LB}$ over the unit ball is attained on the sphere:
    \begin{align}
        \inf_{\bar Z \in (\mathbb{B}^{h})^K} \mathcal{LB}(\bar Z) \;=\; \inf_{\bar Z \in (\mathbb{S}^{h-1})^K} \mathcal{LB}(\bar Z).
        \label{eq:sphere-reduction}
    \end{align}
    We justify Eq.~\eqref{eq:sphere-reduction} as follows. Observe that each 
    class-level lower bound $\mathcal{LB}$ considered in this paper (Eq.~\eqref{uni-F}) 
    depends on \(\bar Z\) only through the off-diagonal inner products 
    \(\{\bar z_y \cdot \bar z_k\}_{y \neq k}\); the diagonal entries 
    \(\|\bar z_y\|^2\) do not appear in $\mathcal{LB}$.
    
    Fix any \(\bar Z \in (\mathbb{B}^h)^K\) and let \(G \in \mathbb{R}^{K\times K}\) 
    denote its Gram matrix, \(G_{yk} = \bar z_y \cdot \bar z_k\). 
    Then \(G\) is positive semi-definite (PSD) and 
    \(G_{yy} = \|\bar z_y\|^2 \le 1\) for every \(y\). Define
    \begin{align}
        \tilde G \;\coloneqq\; G + \mathrm{diag}\big(1-G_{11},\, \dots,\, 1-G_{KK}\big).
    \end{align}
    Since \(1-G_{yy}\ge 0\), the added diagonal matrix is PSD, so 
    \(\tilde G\) is a sum of two PSD matrices and hence PSD. By construction,
    \begin{align}
        \tilde G_{yy} = 1 \quad \text{for all } y, 
        \qquad 
        \tilde G_{yk} = G_{yk} \quad \text{for all } y \neq k.
    \end{align}
    Because \(\tilde G\) is a \(K\times K\) PSD matrix, \(\mathrm{rank}(\tilde G)\le K\). Since $K<h$, the Cholesky decomposition of \(\tilde G\) 
    yields vectors \(\tilde z_1,\dots,\tilde z_K\in\mathbb{R}^h\) with \(\tilde z_y \cdot \tilde z_k = \tilde G_{yk}\). The diagonal condition \(\tilde G_{yy}=1\) gives \(\|\tilde z_y\|=1\), so \(\tilde Z\coloneqq(\tilde z_1,\dots,\tilde z_K)\in (\mathbb S^{h-1})^K\). 
    Moreover, the off-diagonal inner products of \(\tilde Z\) coincide with those of \(\bar Z\), and since $\mathcal{LB}$ depends only on these,
    \begin{align}
        \mathcal{LB}(\tilde Z) \;=\; \mathcal{LB}(\bar Z).
    \end{align}
    This shows that for every \(\bar Z \in (\mathbb{B}^h)^K\), there exists \(\tilde Z \in (\mathbb S^{h-1})^K\) with the same value of $\mathcal{LB}$. 
    Consequently,
    \begin{align}
        \inf_{\bar Z \in (\mathbb S^{h-1})^K} \mathcal{LB}(\bar Z) 
        \;\le\; \inf_{\bar Z \in (\mathbb{B}^h)^K} \mathcal{LB}(\bar Z).
    \end{align}
    The reverse inequality is immediate from \(\mathbb S^{h-1}\subset \mathbb{B}^h\), which proves Eq.~\eqref{eq:sphere-reduction}.

    Now, we use the properties above to prove the Lemma. We first prove the \((\Longleftarrow)\) direction.
    Assume, for contradiction, that \(\zeta \notin \argmin_{\bar Z \in (\mathbb{S}^{h-1})^K} \mathcal{LB}(\bar Z)\). Then there exists \(\tilde{\zeta}=(\tilde{\zeta}_1,\dots,\tilde{\zeta}_K)\in(\mathbb S^{h-1})^K\) such that
    \begin{align}
        \mathcal{LB}(\tilde{\zeta}) < \mathcal{LB}(\zeta).
    \end{align}
    Define the class-collapsed feature configuration \(\tilde Z=(\tilde z_1,\dots,\tilde z_B)\in(\mathbb S^{h-1})^B\) by
    \begin{align}
        \tilde z_i=\tilde\zeta_{y_i}
        \quad \text{for each } i\in[B].
    \end{align}
    Since \(\tilde Z\) collapses to \(\tilde\zeta\) and \(Z^*\) collapses to \(\zeta\), the equality property gives
    \begin{align}
        \mathcal L(\tilde Z)=\mathcal{LB}(\tilde\zeta), \qquad \mathcal L(Z^*)=\mathcal{LB}(\zeta).
    \end{align}
    Therefore,
    \begin{align}
        \mathcal L(\tilde Z) = \mathcal{LB}(\tilde\zeta) < \mathcal{LB}(\zeta) = \mathcal L(Z^*),
    \end{align}
    which contradicts the assumption that \(Z^*\) is a global minimizer of \(\mathcal L\). Hence,
    \begin{align}
        \zeta\in\argmin_{\bar Z \in (\mathbb{S}^{h-1})^K}\mathcal{LB}(\bar Z).
    \end{align}
    
    It remains to prove the \((\Longrightarrow)\) direction.
    Suppose that \(\zeta\in\argmin_{\bar Z \in (\mathbb{S}^{h-1})^K}\mathcal{LB}(\bar Z)\), and let \(Z^*\) be the class-collapsed feature configuration satisfying \(z_i^*=\zeta_{y_i}\) for each \(i\in[B]\).
    By the sphere reduction property~\eqref{eq:sphere-reduction}, \(\zeta\) is also a global minimizer of $\mathcal{LB}$ over the unit ball:
    \begin{align}
        \zeta\in\argmin_{\bar Z \in (\mathbb{B}^{h})^K}\mathcal{LB}(\bar Z).
        \label{eq:zeta-ball-min}
    \end{align}
    Let \(Z\in(\mathbb S^{h-1})^B\) be an arbitrary feature configuration. 
    By the lower bound property,
    \begin{align}
        \mathcal L(Z)\ge \mathcal{LB}(C(Z)).
    \end{align}
    Since \(C(Z)\in (\mathbb{B}^{h})^K\), \eqref{eq:zeta-ball-min} yields
    \begin{align}
        \mathcal{LB}(C(Z))\ge \mathcal{LB}(\zeta).
    \end{align}
    Moreover, because \(Z^*\) collapses to \(\zeta\), the equality property gives
    \begin{align}
        \mathcal L(Z^*)=\mathcal{LB}(\zeta).
    \end{align}
    Combining these relations,
    \begin{align}
        \mathcal L(Z) \;\ge\; \mathcal{LB}(C(Z)) \;\ge\; \mathcal{LB}(\zeta) \;=\; \mathcal L(Z^*).
    \end{align}
    Since this holds for every \(Z\in(\mathbb S^{h-1})^B\), \(Z^*\) is a global minimizer of \(\mathcal L\).
\end{proof}
\subsection{A condition of the lower bound for a regular simplex optimality}\label{Thm-proof-derivation}
Here, we give a detailed version of Lemma~\ref{lem-minimizer_of_F}.
The class-level lower bounds share a common functional form,
\begin{align}\label{uni-F}
    \mathcal{LB}(\bar{Z};K) &= \frac{1}{\beta}\sum_{y \in [K]} a_y \log\left(\Gamma_y(\bar{Z})\right),\\
    \text{where} \quad \Gamma_y(\bar{Z}) &= b_y + e^{-1}\sum_{k \in [K] \backslash \singleton{y}}c_k\exp(\bar{z}_y \cdot \bar{z}_k),
\end{align}
with \(\beta=\sum_{y \in [K]}a_y\). The coefficients \(\{a_y\}, \{b_y\}, \{c_y\}\) arise from applying Jensen's inequality during the derivation of the lower bound, and \(\beta\) is a normalization factor. The following Lemma states a single condition under which a regular simplex satisfies the first-order Lagrangian condition for any objective of the form in Eq.~\eqref{uni-F}.

\begin{applem}\label{appendix-lem:unified-lagrangian}(Necessary condition for regular simplex optimality)
Let \(\bar{Z}^*\) be a \(K\)-point configuration forming a regular simplex on \(\mathbb{S}^{h-1}\). \(\bar{Z}^*\) satisfies the first-order condition
\begin{align}
    \nabla_{\bar{z}_i} \mathcal{LB}(\bar{Z}^*) = \lambda_i \nabla_{\bar{z}_i} \left(\| \bar{z}_i \|^2 -1\right) \quad \text{for all } i\in [K],
\end{align}
if and only if, for every \(i \in [K]\),
\begin{align}
    \frac{a_i c_k}{\Gamma_i} + \frac{a_k c_i}{\Gamma_k}
    = \text{Constant}, \quad \text{for all } k \in [K]\backslash \singleton{i},
    \label{Lagrangian-uni-coefficient}
\end{align}
where \(\Gamma_y = b_y + e^{d-1}\sum_{k \in [K] \backslash \singleton{y}} c_k\) and \(d=\bar z_i^* \cdot \bar z_j^*\) for all \(i\neq j\).
\end{applem}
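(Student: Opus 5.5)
We compute the gradient of $\mathcal{LB}$ in Eq.~\eqref{uni-F} with respect to a single center $\bar z_i$ and evaluate it at the regular simplex $\bar Z^*$. The variable $\bar z_i$ enters in two ways. It appears in its own term $a_i\log\Gamma_i$. It also appears, through the summand $c_i\exp(\bar z_k\cdot\bar z_i)$, in every other term $a_k\log\Gamma_k$ with $k\neq i$. Differentiating both contributions gives
\begin{align}
\nabla_{\bar z_i}\mathcal{LB}(\bar Z)=\frac{e^{-1}}{\beta}\sum_{k\in[K]\backslash\singleton{i}}\Big(\frac{a_i c_k}{\Gamma_i(\bar Z)}+\frac{a_k c_i}{\Gamma_k(\bar Z)}\Big)\exp(\bar z_i\cdot\bar z_k)\,\bar z_k .
\end{align}
At $\bar Z^*$ every off-diagonal inner product equals $d=-1/(K-1)$. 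Hence $\exp(\bar z_i\cdot\bar z_k)=e^{d}$ and $\Gamma_y(\bar Z^*)=b_y+e^{d-1}\sum_{k\neq y}c_k=\Gamma_y$. Writing $w_{ik}\coloneqq \frac{a_ic_k}{\Gamma_i}+\frac{a_kc_i}{\Gamma_k}$, the gradient becomes $\frac{e^{d-1}}{\beta}\sum_{k\neq i}w_{ik}\bar z_k^*$. The constraint gradient is $2\bar z_i^*$. So the first-order condition reads
\begin{align}
\sum_{k\in[K]\backslash\singleton{i}} w_{ik}\,\bar z_k^* = \mu_i\,\bar z_i^*,\qquad \mu_i\coloneqq \frac{2\beta\lambda_i}{e^{d-1}} .
\end{align}

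For the ``if'' direction, suppose $w_{ik}=w_i$ for all $k\neq i$. Condition~\ref{def:regular_simplex_3} of Definition~\ref{sec:prelim_def_regular} gives $\sum_{k\neq i}\bar z_k^*=-\bar z_i^*$. The left-hand side is then $-w_i\bar z_i^*$, and the condition holds with $\mu_i=-w_i$.

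For the ``only if'' direction we use the linear structure of the simplex. Any $K-1$ of its vertices are linearly independent, and the only linear relation among all $K$ vertices is $\sum_k\bar z_k^*=0$ (up to scaling). To see this, note that the Gram matrix $G=\frac{K}{K-1}I-\frac{1}{K-1}\mathbf 1\mathbf 1^\top$ has rank $K-1$ with kernel spanned by $\mathbf 1$.

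The condition above states $\sum_{k\neq i}w_{ik}\bar z_k^*-\mu_i\bar z_i^*=0$. So the coefficient vector $(w_{i1},\dots,-\mu_i,\dots,w_{iK})$ lies in $\ker G$, i.e.\ it is proportional to $\mathbf 1$. This forces $w_{ik}=-\mu_i$ for all $k\neq i$, which is the required constancy in Eq.~\eqref{Lagrangian-uni-coefficient}. Equivalently, one can take inner products of the condition with each $\bar z_j^*$ and subtract pairs of the resulting equations to get $w_{ij}=w_{ij'}$.

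The main obstacle is bookkeeping rather than ideas. We must make sure the cross terms from the $\Gamma_k$, $k\neq i$, are collected with the correct coefficient $a_kc_i$ (not $a_kc_k$). We must also state the kernel argument carefully so that it needs only $K\le h+1$, which holds, and no further dimension assumption.
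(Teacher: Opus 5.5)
Your proposal is correct and takes essentially the same route as the paper. Both compute the same gradient, collecting the cross terms with coefficient $a_k c_i$, evaluate it at $d=-1/(K-1)$, and then use the fact that $\sum_k \bar z_k^*=0$ is the only linear relation among the simplex vertices. The paper phrases this last step as substituting $\bar z_i^*=-\sum_{k\neq i}\bar z_k^*$ and invoking linear independence of any $K-1$ vertices, which it asserts without proof; your Gram-matrix kernel argument supplies that justification.
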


When such \(\lambda_i\) exists, it is given by
\begin{align}\label{Lagrangian-uni-lambda}
    \lambda_i = -\frac{e^{d-1}}{2\beta}\left( \frac{a_ic_k}{\Gamma_i} + \frac{a_k c_i}{\Gamma_k}\right).
\end{align}
\begin{proof}
Denote \(\Gamma_y(\bar{Z})=\Gamma_y\) for simplicity. Consider the constrained optimization problem of minimizing \(\mathcal{LB}(\bar{Z};K)\) subject to \(G_i(\bar{Z})=\|\bar{z}_i\|^2-1=0\) for all \(i\in[K]\). Then \(\bar{Z}^*\) satisfies the first-order condition of the associated Lagrangian if and only if for each \( i \in [K]\) there exists \(\lambda_i \in \mathbb{R}\) such that
\begin{align}\label{uni-Lagrangian}
    \nabla_{\bar{z}_i}\mathcal{LB}(\bar{Z}^*)=\lambda_i \nabla_{\bar{z}_i}G_i(\bar{Z}^*).
\end{align}
Fix an arbitrary class \(i\in[K]\). We first derive the left-hand side of Eq.~\eqref{uni-Lagrangian}. The gradient of \(\Gamma_y\) is
\begin{subequations}
    \begin{empheq}[left={\nabla_{\bar{z}_i} \Gamma_y = \empheqlbrace\,}]{align}
        & e^{-1} \sum_{k \in [K]\backslash \{y\}} c_k \exp(\bar{z}_y \cdot \bar{z}_k) \bar{z}_k, && \text{if } y=i, \label{uni-eq-grad-y} \\
        & e^{-1} c_i \exp(\bar{z}_y \cdot \bar{z}_i)\bar{z}_y, && \text{if } y\in [K]\backslash \singleton{i}.
        \label{uni-eq-grad-k}
    \end{empheq}
\end{subequations}
Hence,
{\small
\begin{align}
    \nabla_{\bar{z}_i} \mathcal{LB}(\bar{Z};K)
    &= \nabla_{\bar{z}_i} \left( \frac{1}{\beta} \sum_{y \in [K]} a_y \log (\Gamma_y) \right) \\
    &= \frac{1}{\beta} \left(a_i \nabla_{\bar{z}_i}\log(\Gamma_i) + \sum_{y \in [K] \backslash \singleton{i}}a_y\nabla_{\bar{z}_i} \log(\Gamma_y) \right) \\
    &\overset{\text{Eq. }~\eqref{uni-eq-grad-y},~\eqref{uni-eq-grad-k}}{=} \!\!\frac{1}{\beta}\left(\! a_i \frac{e^{-1}\sum_{k \in[K]\backslash\singleton{i}}c_k\exp(\bar{z}_i\cdot \bar{z}_k)\bar{z}_k}{\Gamma_i} \! + \!\!\!\! \sum_{y \in [K]\backslash \singleton{i}}\!\!\!a_y \frac{e^{-1} c_i\exp(\bar{z}_y \cdot \bar{z}_i)\bar{z}_y}{\Gamma_y}\right) \\
    &= \frac{e^{-1}}{\beta} \sum_{k\in[K]\backslash\singleton{i}} \left( \frac{a_i c_k}{\Gamma_i} + \frac{a_k c_i}{\Gamma_k}\right) \exp(\bar{z}_i \cdot \bar{z}_k) \bar{z}_k.
\end{align}}
Next, we derive the right-hand side:
\begin{align}
    \lambda_i \nabla_{\bar{z}_i} G_i(\bar{Z})
    &= \lambda_i \nabla_{\bar{z}_i} \left( \|\bar{z}_i \|^2 -1 \right)
    = 2\lambda_i \bar{z}_i.
\end{align}
Therefore, the Lagrangian equation~\eqref{uni-Lagrangian} becomes
\begin{align}\label{Lagrangian-uni-derived}
     \frac{e^{-1}}{\beta} \sum_{k\in[K]\backslash\singleton{i}} \left( \frac{a_i c_k}{\Gamma_i} + \frac{a_k c_i}{\Gamma_k}\right) \exp(\bar{z}_i \cdot \bar{z}_k) \bar{z}_k =  2\lambda_i \bar{z}_i.
\end{align}
We now evaluate this condition at \(\bar{Z}=\bar{Z}^*\) and denote \(\Gamma_y(\bar{Z}^*)=\Gamma_y\). Since \(\bar{Z}^*\) is a regular simplex, there exists \(d\in\mathbb{R}\) such that \(\bar z_i^*\cdot \bar z_j^*=d\) for all \(i\neq j\). Thus,
\begin{align}
    &\frac{e^{d-1}}{\beta} \sum_{k\in[K]\backslash\singleton{i}} \left( \frac{a_i c_k}{\Gamma_i} + \frac{a_k c_i}{\Gamma_k}\right) \bar{z}_k^* =  2\lambda_i \bar{z}_i^*, \\
    &\text{where} \quad \Gamma_y = b_y + e^{d-1}\sum_{k \in [K] \backslash \singleton{y}} c_k.
\end{align}
By the regular simplex property \(\bar{z}_i^* = -\sum_{k\in[K]\backslash\singleton{i}} \bar{z}_k^*\), we can merge the right-hand side into the left-hand side and obtain
\begin{align}
    \sum_{k \in [K]\backslash \singleton{i}}\left( \frac{a_i c_k}{\Gamma_i} + \frac{a_k c_i}{\Gamma_k} + \frac{2\beta}{e^{d-1}}\lambda_i \right) \bar{z}_k^* = 0.
    \label{Lagrangian-uni-regsim}
\end{align}
Since any subset of \(K-1\) vertices from a regular simplex is linearly independent, the coefficients in Eq.~\eqref{Lagrangian-uni-regsim} must all vanish. Such a \(\lambda_i\) exists if and only if
\begin{align}
    \frac{a_i c_k}{\Gamma_i} + \frac{a_k c_i}{\Gamma_k}
    = \text{Constant}, \quad \text{for all } k \in [K]\backslash \singleton{i},
\end{align}
which proves the result. The expression for \(\lambda_i\) in Eq.~\eqref{Lagrangian-uni-lambda} follows immediately.
\end{proof}

\subsection{Applying our framework to supervised contrastive objectives}\label{Thm-proof-verification}

We now apply the framework to supervised contrastive objectives in the theorems. For each loss, the procedure is the same: 1. Derive the lower bound using Lemmas~\ref{appendix-lem-anchor-wise} and~\ref{appendix-lem-class-wise}. 2. From the lower bound, we identify the coefficients \(\{a_y\}, \{b_y\}, \{c_y\}, \beta\); substitute them into the condition of Lemma~\ref{appendix-lem:unified-lagrangian}; and check whether the resulting equality holds across classes. 3. Transfer the result back to the original loss by Lemma~\ref{appendix-lem-F-L}.

For SCL and BCL, dependence on the class cardinalities \(\{B_y\}\) prevents the condition of Lemma~\ref{appendix-lem:unified-lagrangian} from being satisfied, so the regular simplex fails to be even a stationary point of the lower bound and therefore fails to be a global minimizer. For FBCL, the condition holds since structural balancing eliminates dependence on \(\{B_y\}\). We further show that this stationary point is indeed a global minimum by deriving subsequent lower bounds whose equality conditions characterize a regular simplex.

\subsubsection{SCL does not yield a regular simplex at the optimum under long-tailed distributions }\label{Appendix:Thm-proof-scl}

Theorem~\ref{Appendix:Thm-SCL} states that under long-tailed distributions, the SCL loss does not attain a regular simplex as the geometry of its global minimum. We prove this by showing that the regular simplex configuration \(\bar{Z}^*\) does not satisfy the first-order Lagrangian condition for \(\mathcal{LB}_{\mathrm{SCL}}\), and applying Lemma~\ref{lem-F-L}.

\begin{appthm}\label{Appendix:Thm-SCL}
    Let \(\bar{Z}^* =(\bar{z}_1^*, \dots, \bar{z}_K^*)\) be a $K$-point configuration forming a regular simplex on \(\mathbb{S}^{h-1}\), and \(Z^* = (z^*_1, \dots, z^*_B)\) be a configuration which collapses to \(\bar{Z}^*\). Consider the constrained optimization problem of minimizing \(\mathcal{LB}_{\mathrm{SCL}}\) defined in Proposition~\ref{prop-scl} subject to \( G_i(\bar{Z})=\|\bar{z}_i\|^2 -1 =0\) for all \(i \in[K]\). Then,
    \begin{align}
        \exists\, i \in [K] \text{ such that } \nexists\, \lambda_i \in \mathbb{R} \text{ for which } \nabla_{\bar{z}_i}\mathcal{LB}_{\mathrm{SCL}}(\bar{Z}^*)=\lambda_i \nabla_{\bar{z}_i}G_i(\bar{Z}^*).
    \end{align}
    Consequently, \(\bar{Z}^*\) fails to be a stationary point of \(\mathcal{LB}_{\mathrm{SCL}}\), and therefore fails to be a global minimizer. By Lemma~\ref{lem-F-L},
    \begin{align}
        Z^* \notin \argmin_{Z \in (\mathbb{S}^{h-1})^B} \mathcal{L}_{\mathrm{SCL}}(Z;Y, \mathcal{B}).
    \end{align}
\end{appthm}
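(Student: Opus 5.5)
We follow the three-step framework. First we read off the coefficients of $\mathcal{LB}_{\mathrm{SCL}}$ in the form of Eq.~\eqref{uni-F}. Then we show that the condition of Lemma~\ref{appendix-lem:unified-lagrangian} fails for a suitable class $i$. Finally we pass from non-stationarity to non-minimality and apply Lemma~\ref{appendix-lem-F-L}.

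\textbf{Step 1: coefficients.} Comparing Proposition~\ref{prop-scl} with Eq.~\eqref{uni-F} gives
\begin{align}
a_y = B_y, \qquad b_y = B_y - 1, \qquad c_y = B_y, \qquad \beta = \sum_{y} B_y = B.
\end{align}
Set $\alpha = e^{d-1}$ with $d=-1/(K-1)$, so $\alpha\in(0,1)$. Then
\begin{align}
\Gamma_y = B_y - 1 + \alpha (B - B_y) = (1-\alpha)B_y + \gamma, \qquad \gamma := \alpha B - 1 .
\end{align}
We check that $\gamma>0$. Every class has at least two samples, so $B \ge 2K$. Also $\alpha \ge e^{-3/2}$ because $K\ge 3$. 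Hence $\gamma \ge 2K e^{-3/2} - 1 \ge 6e^{-3/2}-1 > 0$. In particular every $\Gamma_y>0$.

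\textbf{Step 2: the condition of Lemma~\ref{appendix-lem:unified-lagrangian} fails.} Fix a class $i$. The quantity that must be constant in $k\neq i$ is
\begin{align}
g_i(k) = \frac{a_i c_k}{\Gamma_i} + \frac{a_k c_i}{\Gamma_k} = B_i\Big(\frac{B_k}{\Gamma_i} + \frac{B_k}{(1-\alpha)B_k + \gamma}\Big).
\end{align}
This depends on $k$ only through $B_k$, via the function $x\mapsto x/\Gamma_i + x/((1-\alpha)x+\gamma)$. That function is strictly increasing on $x>0$, since its derivative is $1/\Gamma_i + \gamma/((1-\alpha)x+\gamma)^2>0$ by $\gamma>0$. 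Therefore $g_i(k)$ is constant over $k\ne i$ if and only if all $B_k$ with $k\neq i$ are equal.

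Now use the long-tailed assumption $B_1 \gg B_K$, so $B_1 > B_K$, together with $K\ge3$. Choose $i=2$, which differs from both $1$ and $K$. Then $g_2(1)\neq g_2(K)$. By Lemma~\ref{appendix-lem:unified-lagrangian}, no $\lambda_2$ satisfies the first-order condition at $\bar Z^*$.

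\textbf{Step 3: non-stationarity implies non-minimality, then transfer.} The constraint gradients $\nabla_{\bar z_j}G_j = 2\bar z_j$ are nonzero and act on disjoint coordinate blocks. They are therefore linearly independent, so LICQ holds on $(\mathbb S^{h-1})^K$. Consequently the Lagrange (KKT) conditions are necessary at every local, and hence every global, minimizer of $\mathcal{LB}_{\mathrm{SCL}}$. Since they fail at $\bar Z^*$, we get $\bar Z^*\notin\argmin \mathcal{LB}_{\mathrm{SCL}}$. Proposition~\ref{prop-scl} gives equality of the bound under class collapse, so Lemma~\ref{appendix-lem-F-L} then yields $Z^*\notin\argmin \mathcal{L}_{\mathrm{SCL}}$.

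\textbf{Main obstacle.} The delicate point is the strict monotonicity in Step 2, which requires $\gamma = \alpha B - 1>0$. This is exactly where the assumptions $K\ge 3$ and $B_y\ge 2$ enter. Without them, $B_k/\Gamma_k$ could decrease in $B_k$ and partially cancel the increasing term $B_k/\Gamma_i$, so constancy in $k$ would no longer force the $B_k$ to be equal.
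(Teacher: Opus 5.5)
Your proposal is correct and follows essentially the same route as the paper. You use the same coefficients $a_y=c_y=B_y$, $b_y=B_y-1$, $\beta=B$, and the same rewriting $\Gamma_y=(1-e^{d-1})B_y+(Be^{d-1}-1)$ to get strict monotonicity in $B_k$. You also pick a class $i$ distinct from two classes of unequal size, as the paper does; your bound $B\ge 2K$ for $\gamma>0$ works just as well as the paper's $B>4$. Your explicit LICQ argument, which shows that failure of the Lagrange condition rules out even local minimality, is a welcome addition, since the paper only asserts that non-stationarity implies non-minimality.
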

\paragraph{Proof of Theorem~\ref{method:Thm-scl}.}
\begin{proof}
By Proposition~\ref{prop-scl}, \(\mathcal{L}_{\mathrm{SCL}}\) is bounded below by
\begin{align}\label{Thm1-LB-scl}
    \mathcal{L}_{\mathrm{SCL}}(Z;Y, \mathcal{B}) &\overset{\text{Prop. }\ref{prop-scl}}{\ge} \mathcal{LB}_{\mathrm{SCL}}(\bar{Z};K) = \frac{1}{B} \sum_{y \in [K]} B_y \log \left( \Gamma_y \right),\\
    \text{where} \quad
    \Gamma_y &= B_y-1 + e^{-1}\sum\limits_{k \in [K]\backslash \singleton{y}} B_k \exp(\bar{z}_y \cdot \bar{z}_k).
\end{align}
Substituting \(a_y=B_y\), \(b_y=B_y-1\), \(c_y=B_y\), and \(\beta=B\) into Eq.~\eqref{uni-F} yields \(\mathcal{LB}_{\mathrm{SCL}}(\bar{Z};K)\). Thus the condition Eq.~\eqref{Lagrangian-uni-coefficient} in Lemma~\ref{appendix-lem:unified-lagrangian} becomes
\begin{align}
    \frac{B_i B_k}{\Gamma_i} + \frac{B_k B_i}{\Gamma_k} &= \text{Constant}, \quad \text{for all } k \in [K]\backslash \singleton{i}, \label{Lagrangian-scl-coefficient}\\
    \text{where} \quad \Gamma_y &= B_y-1 + e^{d-1} \sum_{k \in [K] \backslash \singleton{y}}B_k. \notag
\end{align}
Since \(B_i\) remains fixed in \eqref{Lagrangian-scl-coefficient}, it suffices to examine the dependence on $k$, determined by
\begin{align}
    \frac{B_k}{\Gamma_i} + \frac{B_k}{\Gamma_k} &= \text{Constant}, \quad \text{for all } k \in [K]\backslash \singleton{i}.
\end{align}
Given \(K\ge3\), there exist distinct indices \(i\neq r,s\in[K]\) such that \(B_r > B_s\). We prove 
\[
\frac{B_r}{\Gamma_i} + \frac{B_r}{\Gamma_r} \neq \frac{B_s}{\Gamma_i} + \frac{B_s}{\Gamma_s}
\]
by showing that \(\frac{B_y}{\Gamma_y}\) is a monotonically increasing function of \(B_y\). Indeed,
\begin{align}
    \Gamma_y &= B_y -1 + e^{d-1}\sum_{k \in [K] \backslash \singleton{y}}B_k\\
    &= B_y -1 + (B-B_y)e^{d-1}\\
    &= (1-e^{d-1})B_y + (Be^{d-1}-1).
\end{align}
Note that \(d = \bar{z}_i^* \cdot \bar{z}_j^* = -1/(K-1)\) for any \(i \neq j \in [K]\) by condition~\ref{def:regular_simplex_2}. Hence \(1-e^{d-1}>0\). Also, given \(B>4\) and \(K\ge3\), we have \(Be^{d-1}-1>0\). Therefore, \(\frac{B_y}{\Gamma_y}\) is increasing in \(B_y\), and since $\Gamma_i$ also remains fixed, with $B_r > B_s$,
\[
\frac{B_r}{\Gamma_i} + \frac{B_r}{\Gamma_r} > \frac{B_s}{\Gamma_i} + \frac{B_s}{\Gamma_s}.
\]
Consequently, condition \eqref{Lagrangian-scl-coefficient} cannot be satisfied. By Lemma~\ref{appendix-lem:unified-lagrangian}, \(\bar{Z}^*\) does not satisfy the Lagrangian first-order condition for \(\mathcal{LB}_{\mathrm{SCL}}\), i.e.,
\begin{align}
    \bar{Z}^* \notin \argmin_{\bar{Z} \in (\mathbb{S}^{h-1})^K} \mathcal{LB}_{\mathrm{SCL}}(\bar{Z};K).
\end{align}
By Lemma~\ref{lem-F-L},
\begin{align}
    Z^* \notin \argmin_{Z \in (\mathbb{S}^{h-1})^B} \mathcal{L}_{\mathrm{SCL}}(Z;Y, \mathcal{B}),
\end{align}
i.e., the SCL loss fails to form a regular simplex when attaining its minimum.
\end{proof}
\paragraph{Remark.} \citet{yi2025geometry} and \citet{kini2024symmetric} analyzed the representation geometry of SCL
under class imbalance assuming a controlled distribution or the non-negativity of features, but they did not show that SCL fails to form a regular simplex without such assumptions, as Theorem~\ref{Appendix:Thm-SCL} does. Moreover, neither work proposes an objective that provably induces a regular simplex under class imbalance.
\subsubsection{BCL does not yield a regular simplex at the optimum under long-tailed distributions }\label{Appendix:Thm-proof-bcl}
Theorem~\ref{Appendix:Thm-BCL} states that under any long-tailed distribution, the BCL loss does not yield a regular simplex as the geometry of its global minimum. 
\paragraph{Why BCL failed to induce a regular simplex.}
The original work of BCL derives a final lower bound that induces regular simplex conditions. However, our framework reveals that the final lower bound is not tight; this is because the application of Jensen's inequality to $\sum_{i \in \mathcal{B}}$ is not tight (it should be used sequentially to $\sum_{i \in \mathcal{B}_y}$ and then to $\sum_{y \in [K]}$). We show this by showing that the regular simplex configuration already fails to minimize the class-level lower bound, shown below.
\begin{appthm} \label{Appendix:Thm-BCL}
    Let \(\bar{Z}^*, Z^*\) be defined as in Theorem~\ref{Appendix:Thm-SCL}. Consider the constrained optimization problem of minimizing \(\mathcal{LB}_{\mathrm{BCL}}\) defined in Proposition~\ref{prop-bcl} subject to \( G_i(\bar{Z})=\|\bar{z}_i\|^2 -1 =0\) for all \(i \in[K]\). Then,
    \begin{align}
        \exists\, i \in [K] \text{ such that } \nexists\, \lambda_i \in \mathbb{R} \text{ for which } \nabla_{\bar{z}_i}\mathcal{LB}_{\mathrm{BCL}}(\bar{Z}^*)=\lambda_i \nabla_{\bar{z}_i}G_i(\bar{Z}^*).
    \end{align}
    Consequently, \(\bar{Z}^*\) fails to be a stationary point of \(\mathcal{LB}_{\mathrm{BCL}}\), and therefore fails to be a global minimizer. By Lemma~\ref{lem-F-L},
    \begin{align}
        Z^* \notin \argmin_{Z \in (\mathbb{S}^{h-1})^B} \mathcal{L}_{\mathrm{BCL}}(Z;Y, \mathcal{B}).
    \end{align}
\end{appthm}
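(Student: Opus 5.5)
The argument runs parallel to the proof of Theorem~\ref{Appendix:Thm-SCL}. By Proposition~\ref{prop-bcl}, $\mathcal{LB}_{\mathrm{BCL}}$ has the form Eq.~\eqref{uni-F} with
\[
a_y=B_y,\qquad b_y=1,\qquad c_y=1,\qquad \beta=B .
\]
Substituting these coefficients into Lemma~\ref{appendix-lem:unified-lagrangian} gives
\[
\Gamma_y = 1+e^{d-1}(K-1),\qquad d=-\tfrac{1}{K-1}.
\]
This quantity does not depend on $y$; call it $\Gamma$. Condition Eq.~\eqref{Lagrangian-uni-coefficient} for a fixed $i$ then reads
\[
\frac{B_i+B_k}{\Gamma}=\text{Constant}\quad\text{for all } k\in[K]\setminus\{i\},
\]
which, since $\Gamma>0$, is equivalent to requiring that $B_k$ be constant over $k\neq i$.

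To violate this, I need an index $i$ together with two other indices $r,s\neq i$ satisfying $B_r\neq B_s$. Under the long-tailed assumption $B_1\gg B_K$, we have $B_1>B_K$, and because $K\ge 3$ there is a third index available to serve as $i$; for instance $i=2$, $r=1$, $s=K$. For this $i$, the vector equation Eq.~\eqref{Lagrangian-uni-regsim} would require
\[
\frac{B_i+B_k}{\Gamma}+\frac{2\beta}{e^{d-1}}\lambda_i=0 \quad\text{for every } k\neq i,
\]
because any $K-1$ vertices of a regular simplex are linearly independent. This is impossible for a single $\lambda_i$ once $B_r\neq B_s$. Hence no $\lambda_i$ exists, and the first-order (KKT) condition fails at $\bar Z^*$.

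Next I have to pass from ``not stationary'' to ``not a minimizer''. The constraints $\|\bar z_i\|^2=1$ have linearly independent gradients $2\bar z_i$ (each involves a different block of variables and is nonzero), so LICQ holds. Every local minimizer of $\mathcal{LB}_{\mathrm{BCL}}$ on $(\mathbb S^{h-1})^K$ must therefore satisfy the Lagrange condition. Since $\bar Z^*$ does not, it is not a global minimizer of $\mathcal{LB}_{\mathrm{BCL}}$. Lemma~\ref{lem-F-L} then transfers this to $Z^*\notin\argmin\mathcal{L}_{\mathrm{BCL}}$; that lemma applies because equality in Proposition~\ref{prop-bcl} holds under \ref{(C1)}, and $Z^*$ collapses to $\bar Z^*$.

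I expect the main obstacle to be conceptual rather than computational: unlike SCL, the denominators $\Gamma_y$ are all equal here, so the only source of non-constancy is the outer weight $a_y=B_y$, and it must be exploited in the cross term $a_kc_i/\Gamma_k$. I also need to make sure the case $K=3$ still leaves two indices other than $i$ with different cardinalities. Choosing $i$ to be a middle index handles this: with $i=2$, the indices $1$ and $K$ differ whenever $B_1>B_K$.
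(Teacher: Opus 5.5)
Your proposal is correct and follows the paper's proof. It uses the same substitution $a_y=B_y$, $b_y=c_y=1$, $\beta=B$ into Lemma~\ref{appendix-lem:unified-lagrangian}, observes that all $\Gamma_y$ coincide so only $B_k$ varies, and then transfers the conclusion via Lemma~\ref{lem-F-L}. You are slightly more careful than the paper, which never says that the indices $r,s$ with $B_r>B_s$ must differ from $i$ (your choice $i=2$, $r=1$, $s=K$ settles this for $K\ge 3$) and never invokes LICQ to justify that failure of the Lagrange condition rules out a global minimizer.
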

\paragraph{Proof of Theorem~\ref{method:Thm-bcl}.}
\begin{proof}
By Proposition~\ref{prop-bcl}, \(\mathcal{L}_{\mathrm{BCL}}\) is bounded below by
\begin{align}
    \mathcal{L}_{\mathrm{BCL}}(Z;Y, \mathcal{B}) &\overset{\text{Prop. }\ref{prop-bcl}}{\ge} \mathcal{LB}_{\mathrm{BCL}}(\bar{Z};K) = \frac{1}{B} \sum_{y \in [K]} B_y \log \left( \Gamma_y \right),\\
    \text{where} \quad
    \Gamma_y &= 1 + e^{-1}\sum\limits_{k \in [K]\backslash \singleton{y}} \exp(\bar{z}_y \cdot \bar{z}_k).
\end{align}
Substituting \(a_y=B_y\), \(b_y=c_y=1\), and \(\beta=B\) into Eq.~\eqref{uni-F} yields \(\mathcal{LB}_{\mathrm{BCL}}(\bar{Z};K)\). Thus the condition Eq.~\eqref{Lagrangian-uni-coefficient} in Lemma~\ref{appendix-lem:unified-lagrangian} becomes
\begin{align}
    \frac{B_i}{\Gamma_i} + \frac{B_k}{\Gamma_k} &= \text{Constant}, \quad \text{for all } k \in [K]\backslash \singleton{i}, \label{Lagrangian-bcl-coefficient}\\
    \text{where} \quad \Gamma_y &= 1 + (K-1)e^{d-1}. \notag
\end{align}
It follows that \(\Gamma_1=\Gamma_2=\cdots=\Gamma_K\). Given \(K\ge3\), there exist distinct indices \(r,s\in[K]\) such that \(B_r > B_s\). Then,
\[
\frac{B_i}{\Gamma_i} + \frac{B_r}{\Gamma_r}
>
\frac{B_i}{\Gamma_i} + \frac{B_s}{\Gamma_s},
\]
so condition Eq.~\eqref{Lagrangian-bcl-coefficient} cannot be satisfied. By Lemma~\ref{appendix-lem:unified-lagrangian}, \(\bar{Z}^*\) does not satisfy the Lagrangian first-order condition for \(\mathcal{LB}_{\mathrm{BCL}}\), and thus
\begin{align}
    \bar{Z}^* \notin \argmin_{\bar{Z} \in (\mathbb{S}^{h-1})^K} \mathcal{LB}_{\mathrm{BCL}} (\bar{Z}).
\end{align}
By Lemma~\ref{lem-F-L},
\begin{align}
    Z^* \notin \argmin_{Z \in (\mathbb{S}^{h-1})^B} \mathcal{L}_{\mathrm{BCL}}(Z;Y, \mathcal{B}),
\end{align}
i.e., the BCL loss fails to form a regular simplex when attaining its minimum.
\end{proof}
\paragraph{Optimality of SCL and BCL.} Characterizing the optimality of SCL and BCL under long-tailed distribution is a harder problem which has to deal with numerical relationships across angles between class centers \citep{yi2025geometry}. Therefore, we rather investigate whether a particular geometry is attained at the minimum of an objective.
\subsubsection{FBCL yields a regular simplex at the optimum under long-tailed distributions}\label{Appendix:Thm-proof-fbcl}
\paragraph{Balancing the aggregation across anchors.}
Our analysis reveals that balancing within each anchor-wise loss is not sufficient to remove the dependence of the objective on class cardinality. In SCL, the positive samples are averaged within each anchor-wise loss, and BCL additionally balances the contributions of negative classes in its denominator. Moreover, \citet{zhong2022rebalanced}, \citet{de2025long}, and \citet{roh2026equilibrium} have studied re-balancing supervised contrastive objectives using the effective number of samples \citep{cui2019class} or class cardinalities in the numerator or denominator; \textbf{these methods still perform balancing within each anchor-wise loss}.
However, when these anchor-wise losses are aggregated to form the overall objective, each class contributes in proportion to its number of anchors. This remaining class-dependent weighting is consequently inherited by the class-level lower bound, which prevents the regular simplex from being optimal under long-tailed distributions. Structural balancing (SB) normalizes the aggregation of anchor-wise losses across classes. Although this operation can be viewed as a class-dependent weighting of anchor losses, its role here is determined by the geometric analysis: SB removes the remaining class-cardinality dependence from the class-level lower bound and thereby recovers the regular simplex as an optimum. FBCL combines this structural balancing with the positive and negative balancing already present in BCL. 

Theorem~\ref{Appendix:Thm-FBCL} states that the regular simplex is a global minimizer of the FBCL loss. The proof proceeds in two steps. We first show that the regular simplex configuration \(\bar{Z}^*\) satisfies the first-order Lagrangian condition for \(\mathcal{LB}_{\mathrm{FBCL}}\), and is therefore a stationary point. We then show that \(\bar{Z}^*\) is in fact a global minimizer of \(\mathcal{LB}_{\mathrm{FBCL}}\) by further deriving a lower bound, with equality attained at the regular simplex.
\begin{appthm}\label{Appendix:Thm-FBCL}
    Let \(\bar{Z}^*, Z^*\) be defined as in Theorem~\ref{Appendix:Thm-SCL}. Consider the constrained optimization problem of minimizing \(\mathcal{LB}_{\mathrm{FBCL}}\) defined in Proposition~\ref{prop-FBCL} subject to \( G_i(\bar{Z})=\|\bar{z}_i\|^2 -1 =0\) for all \(i \in[K]\). We show that, for each \(i \in [K]\), there exists \(\lambda_i \in \mathbb{R}\) such that
    \begin{align}
        \nabla_{\bar{z}_i}\mathcal{LB}_{\mathrm{FBCL}}(\bar{Z}^*)=\lambda_i \nabla_{\bar{z}_i}G_i(\bar{Z}^*),
    \end{align}
    where \(d = \bar{z}^*_i \cdot \bar{z}^*_j\) for any \(i \neq j\). Moreover, we show that \(\bar{Z}^*\) is a global minimizer of \(\mathcal{LB}_{\mathrm{FBCL}}\):
    \begin{align}
        Z^* \in \argmin_{Z \in (\mathbb{S}^{h-1})^B} \mathcal{L}_{\mathrm{FBCL}}(Z;Y, \mathcal{B}).
    \end{align}
    Specifically, the loss is bounded by the following minimum:
    \begin{align}
        \mathcal{L}_{\mathrm{FBCL}}(Z;Y, \mathcal{B}) \ge \log \left( 1 + (K-1)\exp\left(-\frac{1}{K-1} -1\right) \right),
    \end{align}
    where equality is attained if and only if \(Z\) collapses to \(Z^*\), a regular simplex.
\end{appthm}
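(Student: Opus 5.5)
The plan has two parts: a first-order check using Lemma~\ref{appendix-lem:unified-lagrangian}, and then a global lower bound on $\mathcal{LB}_{\mathrm{FBCL}}$ whose equality case forces a regular simplex. Combining that bound with Proposition~\ref{prop-FBCL} then gives the statement for $\mathcal{L}_{\mathrm{FBCL}}$.

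\textbf{Stationarity.} We read off the coefficients of $\mathcal{LB}_{\mathrm{FBCL}}$ in the form of Eq.~\eqref{uni-F}: $a_y=b_y=c_y=1$ and $\beta=K$. At the regular simplex every $\Gamma_y=1+(K-1)e^{d-1}$ takes the same value, so the quantity $\frac{a_ic_k}{\Gamma_i}+\frac{a_kc_i}{\Gamma_k}=2/\Gamma$ does not depend on $k$. Condition Eq.~\eqref{Lagrangian-uni-coefficient} therefore holds, and Eq.~\eqref{Lagrangian-uni-lambda} gives $\lambda_i=-e^{d-1}/(K\Gamma)$.

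\textbf{Global minimality.} Write $\mathcal{LB}_{\mathrm{FBCL}}=\frac1K\sum_y\log\big(1+e^{-1}\sum_{k\ne y}e^{\bar z_y\cdot\bar z_k}\big)$. First, for each $y$, apply Jensen's inequality to the convex function $\exp$:
\[
\sum_{k\ne y}e^{\bar z_y\cdot\bar z_k}\ge (K-1)\exp\Big(\tfrac{1}{K-1}\sum_{k\ne y}\bar z_y\cdot\bar z_k\Big).
\]
Second, the map $t\mapsto\log(1+(K-1)e^{-1}e^{t})$ is increasing and convex, being a softplus composed with an affine function. Applying Jensen's inequality again, this time over $y$ with weights $1/K$, gives
\[
\mathcal{LB}_{\mathrm{FBCL}}\ge\log\Big(1+(K-1)e^{-1}\exp\Big(\tfrac{1}{K(K-1)}\sum_{y\ne k}\bar z_y\cdot\bar z_k\Big)\Big).
\]
Third, the identity $\|\sum_y\bar z_y\|^2=K+\sum_{y\ne k}\bar z_y\cdot\bar z_k\ge0$ yields $\sum_{y\ne k}\bar z_y\cdot\bar z_k\ge -K$. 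Since the bound is monotone in this sum, we obtain $\mathcal{LB}_{\mathrm{FBCL}}\ge\log\big(1+(K-1)\exp(-\tfrac{1}{K-1}-1)\big)$. Combining with Proposition~\ref{prop-FBCL} gives the stated bound on $\mathcal{L}_{\mathrm{FBCL}}$. The regular simplex attains it, which shows $Z^*$ is a minimizer.

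\textbf{Equality case.} This is the step needing the most care. Equality in Proposition~\ref{prop-FBCL} forces class collapse (C1). Equality in the first Jensen step (strict convexity of $\exp$) forces $\bar z_y\cdot\bar z_k$ to be constant in $k$ for each fixed $y$. Equality in the second Jensen step requires the averages $s_y=\frac{1}{K-1}\sum_{k\ne y}\bar z_y\cdot\bar z_k$ to be equal across $y$; this holds because softplus is strictly convex. Equality in the last step forces $\sum_y\bar z_y=0$.

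Together these give: for each $y$, $\bar z_y\cdot\bar z_k=s_y$ for all $k\ne y$, with all $s_y$ equal to a common value $s$. Symmetry of inner products is consistent with this, so all off-diagonal inner products equal $s$. The zero-sum condition then gives $s=-1/(K-1)$. These are exactly conditions (1)–(3) of Definition~\ref{sec:prelim_def_regular}, so $Z$ collapses to a regular simplex. Attainability uses the standing assumption $K\le h+1$ from that definition.
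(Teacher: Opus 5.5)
Your proposal is correct and follows the paper's proof essentially step for step. It uses the same substitution $a_y=b_y=c_y=1$, $\beta=K$ into Lemma~\ref{appendix-lem:unified-lagrangian}, giving the same multiplier, and then the same chain: Jensen on $\exp$, Jensen on the convex map $\phi(x)=\log(1+(K-1)e^{x-1})$, and the bound $\sum_{y\neq k}\bar z_y\cdot\bar z_k\ge -K$. Your equality analysis is spelled out more carefully than the paper's. One small tightening: the class means from Proposition~\ref{prop-FBCL} only satisfy $\|\bar z_y\|\le 1$, so use $\|\sum_y\bar z_y\|^2=\sum_y\|\bar z_y\|^2+\sum_{y\neq k}\bar z_y\cdot\bar z_k\le K+\sum_{y\neq k}\bar z_y\cdot\bar z_k$ instead of the identity with $K$, so that the bound holds for every $Z$ and not only collapsed ones.
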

\paragraph{Proof of Theorem~\ref{method:Thm-fbcl}.}
\begin{proof}
The proof proceeds in two stages. We first show that the regular simplex \(\bar{Z}^*\) is a stationary point of \(\mathcal{LB}_{\mathrm{FBCL}}\) via the unified Lagrangian condition. We then extend this to global optimality by deriving the lower bound of \(\mathcal{LB}_{\mathrm{FBCL}}\).

\textit{Stage 1: Stationary point.} By Proposition~\ref{prop-FBCL}, \(\mathcal{L}_{\mathrm{FBCL}}\) is bounded below by
\begin{align}
    \mathcal{L}_{\mathrm{FBCL}}(Z;Y, \mathcal{B}) &\overset{\text{Prop. }\ref{prop-FBCL}}{\ge} \mathcal{LB}_{\mathrm{FBCL}}(\bar{Z};K)=\frac{1}{K} \sum_{y \in [K]} \log \left( \Gamma_y \right),\\
    \text{where} \quad
    \Gamma_y &= 1 + e^{-1}\sum\limits_{k \in [K]\backslash \singleton{y}} \exp(\bar{z}_y \cdot \bar{z}_k).
\end{align}
Substituting \(a_y=b_y=c_y=1\) and \(\beta=K\) into Eq.~\eqref{uni-F} yields \(\mathcal{LB}_{\mathrm{FBCL}}(\bar{Z};K)\). Thus the condition Eq.~\eqref{Lagrangian-uni-coefficient} in Lemma~\ref{appendix-lem:unified-lagrangian} becomes
\begin{align}
    \frac{1}{\Gamma_i} + \frac{1}{\Gamma_k} &= \text{Constant}, \quad \text{for all } k \in [K]\backslash \singleton{i}, \label{Lagrangian-FBCL-coefficient}\\
    \text{where} \quad \Gamma_y &= 1 + (K-1)e^{d-1}. \notag
\end{align}
Since \(\Gamma_1=\Gamma_2=\cdots=\Gamma_K\), condition Eq.~\eqref{Lagrangian-FBCL-coefficient} is automatically satisfied. The corresponding multiplier follows from Eq.~\eqref{Lagrangian-uni-lambda}:
\begin{align}
    \lambda_i \overset{\text{Eq.}~\eqref{Lagrangian-uni-lambda}}{=} -\frac{e^{d-1}}{K + K(K-1)e^{d-1}}.
\end{align}
By Lemma~\ref{appendix-lem:unified-lagrangian}, the regular simplex configuration \(\bar{Z}^*\) satisfies the Lagrangian condition, and thus \(\bar{Z}^*\) serves as a stationary point of \(\mathcal{LB}_{\mathrm{FBCL}}\).

\textit{Stage 2: Global optimality.} We further derive the lower bound of \(\mathcal{LB}_{\mathrm{FBCL}}\) and eventually show that the regular simplex configuration \(\bar{Z}^*\) is a global minimizer of \(\mathcal{LB}_{\mathrm{FBCL}}\):
\begin{align}
    \mathcal{LB}_{\mathrm{FBCL}}(\bar{Z};K) &= \frac{1}{K}\sum_{y \in [K]} \log \left( 1+ e^{-1}\sum_{k \in [K]\backslash \singleton{y}}\exp(\bar{z}_y \cdot \bar{z}_k) \right) \label{proof-FBCL-lower-bound} \\
     &\ge \frac{1}{K}\sum_{y \in [K]} \log \left( 1+ e^{-1}(K-1)\exp\left(\frac{1}{K-1}\sum_{k \in [K]\backslash \singleton{y}}\bar{z}_y \cdot \bar{z}_k\right) \right) \label{proof-FBCL-lower-bound-2-regsim}\\
     &= \frac{1}{K}\sum_{y \in [K]} \phi\left( \frac{1}{K-1} \sum_{k \in [K] \backslash \singleton{y}} \bar{z}_y \cdot \bar{z}_k \right) \label{proof-FBCL-lower-bound-3-phi}\\
     &\ge \phi \left( \frac{1}{K}\sum_{y \in [K]}\frac{1}{K-1} \sum_{k \in [K] \backslash \singleton{y}} \bar{z}_y \cdot \bar{z}_k \right)\label{proof-FBCL-lower-bound-4-phi-Jensen} \\
     &= \phi \left( \frac{1}{K(K-1)} \sum_{\substack{i,j\in [K] \\ i \neq j}} \bar{z}_i \cdot \bar{z}_j \right) \label{proof-FBCL-lower-bound-5-same}\\
     &\ge \phi \left( -\frac{1}{K-1} \right)\label{proof-FBCL-lower-bound-6-final}\\
     &= \log \left( 1 + (K-1)\exp\left(-\frac{1}{K-1} -1\right)\right).
\end{align}
We justify each step in turn. The transition \eqref{proof-FBCL-lower-bound} \(\to\) \eqref{proof-FBCL-lower-bound-2-regsim} follows from Jensen's inequality: since \(\exp(x)\) is convex and \(\log(x)\) is monotonically increasing, the inner average over \(k\) can be moved inside the exponential. Defining \(\phi(x) = \log(1 + (K-1)\exp(x-1))\) gives \eqref{proof-FBCL-lower-bound-3-phi}. Since \(\phi\) is itself a convex function, Jensen's inequality applied to the outer average over \(y\) yields \eqref{proof-FBCL-lower-bound-3-phi} \(\to\) \eqref{proof-FBCL-lower-bound-4-phi-Jensen}. The equality \eqref{proof-FBCL-lower-bound-4-phi-Jensen} \(\to\) \eqref{proof-FBCL-lower-bound-5-same} simply rewrites the double sum. Finally, the transition \eqref{proof-FBCL-lower-bound-5-same} \(\to\) \eqref{proof-FBCL-lower-bound-6-final} follows from the inequality
\begin{align}
    \sum_{i \ne j} \bar{z}_i \cdot \bar{z}_j  = 
    \left\| \sum_{i=1}^K \bar{z}_i \right\|^2 - \sum_{i=1}^K \| \bar{z}_i \|^2 \ge -K,
\end{align}
which gives \(\sum_{i \ne j} \bar{z}_i \cdot \bar{z}_j \ge -K\), and hence \(\frac{1}{K(K-1)}\sum_{i \ne j} \bar{z}_i \cdot \bar{z}_j \ge -\frac{1}{K-1}\). Combined with the monotonicity of \(\phi\), this yields \eqref{proof-FBCL-lower-bound-6-final}.

Equality is attained if and only if both
\begin{enumerate}
    \item[(1)] for all \(i \neq j \in [K]\), \(\bar{z}_i \cdot \bar{z}_j = -\frac{1}{K-1}\), and
    \item[(2)] \(\sum_{i=1}^K \bar{z}_i = 0\)
\end{enumerate}
hold. These conditions are identical to the regular simplex conditions of Definition~\ref{sec:prelim_def_regular}
Therefore, attaining the minimum of \(\mathcal{LB}_{\mathrm{FBCL}}\), and consequently of $\mathcal{L}$, yields a regular simplex. 
\begin{align}
    \log \left( 1 + (K-1)\exp\left(-\tfrac{1}{K-1} -1\right) \right) =\log \left( 1 + (K-1)\exp\big(-\tfrac{K}{K-1} \big) \right).
\end{align}
\end{proof}

\subsection{Lower bounds of supervised contrastive objectives}\label{Prop-proof}
The class-level lower bounds \(\mathcal{LB}_{\mathrm{SCL}}\), \(\mathcal{LB}_{\mathrm{BCL}}\), and \(\mathcal{LB}_{\mathrm{FBCL}}\) (Propositions~\ref{appendix-prop-scl},~\ref{appendix-prop-bcl}, and~\ref{appendix-prop-FBCL}) are derived through a common two-step procedure: an anchor-wise lower bound followed by a class-wise lower bound. We state these two steps as Lemmas~\ref{appendix-lem-anchor-wise} and~\ref{appendix-lem-class-wise}, and then apply them sequentially to obtain each objective. SCL and BCL differ in their negative class weights, while FBCL additionally modifies the aggregation of anchor-wise losses. We unify this difference by introducing a coefficient \(w_k\) on each negative class, where \(w_k = B_k\) for SCL and \(w_k = 1\) for BCL and FBCL.
\subsubsection{Lower bound of the anchor-wise loss (Lemma~\ref{appendix-lem-anchor-wise})}
Lemma~\ref{appendix-lem-anchor-wise} provides a lower bound for the anchor-wise loss of a single sample \(z_i\).
\begin{applem}\label{appendix-lem-anchor-wise}(Lower bound of the anchor-wise loss)
    Let \(Z, \bar{Z}, Y\) be defined as in Proposition~\ref{appendix-prop-scl}. For any \(Z\) and \(Y\), fix an anchor \(z_i\) with class label \(y\). Then, the anchor-wise loss is bounded below by:
    \begin{align}
          -\frac{1}{B_y-1}\!\!\!\sum_{j\in \mathcal{B}_y \backslash\singleton{i}}\!\!\log \!\left(\frac{\exp(z_i\! \cdot\! z_j)}{\sum_{k \in [K]} \sum_{m \in \mathcal{B}_k}\!\frac{w_k}{B_k}\exp(z_i \!\cdot \!z_m)}\!\right) \ge
          \log\!\left(\!w_y + e^{-1}\!\!\!\!\!\!\sum_{k \in [K]\backslash \singleton{y}}\!\!\! w_k\exp(z_i \!\cdot\! \bar{z}_k)\!\right),
    \end{align}
    where when \(k=y\), the anchor $i$ is excluded from the summation in the denominator, and $B_k$ is replaced by $B_y -1$.
    For the weight $w_k>0$, the case \(w_k = B_k\) (with \(w_y = B_y - 1\) when \(k = y\)) corresponds to SCL, and \(w_k = 1\) for all \(k \in [K]\) corresponds to BCL and FBCL. Equality is attained if and only if the following conditions are satisfied:
    \begin{enumerate}[label=(C\arabic*), resume*=LBconditions]
        \item \(z_i = z_j,\) for all \(i, j \in \mathcal{B}_y\). \label{(C2)}
        \item For each \(k \in [K] \backslash \singleton{y}\), there exists \(\delta_{k,i} \in \mathbb{R}\) such that \(z_i \cdot z_m = \delta_{k,i}\), for all \(m \in \mathcal{B}_k\). \label{(C3)}
    \end{enumerate}
\end{applem}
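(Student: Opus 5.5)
The plan is to rewrite the anchor-wise loss so that the positive average appears as a single exponent, and then apply Jensen's inequality twice (once to the positive term, once to each negative class) together with the elementary bound \(z_i\cdot z_j\le 1\) on the sphere. First, since the denominator does not depend on \(j\), we pull it out of the average over positives. Set
\begin{align}
s_i \coloneqq \frac{1}{B_y-1}\sum_{j\in\mathcal{B}_y\backslash\singleton{i}} z_i\cdot z_j, \qquad D_i \coloneqq \sum_{k\in[K]}\frac{w_k}{B_k}\sum_{m\in\mathcal{B}_k}\exp(z_i\cdot z_m),
\end{align}
with the stated convention for \(k=y\): the anchor \(i\) is excluded and \(B_k\) is replaced by \(B_y-1\). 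Then the left-hand side equals \(-s_i+\log D_i=\log\big(e^{-s_i}D_i\big)\).

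Next we split \(D_i\) into its positive part and its negative parts. The positive part is \(w_y A_i\), where \(A_i=\frac{1}{B_y-1}\sum_{m\in\mathcal{B}_y\backslash\singleton{i}}\exp(z_i\cdot z_m)\). By convexity of \(\exp\) (Jensen), \(A_i\ge e^{s_i}\), so \(e^{-s_i}w_yA_i\ge w_y\). Each negative class \(k\neq y\) contributes \(\frac{w_k}{B_k}\sum_{m\in\mathcal{B}_k}\exp(z_i\cdot z_m)\). By Jensen and linearity of the inner product, \(\frac{1}{B_k}\sum_{m\in\mathcal{B}_k}\exp(z_i\cdot z_m)\ge \exp(z_i\cdot\bar z_k)\). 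Finally, \(\|z_i\|=\|z_j\|=1\) gives \(z_i\cdot z_j\le1\), so \(s_i\le 1\) and hence \(e^{-s_i}\ge e^{-1}\). Multiplying this last bound against the nonnegative negative sum and combining the three estimates, we obtain \(e^{-s_i}D_i\ge w_y+e^{-1}\sum_{k\neq y}w_k\exp(z_i\cdot\bar z_k)\). The claimed bound then follows because \(\log\) is increasing. This covers both weightings at once: \(w_k=B_k\) (with \(w_y=B_y-1\)) for SCL, and \(w_k=1\) for BCL and FBCL.

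The step needing the most care is the equality characterization, because the three inequalities interact. Since every \(w_k>0\) and every exponential is strictly positive, the negative sum is strictly positive. Equality overall therefore forces equality in each step separately. First, \(s_i=1\) is required; since every term satisfies \(z_i\cdot z_j\le 1\), this means \(z_i\cdot z_j=1\), i.e. \(z_j=z_i\), for every \(j\in\mathcal{B}_y\). That is exactly \ref{(C2)} (read relative to the fixed anchor), and it automatically makes the positive Jensen step tight. Second, strict convexity of \(\exp\) makes the negative Jensen step tight for class \(k\) iff \(z_i\cdot z_m\) is constant over \(m\in\mathcal{B}_k\), which is \ref{(C3)}. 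Conversely, under \ref{(C2)} and \ref{(C3)} each of these steps holds with equality, so the bound is attained.
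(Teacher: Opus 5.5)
Your proof is correct and follows essentially the same route as the paper: rewrite the anchor-wise loss as $\log(e^{-s_i}D_i)$, apply Jensen to the positive average and to each negative class average, and use $z_i\cdot z_j\le 1$ (the paper's Cauchy--Schwarz step) to replace $e^{-s_i}$ by $e^{-1}$. Your equality analysis is somewhat more explicit than the paper's. Because the negative sum is strictly positive, equality forces $s_i=1$; this gives (C2) directly and makes the paper's separate tightness condition for the positive Jensen step redundant.
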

\begin{proof}
The anchor-wise loss can be rewritten as
\begin{align}
    -\frac{1}{B_y-1}&\sum_{j\in \mathcal{B}_y \backslash\singleton{i}}\log \left(\frac{\exp(z_i \cdot z_j)}{\sum_{k \in [K]} \sum_{m \in \mathcal{B}_k}\frac{w_k}{B_k}\exp(z_i \cdot z_m)}\right) \\
    &= \frac{1}{B_y-1}\sum_{j\in \mathcal{B}_y \backslash\singleton{i}}\log \left(\frac{\sum_{k \in [K]} \sum_{m \in \mathcal{B}_k}\frac{w_k}{B_k}\exp(z_i \cdot z_m)}{\exp(z_i \cdot z_j)}\right) \\
    &= \log \left(\frac{\sum_{k \in [K]} \sum_{m \in \mathcal{B}_k}\frac{w_k}{B_k}\exp(z_i \cdot z_m)}{\exp\left(\frac{1}{B_y-1}\sum_{j \in \mathcal{B}_y \backslash\singleton{i}}z_i \cdot z_j\right)}\right)\\
    &= \log \left(\frac{\sum_{p \in \mathcal{B}_y\backslash\singleton{i}}\frac{w_y}{B_y-1}\exp(z_i \cdot z_p) + \sum_{k \in [K]\backslash\singleton{y}} \sum_{m \in \mathcal{B}_k}\frac{w_k}{B_k}\exp(z_i \cdot z_m)}{\exp\left(\frac{1}{B_y-1}\sum_{j \in \mathcal{B}_y \backslash\singleton{i}}z_i \cdot z_j\right)}\right).
\end{align}
Applying Jensen's inequality to the first term in the numerator, we obtain
\begin{align}
    \frac{1}{B_y-1}\sum_{p \in \mathcal{B}_y \backslash \singleton{i}}\exp{(z_i \cdot z_p)} \ge \exp{\left(\frac{1}{B_y-1}\sum_{p \in \mathcal{B}_y \backslash \singleton{i}} z_i \cdot z_p \right)}, \label{lem1-bound-1}
\end{align}
where equality is attained if and only if
\begin{itemize}
    \item[(a)] there is a constant \(\delta_i\) such that \(z_i \cdot z_p = \delta_i, \text{ for all } p \in \mathcal{B}_y \backslash \singleton{i}\).
\end{itemize}
Applying Jensen's inequality to the second term in the numerator, we obtain
\begin{align}
    \sum_{k \in [K]\backslash \singleton{y} } \frac{1}{B_k} \sum_{m \in \mathcal{B}_k}\exp(z_i \cdot z_m) \ge \sum_{k \in [K]\backslash \singleton{y} } \exp\left(\frac{1}{B_k} \sum_{m \in \mathcal{B}_k}z_i \cdot z_m\right),  \label{lem1-bound-2}
\end{align}
where equality is attained if and only if
\begin{itemize}
    \item[(b)] for each \(k \in [K] \backslash \singleton{y}\), there is a constant \(\delta'_{k,i}\) such that \(z_i \cdot z_m = \delta'_{k,i}, \text{ for all } m \in \mathcal{B}_k\).
\end{itemize}
Applying the Cauchy-Schwarz inequality to the denominator, we obtain
\begin{align}
    \frac{1}{\exp(\frac{1}{B_y -1}\sum_{j \in \mathcal{B}_y \backslash \singleton{i}}z_i \cdot z_j)} \ge \frac{1}{\exp(\frac{1}{B_y -1}\sum_{j \in \mathcal{B}_y \backslash \singleton{i}} \|z_i\| \|z_j\|)} = \frac{1}{\exp(1)} = e^{-1},  \label{lem1-bound-3}
\end{align}
where equality is attained if and only if
\begin{itemize}
    \item[(c)] \(z_i = z_j\) for all \(i, j \in \mathcal{B}_y\).
\end{itemize}
Substituting the three inequalities \eqref{lem1-bound-1}, \eqref{lem1-bound-2}, and \eqref{lem1-bound-3} into the rewritten anchor-wise loss yields
\begin{align}
    \log &\left(\frac{\sum_{p \in \mathcal{B}_y\backslash\singleton{i}}\frac{w_y}{B_y-1}\exp(z_i \cdot z_p) + \sum_{k \in [K]\backslash\singleton{y}} \sum_{m \in \mathcal{B}_k}\frac{w_k}{B_k}\exp(z_i \cdot z_m)}{\exp\left(\frac{1}{B_y-1}\sum_{j \in \mathcal{B}_y \backslash\singleton{i}}z_i \cdot z_j\right)}\right)\\
    &\overset{\text{Eq.} \eqref{lem1-bound-1}}{\ge}  \log \left(\frac{w_y \exp\left(\frac{1}{B_y-1} \sum_{p \in \mathcal{B}_y\backslash\singleton{i}} z_i \cdot z_p\right) + \sum_{k \in [K]\backslash\singleton{y}} \sum_{m \in \mathcal{B}_k}\frac{w_k}{B_k}\exp(z_i \cdot z_m)}{\exp\left(\frac{1}{B_y-1}\sum_{j \in \mathcal{B}_y \backslash\singleton{i}}z_i \cdot z_j\right)}\right)\\
    &= \log \left(w_y + \frac{\sum_{k \in [K]\backslash\singleton{y}} \sum_{m \in \mathcal{B}_k}\frac{w_k}{B_k}\exp(z_i \cdot z_m)}{\exp\left(\frac{1}{B_y-1}\sum_{j \in \mathcal{B}_y \backslash\singleton{i}}z_i \cdot z_j\right)}\right) \\
    &\overset{\text{Eq.} \eqref{lem1-bound-2}}{\ge}  \log \left(w_y + \frac{ \sum_{k \in [K]\backslash\singleton{y}} w_k \exp\left(\frac{1}{B_k}\sum_{m \in \mathcal{B}_k} z_i \cdot z_m\right)}{\exp\left(\frac{1}{B_y-1}\sum_{j \in \mathcal{B}_y \backslash\singleton{i}}z_i \cdot z_j\right)}\right)\\
    &\overset{\text{Eq.} \eqref{lem1-bound-3}}{\ge}  \log \left(w_y + e^{-1}\sum_{k \in [K]\backslash\singleton{y}} w_k \exp\left(z_i \cdot \bar{z}_k\right)\right),
\end{align}
where equality is attained if and only if conditions~\ref{(C2)} and~\ref{(C3)} are satisfied.
\end{proof}
\subsubsection{Lower bound of the class-wise loss (Lemma~\ref{appendix-lem-class-wise})}
Lemma~\ref{appendix-lem-class-wise} provides a lower bound for the class-wise loss obtained by averaging the anchor-wise lower bound over a single class.
\begin{applem}\label{appendix-lem-class-wise}(Lower bound of the class-wise loss)
    Let \(Z, \bar{Z}, Y\) be defined as in Proposition~\ref{appendix-prop-scl}. For any \(Z\) and \(Y\), fix a class \(y\) and let \(w_k > 0\) for all \(k \in [K]\). Then,
    \begin{align}
        \frac{1}{B_y}\sum_{i \in \mathcal{B}_y} \log\!\left(\! w_y + e^{-1}\!\!\!\! \sum_{k \in [K]\backslash \singleton{y}}\!\!\!w_k \exp(z_i \cdot \bar{z}_k) \!\!\right) \ge\! \log\!\left( \!w_y + e^{-1}\!\!\!\!\sum_{k \in [K]\backslash \singleton{y}}w_k\exp(\bar{z}_y \cdot \bar{z}_k)\! \right),
    \end{align}
    where equality is attained if and only if the following condition is satisfied:
    \begin{enumerate}[label=(C\arabic*), resume*=LBconditions]
        \item For each \(k \in [K]\backslash\singleton{y}\), there is a constant \(\delta_k\) such that \(z_i \cdot \bar{z}_k = \delta_k\), for all \(i\in \mathcal{B}_y\). \label{(C4)}
    \end{enumerate}
\end{applem}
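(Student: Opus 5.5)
The plan is to reduce the inequality to Jensen's inequality for a convex function of the vector $z_i$. Fix the class $y$ and the weights $w_k>0$. Define $g:\mathbb{R}^h\to\mathbb{R}$ by
\begin{align}
    g(u) = \log\Big( w_y + e^{-1}\sum_{k \in [K]\backslash\singleton{y}} w_k \exp(u \cdot \bar{z}_k)\Big).
\end{align}
The left-hand side of the lemma is $\frac{1}{B_y}\sum_{i\in\mathcal{B}_y} g(z_i)$. The right-hand side is $g\big(\frac{1}{B_y}\sum_{i\in\mathcal{B}_y} z_i\big)=g(\bar z_y)$. Hence it suffices to show that $g$ is convex and then apply the finite Jensen inequality with uniform weights $1/B_y$.

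For convexity, I will view $g$ as a log-sum-exp composed with an affine map. Write $w_y=\exp(\log w_y)$ and $e^{-1}w_k\exp(u\cdot\bar z_k)=\exp(u\cdot\bar z_k+\log w_k-1)$. This is legitimate because $w_k>0$ for all $k$. Then
\begin{align}
    g(u)=\mathrm{LSE}\big(\log w_y,\ \{u\cdot \bar z_k+\log w_k-1\}_{k\neq y}\big).
\end{align}
Each argument is affine in $u$, and $\mathrm{LSE}$ is convex, so $g$ is convex.

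The main obstacle is the equality characterization (C4), since log-sum-exp is not strictly convex. I would handle it by restricting to the segment between two anchors. Suppose $z_i\cdot\bar z_k\ne z_{i'}\cdot\bar z_k$ for some $i,i'\in\mathcal{B}_y$ and some $k$. Consider $t\mapsto g(z_i+t(z_{i'}-z_i))$. This is the log of a sum of exponentials of affine functions of $t$, and it includes the constant term $w_y>0$. Let $v=z_{i'}-z_i$.
\begin{itemize}
    \item Its second derivative is the variance of the slopes $(0,\ \{v\cdot\bar z_k\}_{k\neq y})$ under a softmax distribution with strictly positive weights.
    \item This variance vanishes iff all slopes are equal, hence iff they are all $0$, since the constant term contributes slope $0$.
\end{itemize}
So $g$ is strictly convex along $v$ unless $v\cdot\bar z_k=0$ for every $k\neq y$.

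Jensen with positive weights is therefore tight iff $g$ is affine on the convex hull of $\{z_i\}_{i\in\mathcal{B}_y}$. By the computation above, this happens iff all differences $z_i-z_{i'}$ are orthogonal to each $\bar z_k$, $k\ne y$. That is exactly the statement that $z_i\cdot\bar z_k=\delta_k$ is constant over $i\in\mathcal{B}_y$, which is (C4).

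For the converse direction, if (C4) holds then $g(z_i)$ is the same for all $i\in\mathcal{B}_y$. Moreover $\bar z_y\cdot\bar z_k=\delta_k$, so $g(\bar z_y)$ takes that same value and equality holds.
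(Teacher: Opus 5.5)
Your proof is correct and follows the same route as the paper: Jensen's inequality applied to the convex function $g(u)=\log\bigl(w_y+e^{-1}\sum_{k\neq y}w_k\exp(u\cdot\bar z_k)\bigr)$, with the equality case coming from strict convexity in the logits $(u\cdot\bar z_k)_{k\neq y}$. The paper only asserts that strict convexity, whereas your softmax-variance computation along segments actually justifies it: the constant term $w_y$ contributes slope $0$, so equal slopes force every $v\cdot\bar z_k$ to vanish.
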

\begin{proof}
The inequality follows from applying Jensen's inequality to the function 
\[f(z) = \log \left(w_y + e^{-1} \sum_{k \in [K] \backslash \singleton{y}}w_k \exp(z \cdot \bar{z}_k)\right).\]
Since \(f\) is convex, we have \(\frac{1}{B_y}\sum_{i \in \mathcal{B}_y}f(z_i) \ge f\left( \frac{1}{B_y}\sum_{i\in \mathcal{B}_y}z_i\right)\), which yields the desired result. Since all weights are positive, $f$ is strictly convex in the logits $(z \cdot \bar{z}_k)_{k \neq y}$, and thus equality holds if and only if \ref{(C4)} holds.
\end{proof}
\subsubsection{Proofs of Propositions~\ref{appendix-prop-scl},~\ref{appendix-prop-bcl}, and~\ref{appendix-prop-FBCL}}
We now derive the class-level lower bounds of the three losses by applying Lemma~\ref{appendix-lem-anchor-wise} to bound each anchor-wise loss, and then Lemma~\ref{appendix-lem-class-wise} to bound each class-wise loss. The three propositions differ in their negative class weights and the aggregation of anchor-wise losses. Combining the equality conditions~\ref{(C2)},~\ref{(C3)}, and~\ref{(C4)} across all \(y \in [K]\) yields the class-collapse condition~\ref{(C1)-appendix}.
\begin{enumerate}[label=(C\arabic*)]
        \item (\textbf{Class-collapse condition}) For all \(y \in [K]\), \(z_i = \bar{z}_y\) for all \(i \in \mathcal{B}_y\). \label{(C1)-appendix}
\end{enumerate}
\begin{appprop}\label{appendix-prop-scl}(Lower bound of SCL loss)
    Let \(Z=(z_1, \dots, z_B)\in(\mathbb{S}^{h-1})^B\) be a \(B\)-point configuration with labels \(Y=(y_1, \dots, y_B) \in [K]^B\). Then,
    \begin{align}\label{appendix-lowerbound-scl}
        \mathcal{L}_{\mathrm{SCL}}(Z;Y, \mathcal{B}) \ge \frac{1}{B} \sum_{y \in [K]} B_y \log \left( B_y-1 + e^{-1}\sum\limits_{k \in [K]\backslash \singleton{y}} B_k \exp(\bar{z}_y \cdot \bar{z}_k) \right) \eqcolon \mathcal{LB}_{\mathrm{SCL}}(\bar{Z};K),
    \end{align}
    where \(\bar{z}_y = \frac{1}{B_y}\sum_{i \in \mathcal{B}_y}z_i\) and \(\bar{Z}=(\bar{z}_1, \dots, \bar{z}_K)\). Equality is attained if and only if condition~\ref{(C1)-appendix} is satisfied.
\end{appprop}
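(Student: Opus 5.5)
The plan is to apply Lemma~\ref{appendix-lem-anchor-wise} to every anchor, then Lemma~\ref{appendix-lem-class-wise} to every class, and finally combine the equality conditions. We take the SCL weights: $w_k=B_k$ for $k\neq y$ and $w_y=B_y-1$.

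\emph{Step 1: rewrite the denominator.} For an anchor $i\in\mathcal{B}_y$, the SCL denominator $\sum_{k\in\mathcal{B}\setminus\{i\}}\exp(z_i\cdot z_k)$ splits into a same-class part and an other-class part. The same-class part is $\sum_{p\in\mathcal{B}_y\setminus\{i\}}\frac{w_y}{B_y-1}\exp(z_i\cdot z_p)$, since $w_y/(B_y-1)=1$. The other-class part is $\sum_{k\neq y}\sum_{m\in\mathcal{B}_k}\frac{w_k}{B_k}\exp(z_i\cdot z_m)$, since $w_k/B_k=1$. So $\ell^{(i)}_{\mathrm{SCL}}$ is exactly the anchor-wise loss of Lemma~\ref{appendix-lem-anchor-wise} with these weights. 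That lemma gives
\[
\ell^{(i)}_{\mathrm{SCL}}\ge \log\Bigl(B_y-1+e^{-1}\sum_{k\neq y}B_k\exp(z_i\cdot\bar z_k)\Bigr).
\]

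\emph{Step 2: aggregate.} Write $\mathcal{L}_{\mathrm{SCL}}=\frac1B\sum_y B_y\cdot\frac{1}{B_y}\sum_{i\in\mathcal{B}_y}\ell^{(i)}_{\mathrm{SCL}}$. Apply Lemma~\ref{appendix-lem-class-wise} with the same positive weights to each inner average. This replaces $z_i$ by $\bar z_y$ inside the convex function and yields $\mathcal{LB}_{\mathrm{SCL}}(\bar Z;K)$. Note that $B_y-1\ge1>0$ because $B_y\ge2$, so the weights are positive as that lemma requires.

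\emph{Step 3: equality (the main point needing care).} For sufficiency, assume (C1). Then every inner product within a class equals 1, and every inner product between $z_i\in\mathcal{B}_y$ and a member of $\mathcal{B}_k$ equals $\bar z_y\cdot\bar z_k$. Hence all Jensen and Cauchy--Schwarz steps are tight, (C3) and (C4) hold trivially, and $z_i\cdot\bar z_k=\bar z_y\cdot\bar z_k$. For necessity, equality in the sum forces equality in every anchor-wise bound, because each bound is a termwise inequality and the coefficients $B_y/B$ are positive. Equality in an anchor-wise bound includes (C2), i.e. $z_i=z_j$ for all $i,j\in\mathcal{B}_y$; this comes from the Cauchy--Schwarz step, where $z_i\cdot z_j=1$ on the unit sphere forces $z_i=z_j$. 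Condition (C2) for every $y$ is exactly (C1), since it gives $z_i=\bar z_y$. The only subtlety is making sure that tightness of the total forces tightness of each individual inequality. This holds because each inequality is of the form "larger term $\ge$ smaller term", the bounds are chained through the monotone function $\log$ applied to positive quantities, and every coefficient is positive.
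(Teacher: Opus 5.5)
Your proposal is correct and follows the paper's proof: you instantiate the anchor-wise lemma with $w_k=B_k$ for $k\neq y$ and $w_y=B_y-1$, so that the weighted denominator is exactly the SCL denominator, and then apply the class-wise lemma to each class average. Your equality argument spells out what the paper states in one line ("combining (C2)--(C4) yields class collapse"): tightness of the total forces tightness of every termwise inequality because all coefficients are positive, and the Cauchy--Schwarz step then gives (C2) and hence (C1).
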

\paragraph{Proof of Proposition~\ref{appendix-prop-scl}.}
\begin{proof}
Setting \(w_k = B_k\) (with \(w_y = B_y - 1\) when \(k = y\)), we apply Lemmas~\ref{appendix-lem-anchor-wise} and~\ref{appendix-lem-class-wise} sequentially:
\begin{align}
    \mathcal{L}_{\mathrm{SCL}}(Z;Y, \mathcal{B}) &= -\frac{1}{B} \sum_{y \in [K]} \sum\limits_{i \in \mathcal{B}_y} \frac{1}{B_y - 1} \sum_{j \in \mathcal{B}_y \backslash \singleton{i}}\log \left( \frac{\exp(z_i \cdot z_j)}{\sum_{k \in \mathcal{B} \backslash \singleton{i}}\exp(z_i \cdot z_k)} \right)\\
    &\overset{\text{Lem.~}\ref{appendix-lem-anchor-wise}}{\ge} \frac{1}{B} \sum_{y \in [K]} \sum\limits_{i \in \mathcal{B}_y}\log \left( B_y-1 + e^{-1}\sum_{k \in [K]\backslash\singleton{y}}B_k \exp(z_i \cdot \bar{z}_k)\right)\\
    &= \frac{1}{B} \sum_{y \in [K]} B_y \frac{1}{B_y} \sum\limits_{i \in \mathcal{B}_y}\log \left( B_y-1 + e^{-1}\sum_{k \in [K]\backslash\singleton{y}}B_k \exp(z_i \cdot \bar{z}_k)\right) \\
    &\overset{\text{Lem.~}\ref{appendix-lem-class-wise}}{\ge} \frac{1}{B} \sum_{y \in [K]} B_y \log \left( B_y-1 + e^{-1}\sum_{k \in [K]\backslash\singleton{y}}B_k \exp(\bar{z}_y \cdot \bar{z}_k)\right).
\end{align}
\end{proof}
\begin{appprop}\label{appendix-prop-bcl}(Lower bound of BCL loss)
    Let \(Z, \bar{Z}, Y\) be defined as in Proposition~\ref{appendix-prop-scl}. Then,
    \begin{align}\label{appendix-lowerbound-bcl}
        \mathcal{L}_{\mathrm{BCL}}(Z;Y, \mathcal{B}) \ge \frac{1}{B} \sum_{y \in [K]} B_y \log \left( 1 + e^{-1}\sum\limits_{k \in [K]\backslash \singleton{y}}  \exp(\bar{z}_y \cdot \bar{z}_k) \right) \eqcolon \mathcal{LB}_{\mathrm{BCL}}(\bar{Z};K).
    \end{align}
    Equality is attained if and only if condition~\ref{(C1)-appendix} is satisfied.
\end{appprop}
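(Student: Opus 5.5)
The plan is to mirror the proof of Proposition~\ref{appendix-prop-scl}, now with the BCL weights. The only real change is in the choice of weights. Under the convention of Lemma~\ref{appendix-lem-anchor-wise}, the BCL denominator $\sum_{k\in[K]}\frac{1}{B_k}\sum_{m\in\mathcal{B}_k}\exp(z_i\cdot z_m)$ matches the general form with $w_k=1$ for all $k\in[K]$. This includes the own class $k=y$, where the anchor is excluded and $B_y$ is replaced by $B_y-1$, so $w_y=1$ as well.

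First, for each anchor $i\in\mathcal{B}_y$, I apply Lemma~\ref{appendix-lem-anchor-wise} with $w_k\equiv 1$. This gives
\begin{align}
\ell^{(i)}_{\mathrm{BCL}} \ge \log\Big(1+e^{-1}\sum_{k\in[K]\backslash\singleton{y}}\exp(z_i\cdot\bar z_k)\Big),
\end{align}
with equality if and only if \ref{(C2)} and \ref{(C3)} hold for that anchor.

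Next, I rewrite $\mathcal{L}_{\mathrm{BCL}}=\frac{1}{B}\sum_{y}B_y\cdot\frac{1}{B_y}\sum_{i\in\mathcal{B}_y}\ell^{(i)}_{\mathrm{BCL}}$, exactly as in Eq.~\eqref{sec:fbcl_bcl_structure}. I then apply Lemma~\ref{appendix-lem-class-wise}, again with $w_k\equiv1$, to each inner class average. This replaces $z_i$ by $\bar z_y$ and yields $\mathcal{LB}_{\mathrm{BCL}}(\bar Z;K)$. Since the outer weights $B_y/B$ are positive, the two inequalities chain together.

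For the equality characterization, the ``if'' direction is immediate. Under \ref{(C1)-appendix}, every $z_i$ in class $y$ equals $\bar z_y$. Then \ref{(C2)} holds trivially, \ref{(C3)} holds with $\delta_{k,i}=\bar z_y\cdot\bar z_k$, and \ref{(C4)} holds because $z_i\cdot\bar z_k=\bar z_y\cdot\bar z_k$ is constant in $i$. So each inequality is tight.

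For the ``only if'' direction, suppose the overall bound is tight. Each step is a sum of nonnegative gaps weighted by positive coefficients, so every anchor-wise and class-wise inequality must be tight. In particular, \ref{(C2)} must hold for every anchor of every class: all features in $\mathcal{B}_y$ coincide. Their common value then equals their mean $\bar z_y$, which lies on the sphere. This is exactly \ref{(C1)-appendix}.

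The main point to check carefully is that the equality conditions of Lemma~\ref{appendix-lem-anchor-wise} apply unchanged with $w_k=1$. In particular, the Cauchy--Schwarz step (c) forcing $z_i=z_j$ needs the $B_y\ge2$ positives per anchor. I also need the strict positivity of the weights $B_y/B$ for the propagation argument in the ``only if'' direction. Everything else is a direct substitution into the already-established lemmas.
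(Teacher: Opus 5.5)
Your proposal is correct and follows the paper's proof. You set $w_k\equiv 1$ (including $w_y=1$ for the own-class term with the anchor removed), apply Lemma~\ref{appendix-lem-anchor-wise} to each anchor, and then apply Lemma~\ref{appendix-lem-class-wise} to each class average under the positive weights $B_y/B$. Your explicit if/only-if argument for the equality case (tightness propagates to every anchor, so \ref{(C2)} collapses each class onto its mean) spells out what the paper compresses into ``combining \ref{(C2)}, \ref{(C3)}, and \ref{(C4)} yields \ref{(C1)-appendix}.''
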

\paragraph{Proof of Proposition~\ref{appendix-prop-bcl}.}
\begin{proof}
Setting \(w_k = 1\) for all \(k \in [K]\), we apply Lemmas~\ref{appendix-lem-anchor-wise} and~\ref{appendix-lem-class-wise} sequentially:
\begin{align}
    \mathcal{L}_{\mathrm{BCL}}(Z;Y, \mathcal{B}) &= -\frac{1}{B} \sum_{y \in [K]} \sum\limits_{i \in \mathcal{B}_y} \frac{1}{B_y - 1} \sum_{j \in \mathcal{B}_y \backslash \singleton{i}}\log \left( \frac{\exp(z_i \cdot z_j)}{\sum_{k \in [K]} \frac{1}{B_k} \sum_{m \in \mathcal{B}_k}\exp(z_i \cdot z_m)} \right)\\
    &\overset{\text{Lem.~}\ref{appendix-lem-anchor-wise}}{\ge} \frac{1}{B} \sum_{y \in [K]} \sum\limits_{i \in \mathcal{B}_y}\log \left( 1 + e^{-1}\sum_{k \in [K]\backslash\singleton{y}} \exp(z_i \cdot \bar{z}_k)\right)\\
    &= \frac{1}{B} \sum_{y \in [K]} B_y \frac{1}{B_y} \sum\limits_{i \in \mathcal{B}_y}\log \left(1 + e^{-1}\sum_{k \in [K]\backslash\singleton{y}} \exp(z_i \cdot \bar{z}_k)\right) \\
    &\overset{\text{Lem.~}\ref{appendix-lem-class-wise}}{\ge} \frac{1}{B} \sum_{y \in [K]} B_y \log \left(1 + e^{-1}\sum_{k \in [K]\backslash\singleton{y}} \exp(\bar{z}_y \cdot \bar{z}_k)\right).
\end{align}
\end{proof}
\begin{appprop}\label{appendix-prop-FBCL}(Lower bound of FBCL loss)
    Let \(Z, \bar{Z}, Y\) be defined as in Proposition~\ref{appendix-prop-scl}. Then,
    \begin{align}
        \mathcal{L}_{\mathrm{FBCL}}(Z;Y, \mathcal{B}) \ge \frac{1}{K} \sum_{y \in [K]}\log \left( 1 + e^{-1}\sum\limits_{k \in [K]\backslash \singleton{y}}  \exp(\bar{z}_y \cdot \bar{z}_k) \right) \eqcolon \mathcal{LB}_{\mathrm{FBCL}}(\bar{Z};K).
    \end{align}
    Equality is attained if and only if condition~\ref{(C1)-appendix} is satisfied.
\end{appprop}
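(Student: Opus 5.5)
The plan is to follow the same two-step procedure used for Propositions~\ref{appendix-prop-scl} and~\ref{appendix-prop-bcl}, now with the structurally balanced aggregation. Since \(\ell_{\mathrm{FBCL}}^{(i)}=\ell_{\mathrm{BCL}}^{(i)}\), I set \(w_k=1\) for all \(k\in[K]\), including \(w_y=1\) for the anchor's own class. Lemma~\ref{appendix-lem-anchor-wise} then applies verbatim. The convention of excluding the anchor and replacing \(B_y\) by \(B_y-1\) is exactly the one in that lemma. For every anchor \(i\in\mathcal{B}_y\) it yields
\begin{align}
\ell_{\mathrm{FBCL}}^{(i)} \ge \log\Big(1+e^{-1}\sum_{k\in[K]\backslash\singleton{y}}\exp(z_i\cdot\bar z_k)\Big),
\end{align}
with equality if and only if \ref{(C2)} and \ref{(C3)} hold for that anchor.

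The key structural point is the aggregation step. FBCL already writes the loss as \(\frac{1}{K}\sum_{y}\big(\frac{1}{B_y}\sum_{i\in\mathcal{B}_y}\ell^{(i)}\big)\). The inner class average is therefore exactly the left-hand side of Lemma~\ref{appendix-lem-class-wise} with \(w_k=1\). No reweighting by \(B_y\) is needed here, unlike the SCL and BCL cases where one inserts \(B_y\cdot\frac{1}{B_y}\). Applying Lemma~\ref{appendix-lem-class-wise} class by class gives
\begin{align}
\frac{1}{B_y}\sum_{i\in\mathcal{B}_y}\log\Big(1+e^{-1}\sum_{k\neq y}\exp(z_i\cdot\bar z_k)\Big)\ge \log\Big(1+e^{-1}\sum_{k\neq y}\exp(\bar z_y\cdot\bar z_k)\Big).
\end{align}
Averaging over \(y\) with the uniform weight \(1/K\) produces \(\mathcal{LB}_{\mathrm{FBCL}}(\bar Z;K)\).

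For the equality characterization, sufficiency is straightforward. Under \ref{(C1)-appendix}, every \(z_i\) in class \(y\) equals \(\bar z_y\). This gives \ref{(C2)} trivially, and \ref{(C3)} and \ref{(C4)} hold because all the inner products involved are constant, \(z_i\cdot z_m=\bar z_y\cdot\bar z_k\). Hence every Jensen and Cauchy--Schwarz step is tight. Necessity comes from the anchor-wise step. All weights \(1/K\) and \(1/B_y\) are strictly positive, so equality overall forces equality in each anchor-wise bound. In particular \ref{(C2)} must hold for every class, so all \(z_i\) with \(i\in\mathcal{B}_y\) coincide. Since they lie on the sphere, they coincide with their mean \(\bar z_y\), which is \ref{(C1)-appendix}.

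The main obstacle is making this last deduction rigorous. The Cauchy--Schwarz step in Lemma~\ref{appendix-lem-anchor-wise} is tight only when \(z_i\cdot z_j=1\) for all \(j\in\mathcal{B}_y\backslash\singleton{i}\). For unit vectors this means \(z_j=z_i\), and it uses \(B_y\ge 2\) so that the set is nonempty. I will state this point explicitly, and also note that the mean of identical unit vectors is again a unit vector, so \(\bar Z\in(\mathbb{S}^{h-1})^K\) in the equality case.
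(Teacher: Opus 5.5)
Your proposal is correct and follows the paper's proof: set $w_k=1$, apply Lemma~\ref{appendix-lem-anchor-wise} to each anchor, then apply Lemma~\ref{appendix-lem-class-wise} directly to each inner class average (no $B_y\cdot\frac{1}{B_y}$ insertion is needed), and average over classes with weight $1/K$. Your necessity argument for the equality case is more explicit than the paper's, which simply asserts that combining \ref{(C2)}--\ref{(C4)} gives class collapse: you note that strictly positive weights force each anchor-wise bound to be tight, and that tightness of the Cauchy--Schwarz step with $B_y\ge 2$ makes all features of a class coincide with $\bar z_y$.
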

\paragraph{Proof of Proposition~\ref{appendix-prop-FBCL}.}
\begin{proof}
Setting \(w_k = 1\) for all \(k \in [K]\), we apply Lemmas~\ref{appendix-lem-anchor-wise} and~\ref{appendix-lem-class-wise} sequentially:
\begin{align}
    \mathcal{L}_{\mathrm{FBCL}}(Z;Y, \mathcal{B}) &= -\frac{1}{K} \sum_{y \in [K]} \frac{1}{B_y}\sum\limits_{i \in \mathcal{B}_y} \frac{1}{B_y - 1} \sum_{j \in \mathcal{B}_y \backslash \singleton{i}}\log \left( \frac{\exp(z_i \cdot z_j)}{\sum_{k \in [K]} \frac{1}{B_k} \sum_{m \in \mathcal{B}_k}\exp(z_i \cdot z_m)} \right)\\
    &\overset{\text{Lem.~}\ref{appendix-lem-anchor-wise}}{\ge} \frac{1}{K} \sum_{y \in [K]} \frac{1}{B_y}\sum\limits_{i \in \mathcal{B}_y}\log \left( 1 + e^{-1}\sum_{k \in [K]\backslash\singleton{y}} \exp(z_i \cdot \bar{z}_k)\right)\\
    &\overset{\text{Lem.~}\ref{appendix-lem-class-wise}}{\ge} \frac{1}{K} \sum_{y \in [K]} \log \left(1 + e^{-1}\sum_{k \in [K]\backslash\singleton{y}} \exp(\bar{z}_y \cdot \bar{z}_k)\right).
\end{align}
\end{proof}
\subsection{Decomposition of losses}\label{appendix-decomposition}
We rewrite each loss by separating the numerator and the denominator inside the \(\log\) into an alignment term \(\mathcal{L}^{\text{alignment}}\) and a uniformity term \(\mathcal{L}^{\text{uniformity}}\) (we discard \((Z;Y, \mathcal{B})\) for brevity):
\begin{align}
    \mathcal{L}_{\mathrm{SCL}} &= \mathcal{L}_{\mathrm{SCL}}^{\text{alignment}} + \mathcal{L}_{\mathrm{SCL}}^{\text{uniformity}}, \label{eq:decomp-scl}\\
    \mathcal{L}_{\mathrm{BCL}} &= \mathcal{L}_{\mathrm{BCL}}^{\text{alignment}} + \mathcal{L}_{\mathrm{BCL}}^{\text{uniformity}}, \label{eq:decomp-bcl}\\
    \mathcal{L}_{\mathrm{FBCL}} &= \mathcal{L}_{\mathrm{FBCL}}^{\text{alignment}} + \mathcal{L}_{\mathrm{FBCL}}^{\text{uniformity}}, \label{eq:decomp-fbcl-appendix}
\end{align}
where the alignment terms are given by
{\small
\begin{align}
    \mathcal{L}_{\mathrm{SCL}}^{\text{alignment}}
    = \mathcal{L}_{\mathrm{BCL}}^{\text{alignment}}
    &= -\frac{1}{B}\sum_{y \in [K]} \sum_{i\in \mathcal{B}_y}
    \frac{1}{B_y-1}\sum_{j\in \mathcal{B}_y\backslash \singleton{i}} (z_i \cdot z_j),
    \\
    \mathcal{L}_{\mathrm{FBCL}}^{\text{alignment}}
    &= -\frac{1}{K}\sum_{y \in [K]} \frac{1}{B_y}\sum_{i\in \mathcal{B}_y}
    \frac{1}{B_y-1}\sum_{j\in \mathcal{B}_y\backslash \singleton{i}} (z_i \cdot z_j),
\end{align}
}
and the uniformity terms are given by
{\small
\begin{align}
    \mathcal{L}_{\mathrm{SCL}}^{\text{uniformity}}
    &= \frac{1}{B}\sum_{y \in [K]} \sum_{i\in \mathcal{B}_y}
    \log\left(\sum_{k\in \mathcal{B}\backslash \singleton{i}}\exp(z_i\cdot z_k)\right), \\
    \mathcal{L}_{\mathrm{BCL}}^{\text{uniformity}}
    &= \frac{1}{B}\sum_{y \in [K]} \sum_{i\in \mathcal{B}_y}
    \log\left(\sum_{k\in [K]}\frac{1}{B_k}\sum_{m\in \mathcal{B}_k}\exp(z_i\cdot z_m)\right), \\
    \mathcal{L}_{\mathrm{FBCL}}^{\text{uniformity}}
    &= \frac{1}{K}\sum_{y \in [K]} \frac{1}{B_y}\sum_{i\in \mathcal{B}_y}
    \log\left(\sum_{k\in [K]}\frac{1}{B_k}\sum_{m\in \mathcal{B}_k}\exp(z_i\cdot z_m)\right).
\end{align}
}
The uniformity term cannot be feasibly written with respect to the intra-class variance without any assumptions. We hope that future work could assume a distribution of the features and further analyze how the uniformity term penalizes intra-class variances.
\paragraph{Alignment term as a function of intra-class variance.}
Let \(V_y = \frac{1}{B_y}\sum_{i \in \mathcal{B}_y} \|z_i - \bar{z}_y \|^2\). The alignment term of SCL or BCL (we only write for the SCL case) can be written using the following identities:
\begin{align}
    V_y &= \frac{1}{B_y} \sum_{i\in \mathcal{B}_y}\|z_i - \bar{z}_y\|^2 \\
    &= \frac{1}{B_y } \sum_{i \in \mathcal{B}_y} \left(\|z_i \|^2 - 2 z_i \cdot \bar{z}_y + \|\bar{z}_y\|^2\right)\\
    &= \frac{1}{B_y } \sum_{i \in \mathcal{B}_y} 1 - \frac{2}{B_y }\sum_{i \in \mathcal{B}_y} z_i \cdot \bar{z}_y + \frac{1}{B_y }\sum_{i \in\mathcal{B}_y} \|\bar{z}_y\|^2\\
    &=1 - \|\bar{z}_y \|^2,\\[10pt]
    \sum_{i\in \mathcal{B}_y}\sum_{j \in \mathcal{B}_y \backslash \singleton{i}}z_i \cdot z_j &= \sum_{i \in \mathcal{B}_y} \sum_{j \in \mathcal{B}_y} z_i \cdot z_j - \sum_{i \in \mathcal{B}_y} \|z_i \|^2 \\
    &= \left\|\sum_{i \in \mathcal{B}_y}z_i \right\|^2 - \sum_{i \in \mathcal{B}_y}\|z_i\|^2\\[5pt]
    &= B_y^2\|\bar{z}_y\|^2 - B_y \\[5pt]
    &= B_y^2 (\|\bar{z}_y \|^2 - 1) + B_y^2 - B_y \\[5pt]
    &= -B_y^2 (V_y) + B_y(B_y-1).
\end{align}
Using the above,
\begin{align}
    \mathcal{L}_{\mathrm{SCL}}^{\text{alignment}} &= -\frac{1}{B}\sum_{y \in [K]} \sum_{i\in \mathcal{B}_y}
    \frac{1}{B_y-1}\sum_{j\in \mathcal{B}_y\backslash \singleton{i}} z_i\cdot z_j\\
    &= -\frac{1}{B}\sum_{y \in [K]} 
    \frac{1}{B_y-1} \sum_{i\in \mathcal{B}_y}\sum_{j \in \mathcal{B}_y \backslash \singleton{i}}z_i \cdot z_j \\
    &= - \frac{1}{B}\sum_{y \in [K]} \frac{1}{B_y-1}
    \big(-B_y^2 (V_y) + B_y(B_y-1)\big)\\
    &= \sum_{y \in [K]} \frac{B_y^2}{B(B_y-1)}V_y -
     \sum_{y \in [K]}\frac{1}{B}\frac{B_y(B_y-1)}{(B_y-1)}\\
    &= \sum_{y \in [K]} \frac{B_y^2}{B(B_y-1)}V_y - 1.
\end{align}
Similarly, 
\begin{align}
    \mathcal{L}_{\mathrm{FBCL}}^{\text{alignment}} &= -\frac{1}{K}\sum_{y \in [K]} \frac{1}{B_y} \sum_{i\in \mathcal{B}_y}
    \frac{1}{B_y-1}\sum_{j\in \mathcal{B}_y\backslash \singleton{i}} z_i\cdot z_j\\
    &= -\frac{1}{K}\sum_{y \in [K]} 
    \frac{1}{B_y(B_y-1)} \sum_{i\in \mathcal{B}_y}\sum_{j \in \mathcal{B}_y \backslash \singleton{i}}z_i \cdot z_j \\
    &= - \frac{1}{K}\sum_{y \in [K]} \frac{1}{B_y(B_y-1)}
    \big(-B_y^2 (V_y) + B_y(B_y-1)\big)\\
    &= \sum_{y \in [K]} \frac{B_y}{K(B_y-1)}V_y -
     \sum_{y \in [K]}\frac{1}{K}\frac{B_y(B_y-1)}{B_y(B_y-1)}\\
    &= \sum_{y \in [K]} \frac{B_y}{K(B_y-1)}V_y - 1.
\end{align}
\subsection{Balancing components in FBCL}\label{Appendix-balancing_components}
As described in Section~\ref{method-sb}, FBCL contains three balancing components: positive balancing (PB), which averages over positive samples as in SCL; negative balancing (NB), which averages negative samples per class in the denominator as in BCL; and structural balancing (SB), which normalizes the aggregation of anchor-wise losses. The following equation marks each method within the FBCL loss:
\begin{align}
    \mathcal{L}_{\mathrm{FBCL}}(Z;Y, \mathcal{B}) = -\frac{1}{K} \sum_{y \in [K]} \;
    \underbrace{\frac{1}{B_y}}_{\text{SB}}\;
    \sum_{i \in \mathcal{B}_y} \;
    \underbrace{\frac{1}{B_y - 1}\sum_{j \in \mathcal{B}_y \backslash \{i\}}}_{\text{PB}}\;
    \log \Big( \frac{\exp(z_i \cdot z_j)}{\sum\limits_{ k \in [K] }\;
    \underbrace{\frac{1}{B_k}}_{\text{NB}}\; \sum\limits_{m \in \mathcal{B}_k}
    \exp(z_i \cdot z_m)} \Big).\notag
\end{align}
For positive balancing, we include the summation of positive samples so that a method without balancing corresponds to SimCLR.
\section{Experimental Details}
\subsection{Figure~\ref{fig:regular-simplex} Details}\label{Appendix-Fig1}
We use three random classes from ImageNet, and resize the images to $32\times32$ and train a 2-layer MLP that maps each flattened image through a hidden layer of dimension 256 to a two-dimensional normalized embedding. Thus, the learned features lie on the unit circle in the two-dimensional feature space. We therefore visualize the embeddings directly on this circle. We train all models using the SGD optimizer with learning rate 0.1, weight decay \(5e\!-\!4\), momentum 0.9.
\subsection{Implementation Details and Hyperparameters}\label{Appendix-implementation}
\paragraph{FBCL loss in practice.}
As we discussed in Section~\ref{sec:discussion}, our analysis follows the standard setting where each batch contains all classes. We also note that the temperature rate is fixed to $\tau=1$ in our theory, since the value does not affect the theory. In our settings, we adopt the temperature rate used in the baseline methods for a fair comparison. But in practice, the representation space is usually affected by the value of the temperature rate under both balanced and imbalanced distributions \citep{wang2021understanding,kukleva2023temperature}. We hope that further studies investigate the attainability of a regular simplex under a large-batch training regime and its relation to temperature. Since the goal of our work is not to attain state-of-the-art results, we carefully conduct experiments to do a fair comparison with the baseline methods. We use the following loss in practice:
{\small
\begin{align}
    \mathcal{L}_{\mathrm{FBCL}}(Z;Y,\mathcal{B}) =- \frac{1}{|\mathcal{Y}_\mathcal{B}|} \sum_{y \in \mathcal{Y}_\mathcal{B}} \frac{1}{B_y}\sum_{i \in \mathcal{B}_y}
    \frac{1}{B_y} \sum_{j \in (\mathcal{B}_y\cup \singleton{c_y}) \backslash \{i\}}\!\!\!\!\!\!\!\!\log \Bigg( \frac{\exp(z_i \!\cdot\! z_j/\tau)}{\sum\limits_{ k \in [K] }\! \frac{1}{B_k+1}\! \!\!\!\!\!\sum\limits_{m \in (\mathcal{B}_k\cup \singleton{c_k})}\!\!\!\!\!\exp(z_i \!\cdot\! z_m/\tau)} \!\Bigg), 
\end{align}}
where $\mathcal{Y}_\mathcal{B}$ is the index set of class labels in the batch $\mathcal{B}$, and $\{c_y\}$ are class-complemented pairs. The difference between practice and our theory lies in the class distribution of the anchors. Our theory assumes that the anchor classes cover the entire label set, \([K]\), whereas in practice, the set of anchor classes depends on the mini-batch composition. \textbf{Thus, increasing the batch size is expected to improve class coverage and make the practical objective potentially encouraging a regular simplex geometry even on large-scale datasets}. In our experiments, however, the batch size was limited by GPU memory, and we leave a systematic investigation of this regime to future work. To further analyze this regime, simply increasing the batch size may still be insufficient for severe long-tailed datasets. For example, ImageNet-LT contains 1,000 classes, and even with a batch size of 1,024, few-shot classes may appear only rarely as anchors. One possible extension is therefore to augment the anchor set with class prototypes of few-shot classes. In particular, the same class prototypes introduced as complementary contrastive pairs could serve as anchors for few-shot classes, increasing their anchor coverage without requiring substantially larger mini-batches. We leave the theoretical and empirical study of such prototype-based anchor augmentation for future work.
\paragraph{ImageNet-LT and iNaturalist.}
Our setting mainly follows \citet{zhu2022balanced}. When combining with other methods, we follow their settings and add structural balancing. For ImageNet-LT, we train the model for 90 epochs with learning rate 0.1, and weight decay \(5 \!\times\! 10^{-4}\). For iNaturalist, we train for 100 epochs with learning rate 0.2, and weight decay \(1 \!\times\! 10^{-4}\). For both datasets, we use a batch size of 256, cosine scheduling for learning rate, and set \(\lambda=1.0\) and \(\mu=0.35\). Class-complement \citep{zhu2022balanced} is used during training to incorporate class centers as additional contrastive pairs, allowing all classes to appear in every mini-batch. For SCL, class-complement is not used as \citet{zhu2022balanced} reported performance degradation when used. SimCLR also does not use class-complement as it is a self-supervised method. The temperature is set to \(\tau=0.07\). All models are trained using SGD with momentum 0.9, using four RTX 4090 GPUs.
\paragraph{CIFAR-100-LT.}
For CIFAR-100, we train the model for 200 epochs with learning rate warmed up to 0.15 within the first 5 epochs, decaying at 160 and 180 with a step size 0.1, and weight decay \(5 \times 10^{-4}\). We set the temperature to \(\tau=0.1\), and set \(\lambda=2.0\) and \(\mu=0.6\). All models are trained using SGD with momentum 0.9, using one RTX 4090 GPU.
\subsection{Transfer Learning Evaluation protocol}\label{Appendix-eval-protocol}
\paragraph{Linear evaluation}
The linear evaluation protocol follows \citet{chen2020simple, oh2024effectiveness}. Specifically, we divide the training dataset into a train set and a validation set as shown in Table~\ref{table:dataset-info} and tune the regularization parameter by minimizing the L2-regularized cross-entropy objective using L-BFGS. Images are only resized and \(224\times224\) center-cropped. With the best parameter, we retrain the linear classifier using both training and validation splits and then report the test accuracy. 
\paragraph{Fine-tuning protocol}
We divide the training dataset into a train set and a validation set as in linear evaluation. Training images are resized and randomly cropped to \(224 \times 224\), and then randomly horizontally flipped. We perform grid search over the number of epochs, learning rate, and weight decay using the validation split with SGD with momentum \(0.9\). With the best hyperparameters, we retrain using both training and validation splits and then report the model accuracy on the test set. 
\begin{table}[h]
\caption{Dataset information}
\label{table:dataset-info}
\centering
\begin{tabular}{@{}l|ccccc@{}}
\toprule
Dataset & \# of classes & Training & Validation & Test & Metric \\ \midrule
CIFAR-10 & 10 & 45000 & 5000 & 10000 & Top-1 accuracy \\
CIFAR-100 & 100 & 45000 & 5000 & 10000 & Top-1 accuracy \\
MIT67 & 67 & 4690 & 670 & 1340 & Top-1 accuracy \\
DTD (split1) & 47 & 1880 & 1880 & 1880 & Top-1 accuracy \\
CUB200 & 200 & 4994 & 1000 & 5794 & Top-1 accuracy \\
Food & 101 & 68175 & 7575 & 25250 & Top-1 accuracy \\
Cars & 196 & 6494 & 1650 & 8041 & Top-1 accuracy \\
Pets & 37 & 2940 & 740 & 3669 & Mean per-class accuracy \\
Caltech101 & 101 & 2525 & 505 & 5647 & Mean per-class accuracy \\
Flowers & 102 & 1020 & 1020 & 6149 & Mean per-class accuracy \\
Aircraft & 100 & 3334 & 3333 & 3333 & Mean per-class accuracy \\ \bottomrule
\end{tabular}
\end{table}
\subsection{Toy Experiment}\label{Appendix-toy-exp}
The detailed settings of the toy dataset used in Section~\ref{method-icv} follow \citet{oh2024effectiveness}. The dataset consists of five classes, each following a Gaussian distribution with orthogonal mean vectors and a shared isotropic covariance matrix with a scale of 0.15. The mean vectors are obtained from the left singular vectors of a random matrix sampled from a standard Gaussian. The data dimension is set to 6, and the class cardinalities are \(500,250,100,50,25\). The first class is treated as the many class, the second class as the medium class, and the remaining three classes as few classes. The test dataset consists of 500 samples, with 100 samples per class.

We use a two-layer MLP backbone with 32 hidden dimensions, followed by a classifier. For the cross-entropy baseline, we train the backbone and classifier for 2000 epochs. For contrastive objectives, we first train the backbone model for 2000 epochs and then train the classifier for 400 epochs. We use SGD with learning rate \(0.1\), weight decay \(5e-4\), and temperature \(0.25\) for contrastive training.

For the intra-class variance analysis, we first compute the within-class variance and the between-class variance from the learned representations. Let \(S_W\) and \(S_B\) denote the within-class and between-class variances, respectively, and let \(S_T = S_W + S_B\) be the total variance. For each class \(y\), we compute the class-wise intra-class variance and normalize it using \(S_T \). We then report the average normalized intra-class variance for each group. This normalization allows us to compare how much variance each class retains relative to the overall representation variance.

\subsection{More details and standard deviations}\label{Appendix-statistics}
\subsubsection{FBCL on ImageNet-LT and iNaturalist}\label{Appendix-statistics-fbcl_imagenet_inat}
Here, statistics and iNaturalist results for Table~\ref{table:contrastive_comparison} are provided.
\begin{table}[h]
\centering
\caption{Statistics over three runs, on contrastive objectives on ImageNet-LT and iNaturalist.}
\resizebox{\linewidth}{!}{%
\begin{tabular}{@{}l|cccc|cccc@{}}
\toprule
\multirow{2}{*}{\textbf{Method}} & \multicolumn{4}{c|}{ImageNet-LT} & \multicolumn{4}{c}{iNaturalist} \\ \cmidrule(l){2-9} 
 & Many & Medium & Few & All & Many & Medium & Few & All \\ \midrule
SimCLR \citep{chen2020simple} & 65.0 {\small$\pm$0.1} & 52.6 {\small$\pm$0.2} & 36.2 {\small$\pm$0.1} & 55.2 {\small$\pm$0.2} & 68.1 {\small$\pm$0.3} & 70.7 {\small$\pm$0.2} & 69.9 {\small$\pm$0.3} & 70.1 {\small$\pm$0.2}\\
SCL \citep{khosla2020supervised} & 65.8 {\small$\pm$0.4} & 53.3 {\small$\pm$0.4}& 36.1 {\small$\pm$0.5} & 55.8 {\small$\pm$0.4} & 68.4 {\small$\pm$0.3} & 71.1 {\small$\pm$0.4}& 70.2 {\small$\pm$0.5}& 70.4 {\small$\pm$0.3}\\
BCL \citep{zhu2022balanced} & 65.8 {\small$\pm$0.2}& 53.7 {\small$\pm$0.2}& 36.2 {\small$\pm$0.1}& 56.0 {\small$\pm$0.1}& 70.0 {\small$\pm$0.7} & 72.0 {\small$\pm$0.4} & 71.1 {\small$\pm$0.2}& 71.3 {\small$\pm$0.6}\\
\rowcolor{gray!13} FBCL (Ours) & 66.4{\small$\pm$0.2} & 54.1 {\small$\pm$0.2}& 37.5 {\small$\pm$0.4}& 56.6 {\small$\pm$0.1}& 70.4 {\small$\pm$0.3}& 72.8 {\small$\pm$0.2}& 71.9 {\small$\pm$0.3}& 72.2 {\small$\pm$0.1}\\ \bottomrule
\end{tabular}}
\end{table}
\subsubsection{Intra-class variance and accuracy}\label{Appendix-statistics-icv}
Here, statistics for Table~\ref{table:icv}, including intra-class variance and accuracy on synthetic data, ImageNet-LT and CIFAR-100-LT (imbalance ratio=100), are provided in Tables~\ref{Appendix-statistics-table:icv_synthetic},~\ref{Appendix-statistics-table:icv_cifar}, and~\ref{Appendix-statistics-table:icv_imagenet}.
\begin{table}[h!]
\centering
\caption{Statistics over five runs on intra-class variance and Top-1 accuracy on synthetic data.}\label{Appendix-statistics-table:icv_synthetic}
\resizebox{0.7\linewidth}{!}{%
\begin{tabular}{l|ccc|c}
\toprule
\textbf{Method} & Many & Medium & Few & Acc \\
\midrule
CE & $0.459${\fontsize{7}{8}\selectfont\,$\pm$\,0.02} & $0.533${\fontsize{7}{8}\selectfont\,$\pm$\,0.03} & $0.721${\fontsize{7}{8}\selectfont\,$\pm$\,0.03} & $73.8${\fontsize{7}{8}\selectfont\,$\pm$\,0.16} \\ \midrule
SCL  & $0.272${\fontsize{7}{8}\selectfont\,$\pm$\,0.01} & $0.346${\fontsize{7}{8}\selectfont\,$\pm$\,0.02} & $0.575${\fontsize{7}{8}\selectfont\,$\pm$\,0.05} & $74.1${\fontsize{7}{8}\selectfont\,$\pm$\,2.74} \\
BCL  & $0.250${\fontsize{7}{8}\selectfont\,$\pm$\,0.02} & $0.385${\fontsize{7}{8}\selectfont\,$\pm$\,0.03} & $0.749${\fontsize{7}{8}\selectfont\,$\pm$\,0.07} & $74.1${\fontsize{7}{8}\selectfont\,$\pm$\,2.36} \\
\rowcolor{gray!13} FBCL (Ours) & $0.409${\fontsize{7}{8}\selectfont\,$\pm$\,0.02} & $0.453${\fontsize{7}{8}\selectfont\,$\pm$\,0.02} & $0.438${\fontsize{7}{8}\selectfont\,$\pm$\,0.03} & $75.5${\fontsize{7}{8}\selectfont\,$\pm$\,3.10} \\
\bottomrule
\end{tabular}}
\end{table}
\begin{table}[h!]
\centering
\caption{Statistics over five runs of the ablation study over balancing components on CIFAR-100-LT (IR=100).}\label{Appendix-statistics-table:icv_cifar}
\renewcommand{\arraystretch}{1.39}
\resizebox{0.7\linewidth}{!}{%
\begin{tabular}{lccc|ccc|c}
\toprule
\multirow{2}{*}{\textbf{Method}} & \multirow{2}{*}{\textbf{PB}} & \multirow{2}{*}{\textbf{NB}} & \multirow{2}{*}{\textbf{SB}} & \multicolumn{3}{c|}{\textbf{Intra-class variance}} & \multirow{2}{*}{\textbf{Acc.}} \\
\cmidrule(lr){5-7}
& & & & Many & Medium & Few & \\ \midrule
PB (SCL) & \cmark & \xmark & \xmark & 0.211$\pm$0.001 & 0.342$\pm$0.003 & 0.420$\pm$0.001 & 51.5$\pm$0.18 \\
PB+NB (BCL) & \cmark & \cmark & \xmark & 0.210$\pm$0.005 & 0.382$\pm$0.004 & 0.465$\pm$0.011 & 51.7$\pm$0.45 \\ \midrule
PB+SB & \cmark & \xmark & \cmark & 0.340$\pm$0.004 & 0.387$\pm$0.002 & 0.467$\pm$0.003 & 51.5$\pm$0.25 \\
NB+SB & \xmark & \cmark & \cmark & 0.660$\pm$0.009 & 0.620$\pm$0.006 & 0.601$\pm$0.009 & 51.4$\pm$0.16 \\
\rowcolor{gray!13} PB+NB+SB (Ours) & \cmark & \cmark & \cmark & 0.267$\pm$0.003 & 0.320$\pm$0.006 & 0.379$\pm$0.003 & 52.1$\pm$0.15 \\
\bottomrule
\end{tabular}}
\end{table}
\begin{table}[h!]
\centering
\caption{Ablation study of positive balancing (PB), negative balancing (NB), and structural balancing (SB) on ImageNet-LT. Statistics over three runs are given.}\label{Appendix-statistics-table:icv_imagenet}
\renewcommand{\arraystretch}{1.25}
\resizebox{\linewidth}{!}{%
\begin{tabular}{lccc|ccc|cccc}
\toprule
\multirow{2}{*}{\textbf{Method}}
& \multirow{2}{*}{\textbf{PB}}
& \multirow{2}{*}{\textbf{NB}}
& \multirow{2}{*}{\textbf{SB}}
& \multicolumn{3}{c|}{\textbf{Intra-class variance}}
& \multicolumn{4}{c}{\textbf{Acc.}} \\
\cmidrule(lr){5-7}
\cmidrule(lr){8-11}
& & & &
Many & Medium & Few &
Many & Medium & Few & All \\ \midrule
PB (SCL)
& \cmark & \xmark & \xmark
& 0.189$\pm$0.001
& 0.298$\pm$0.003
& 0.357$\pm$0.001
& 65.76$\pm$0.38
& 53.33$\pm$0.40
& 36.13$\pm$0.46
& 55.78$\pm$0.41 \\
PB+NB (BCL)
& \cmark & \cmark & \xmark
& 0.167$\pm$0.021
& 0.293$\pm$0.003
& 0.362$\pm$0.006
& 65.83$\pm$0.15
& 53.74$\pm$0.17
& 36.23$\pm$0.06
& 56.03$\pm$0.07 \\ \midrule
PB+SB
& \cmark & \xmark & \cmark
& 0.368$\pm$0.004
& 0.502$\pm$0.003
& 0.509$\pm$0.003
& 66.07$\pm$0.12
& 53.40$\pm$0.30
& 36.22$\pm$0.36
& 55.92$\pm$0.19 \\
NB+SB
& \xmark & \cmark & \cmark
& 0.597$\pm$0.002
& 0.623$\pm$0.002
& 0.602$\pm$0.001
& 65.20$\pm$0.20
& 53.03$\pm$0.29
& 35.47$\pm$0.32
& 55.35$\pm$0.17 \\
\rowcolor{gray!13}
PB+NB+SB (Ours)
& \cmark & \cmark & \cmark
& 0.196$\pm$0.008
& 0.281$\pm$0.010
& 0.339$\pm$0.008
& 66.44$\pm$0.15
& 54.12$\pm$0.21
& 37.50$\pm$0.40
& 56.64$\pm$0.12 \\
\bottomrule
\end{tabular}}
\end{table}
\subsubsection{Structural balancing with other methods}\label{Appendix-statistics-recent_methods}
Here, statistics for the structural balancing results in Table~\ref{table:main_results} are provided in Tables~\ref{Appendix-statistics-table:dbm} and~\ref{Appendix-statistics-table:proco}.

\begin{table}[h!]
\centering
\caption{Statistics over three runs of FBCL and BCL combined with DBM.}\label{Appendix-statistics-table:dbm}
\resizebox{1.0\linewidth}{!}{%
\begin{tabular}{@{}l|cccc|cccc@{}}
\toprule
\multirow{2}{*}{\textbf{Method}}  & \multicolumn{4}{c|}{\textbf{ImageNet-LT}} & \multicolumn{4}{c}{\textbf{iNaturalist}} \\ \cmidrule(l){2-9} 
& Many & Medium & Few & All & Many & Medium & Few & All \\ \midrule
DBM+BCL & 64.4 $\pm$ 0.1 & 54.3 $\pm$ 0.2 & 41.4 $\pm$ 0.3 & 56.5 $\pm$ 0.2 & 66.3 $\pm$ 0.3 & 71.8 $\pm$ 0.5 & 73.3 $\pm$ 0.3 & 71.8 $\pm$ 0.1
\\ \midrule
\rowcolor{gray!13}
DBM+FBCL (Ours) & 64.9 $\pm$ 0.2 & 54.5 $\pm$0.2 & 42.3 $\pm$ 0.3 & 56.9 $\pm$ 0.2 & 66.5 $\pm$ 0.2 & 72.4 $\pm$ 0.4 & 73.8 $\pm$ 0.2 & 72.3 $\pm$ 0.2
\\ \bottomrule
\end{tabular}}
\end{table}
\begin{table}[h!]
\centering
\caption{Statistics over three runs of structural balancing combined with ProCo on ImageNet-LT and iNaturalist.}\label{Appendix-statistics-table:proco}
\resizebox{1.0\linewidth}{!}{%
\begin{tabular}{@{}l|cccc|cccc@{}}
\toprule
\multirow{2}{*}{\textbf{Method}}     & \multicolumn{4}{c|}{\textbf{ImageNet-LT}}                     & \multicolumn{4}{c}{\textbf{iNaturalist}} \\ \cmidrule(l){2-9} 
 & Many & Medium & Few & All & Many & Medium & Few & All \\ \midrule
ProCo & 66.5$\pm$0.1 & 54.8$\pm$0.5 & 38.1$\pm$0.4 & 57.1$\pm$0.2 & 70.4$\pm$0.4 & 73.1$\pm$0.1 & 71.9$\pm$0.6 & 72.3$\pm$0.2 \\ \midrule
\rowcolor{gray!13} ProCo + SB (Ours) & 66.8$\pm$0.2 & 55.2$\pm$0.1 & 38.8$\pm$0.2 & 57.4$\pm$0.1 & 70.7$\pm$0.6 & 73.4$\pm$0.4 & 72.3$\pm$0.4 & 72.7$\pm$0.5 \\ \bottomrule
\end{tabular}}
\end{table}
We additionally report results on CIFAR-100-LT, with imbalance ratios in $\{100, 50, 10\}$, for \citet{son2025difficulty} and \citet{wang2024llm} in Table~\ref{Appendix-statistics-table:dbm_cifar} and for \citet{du2024probabilistic} in Table~\ref{Appendix-statistics-table:proco_cifar}.
\begin{table}[h!]
\centering
\resizebox{0.7\linewidth}{!}{%
\begin{tabular}{@{}l|ccc@{}}
\toprule
\multirow{2}{*}{\textbf{Method}}     & \multicolumn{3}{c}{\textbf{CIFAR-100-LT}}\\ \cmidrule{2-4}
 & IR=100 & IR=50 & IR=10 \\ \midrule
BCL               & $51.7 \pm 0.5$ & $55.7 \pm 0.4$ & $63.7 \pm 0.1$ \\
FBCL              & $52.1 \pm 0.2$ \textcolor{blue}{(+0.4)} & $56.0 \pm 0.2$ \textcolor{blue}{(+0.3)} & $64.0 \pm 0.1$ \textcolor{blue}{(+0.3)} \\ \midrule
DBM + BCL         & $50.4 \pm 1.0$ & $55.6 \pm 0.1$ & $65.7 \pm 0.2$  \\
DBM + FBCL        & $51.2 \pm 0.5$ \textcolor{blue}{(+0.8)} & $55.9 \pm 0.3$ \textcolor{blue}{(+0.3)} & $65.8 \pm 0.1$ \textcolor{blue}{(+0.1)}  \\ \midrule
LLM-AutoDA + BCL  & $50.6 \pm 0.6$ & $55.5 \pm 0.2$ & $64.8 \pm 0.1$  \\
LLM-AutoDA + FBCL & $51.3 \pm 0.3$ \textcolor{blue}{(+0.7)} & $55.9 \pm 0.3$ \textcolor{blue}{(+0.4)}& $65.2 \pm 0.5$ \textcolor{blue}{(+0.4)}  \\ \bottomrule
\end{tabular}
}\caption{Statistics over five runs of FBCL or BCL with other methods on CIFAR-100-LT.}\label{Appendix-statistics-table:dbm_cifar}
\end{table}
In the official implementation of \citet{wang2024llm}, code for ImageNet-LT or iNaturalist were not given. We will add the results when the codes are available. Note that FBCL corresponds to BCL$+$SB.
\begin{table}[h!]
\centering
\resizebox{0.5\linewidth}{!}{%
\begin{tabular}{@{}lccc@{}}
\toprule
\multicolumn{1}{l|}{}                 & \multicolumn{3}{c}{CIFAR-100-LT}                                                                     \\ \midrule
\multicolumn{1}{l|}{Imbalance Factor} & 100                                   & 50                                    & 10                   \\ \midrule
\textit{Top-1 accuracy}               & \multicolumn{1}{l}{}                  & \multicolumn{1}{l}{}                  & \multicolumn{1}{l}{} \\ \midrule
\multicolumn{1}{l|}{ProCo}   & $51.9 \pm 0.2 $ & $56.2 \pm 0.2$ & $64.6\pm 0.4$     \\ 
\multicolumn{1}{l|}{ProCo + SB}       & $52.1 \pm 0.3 $ & $56.3 \pm 0.21$ & $64.7 \pm 0.1$  \\ \midrule
\textit{Few-class accuracy}           & \multicolumn{1}{l}{}                  & \multicolumn{1}{l}{}                  & \multicolumn{1}{l}{} \\ \midrule
\multicolumn{1}{l|}{ProCo}  & 32.1 $\pm$ 0.6       & 43.2 $\pm$ 0.7   & -          \\
\multicolumn{1}{l|}{ProCo + SB} & 33.9 $\pm$ 0.6       & 44.8 $\pm$ 0.2    & -    \\ \bottomrule
\end{tabular}}\caption{Statistics over five runs of structural balancing applied to ProCo \citep{du2024probabilistic} on CIFAR-100-LT dataset. Adding structural balancing improves few class accuracy.}\label{Appendix-statistics-table:proco_cifar}
\end{table}
\subsection{Cosine similarity of the representation space}
\begin{table}[!h]\caption{Cosine similarity across class centers on CIFAR-100-LT.}\label{Appendix-statistics-table:geometry_diagnosis}
\centering
\begin{subtable}[t]{0.48\linewidth}
\centering
\caption{Average cosine similarity across imbalance ratios.}\label{Appendix-statistics-table:geometry_diagnosis-across_IR}
\renewcommand{\arraystretch}{0.97}
\resizebox{\linewidth}{!}{%
\begin{tabular}{@{}l|ccc@{}}
\toprule
\multirow{2}{*}{Method} & \multicolumn{3}{c}{CIFAR-100-LT} \\ \cmidrule(l){2-4} 
 & 100 ($\downarrow$) & 50 ($\downarrow$) & 10 ($\downarrow$) \\ \midrule
BCL & 0.298$\pm$0.003 & 0.296$\pm$0.003 & 0.294$\pm$0.007 \\
FBCL & \textbf{0.233}$\pm$0.004 & \textbf{0.240}$\pm$0.004 & \textbf{0.242}$\pm$0.006 \\ \bottomrule
\end{tabular}}
\end{subtable}
\begin{subtable}[t]{0.48\linewidth}
\centering
\caption{Average cosine similarity across classes.}\label{Appendix-statistics-table:geometry_diagnosis-across_classes}
\renewcommand{\arraystretch}{0.97}
\resizebox{\linewidth}{!}{%
\begin{tabular}{@{}l|ccc@{}}
\toprule
\multirow{2}{*}{Method} & \multicolumn{3}{c}{CIFAR-100-LT (IR=100)} \\ \cmidrule(l){2-4} 
 & Many ($\downarrow$) & Medium ($\downarrow$) & Few ($\downarrow$) \\ \midrule
BCL & 0.318$\pm$0.002 & 0.249$\pm$0.008 & 0.331$\pm$0.002 \\
FBCL & \textbf{0.259}$\pm$0.002 & \textbf{0.205}$\pm$0.004 & \textbf{0.230}$\pm$0.007 \\ \bottomrule
\end{tabular}}
\end{subtable}
\end{table}
\begin{table}[!h]
\centering
\caption{Cosine similarity across class centers on ImageNet-LT.}
\resizebox{0.65\linewidth}{!}{
\begin{tabular}{@{}l|cccc@{}}
\toprule
\multirow{2}{*}{\textbf{Method}} & \multicolumn{4}{c}{\textbf{ImageNet-LT}} \\ \cmidrule(l){2-5} 
 & Many ($\downarrow$) & Medium ($\downarrow$) & Few ($\downarrow$) & All ($\downarrow$) \\ \midrule
BCL & 0.154 {\small$\pm$0.003} & 0.106 {\small$\pm$0.004} & 0.104 {\small$\pm$0.005} & 0.124 {\small$\pm$0.004} \\
FBCL & \textbf{0.137} {\small$\pm$0.001} & \textbf{0.088} {\small$\pm$0.002} & \textbf{0.079} {\small$\pm$0.005} & \textbf{0.103} {\small$\pm$0.002} \\ \bottomrule
\end{tabular}}
\end{table}
As discussed in section~\ref{sec:discussion}, the attainability of a regular simplex in practice is beyond the scope of this work, so we instead evaluate the cosine similarities between class centers to analyze the representation space in which the contrastive objective operates, i.e., the output of the contrastive head. On CIFAR-100-LT, Table~\ref{Appendix-statistics-table:geometry_diagnosis} shows that FBCL consistently yields lower average cosine similarity across different imbalance ratios and classes. Lower cosine similarity indicates greater separation between class centers. Table~\ref{Appendix-statistics-table:geometry_diagnosis-across_classes} further shows the difference particularly in few-shot classes which implies that FBCL better separates tail classes than BCL. We also provide results on the large-scale dataset, ImageNet-LT. The results show the same trend, with FBCL consistently yielding lower cosine similarity across many-, medium-, and few-shot classes, as well as overall, indicating a more separated inter-class geometry than BCL.
\subsection{Class balanced sampling and other reweighting strategies}
\label{Appendix:Class-balanced_re-weighting}

One could expect structural balancing to be similar to conventional re-weighting, since both introduce class-dependent weighting, or that class-balanced sampling could provide a similar effect by constructing batches with a more uniform class distribution. However, prior studies have shown that re-balancing through class-balanced sampling or loss re-weighting (classification losses) can impair representation quality \citep{kang2019decoupling,zhou2020bbn}. This motivates examining whether similar strategies are effective for contrastive objectives and how structural balancing differs from them.

We therefore compare structural balancing with class-balanced sampling and representative re-weighting strategies. Class-balanced sampling (\textit{CB-sampling}) constructs batches with a more uniform class distribution, but changes the original data distribution and can degrade performance \citep{kang2019decoupling}. \textit{Re-weighting} applies a weight determined by the class prior to each loss $\frac{N}{KN_y}$, while class-balanced weighting (\textit{CB-weighting}) uses the effective number of samples, $\frac{1-\beta}{1-\beta^{N_y}}$ \citep{cui2019class}, with $\beta=0.99$. (SCL with CB-weighting corresponds to \citet{zhong2022rebalanced}). \textit{RCL} \citep{de2025long} instead introduces balancing based on class priors within the logarithmic term of the contrastive loss.
\begin{table}[h]
\centering
\caption{Comparison with class-balanced sampling and other re-weighting methods. Statistics over five runs are provided.}\label{Appendix:Class-balanced_re-weighting-table}
\resizebox{0.9\linewidth}{!}{
\begin{tabular}{@{}l|cccc@{}}
\toprule
\multirow{2}{*}{\textbf{Method}} & \multicolumn{4}{c}{\textbf{CIFAR-100-LT (IR=100)}} \\ \cmidrule(l){2-5} 
 & Many & Medium & Few & Overall \\ \midrule
BCL & 66.04$\pm$0.55 & 52.57$\pm$0.49 & 33.78$\pm$1.04 & 51.65$\pm$0.45 \\
BCL + CB Sampling & 51.32$\pm$0.25 & 56.63$\pm$0.38 & 30.17$\pm$1.60 & 46.84$\pm$0.39 \\
BCL + Re-weighting & 65.51$\pm$1.01 & 52.69$\pm$0.54 & 33.56$\pm$0.98 & 51.45$\pm$0.45 \\
SCL + CB-weighting & \multicolumn{1}{l}{64.84$\pm$0.80} & \multicolumn{1}{l}{51.44$\pm$0.52} & \multicolumn{1}{l}{33.14$\pm$0.64} & \multicolumn{1}{l}{50.65$\pm$0.27} \\
BCL + CB-weighting & 64.77$\pm$0.59 & 51.46$\pm$0.85 & 33.02$\pm$0.03 & 50.59$\pm$0.49 \\
RCL & 66.05$\pm$0.29 & 52.01$\pm$0.76 & 33.97$\pm$0.38 & 51.52$\pm$0.40 \\
\rowcolor{gray!13}
BCL + SB & 66.47$\pm$0.50 & 53.28$\pm$0.42 & 35.05$\pm$0.60 & 52.14$\pm$0.15 \\ \bottomrule
\end{tabular}}
\end{table}

As shown in Table~\ref{Appendix:Class-balanced_re-weighting-table}, class-balanced sampling yields the lowest overall accuracy, and interestingly, CB-weighting performs worse than conventional re-weighting, whereas structural balancing achieves the best performance. These results show that simply applying existing re-weighting strategies to a contrastive objective is not sufficient, as they do not provide the theoretical guarantee established for structural balancing. Note that in the full-batch setting, both are same, but such a setting is not practical for large-scale datasets, and in a mini-batch setting when re-weighting does not exactly compensate for the class imbalance, the resulting objective retains dependency on class frequency. In addition, \textbf{re-weighting does not consistently improve performance as the imbalance becomes more severe}. At IR=200, BCL$+$re-weighting achieves an overall accuracy of $46.35\pm0.34$, which is lower than both BCL ($47.63\pm0.30$) and FBCL ($48.18\pm0.32$). This suggests that conventional re-weighting is not always beneficial for contrastive objectives and can even degrade performance depending on class imbalance. Although structural balancing takes a similar form, it is derived from the class frequency dependence arising from the aggregation of contrastive losses. Our analysis shows that it removes this dependence in the lower bound and recovers the optimal representation geometry. Also, while re-weighting has been reported to impair representation learning when applied to classification losses, our results show that \textit{a re-weighting mechanism derived from the structure of a contrastive objective with a theoretical guarantee can instead be beneficial}, motivating further re-weighting strategies specifically designed for contrastive learning.
\end{document}